\documentclass{article}

\usepackage[eandd,preprint]{neurips_2026}

\usepackage[utf8]{inputenc}
\usepackage[T1]{fontenc}
\usepackage{hyperref}
\usepackage{url}
\usepackage{booktabs}
\usepackage{amsfonts}
\usepackage{nicefrac}
\usepackage{microtype}
\usepackage{xcolor}
\usepackage{graphicx}
\usepackage{enumitem}
\usepackage{xspace}
\usepackage{algorithm}
\usepackage{algpseudocode}
\usepackage{tcolorbox}
\usetikzlibrary{positioning}
\usetikzlibrary{calc}
\usepackage{amsfonts}
\usepackage{amsmath}
\usepackage{amsthm}
\usepackage{enumitem}
\usetikzlibrary{positioning, arrows.meta}

\usepackage{amsmath, amssymb, amsthm}
\newtheorem{theorem}{Theorem}[section]

\newtheorem{proposition}[theorem]{Proposition}

\theoremstyle{definition}

\newcommand{\sysname}{\textsc{Isomorph}\xspace}

\usepackage{xcolor}

\title{ISOMORPH: A Supply Chain Digital Twin for\\
       Simulation, Dataset Generation, and Forecasting Benchmarks}

\author{
  Zhizhen Zhang\thanks{Authors contributed equally} \\
  University of Massachusetts Amherst \\
  \texttt{zhizhenzhang@umass.edu} \\
  \And
  Hyemin Gu\footnotemark[1] \\
  University of Massachusetts Amherst \\
  \texttt{hgu@umass.edu} \\
  \And
  Benjamin J.\ Zhang \\
  University of North Carolina \\
  \texttt{bjz@unc.edu} \\
  \And
  Daniel Elenius \\
  SRI International \\
  \texttt{daniel.elenius@sri.com} \\
  \And
  Michael Tyrrell \\
  SRI International \\
  \texttt{michael.tyrrell@sri.com} \\
  \And
  Theo J.\ Bourdais \\
  California Institute of Technology \\
  \texttt{theo.bourdais@caltech.edu} \\
  \And
  Houman Owhadi \\
  California Institute of Technology \\
  \texttt{owhadi@caltech.edu} \\
  \And
  Markos A.\ Katsoulakis\footnotemark[2]\thanks{Corresponding authors} \\
  University of Massachusetts Amherst \\
  \texttt{markos@umass.edu} \\
  \And
  Tuhin Sahai\footnotemark[2] \\
  SRI International \\
  \texttt{tuhin.sahai@sri.com} \\
}

\begin{document}
\maketitle

\begin{abstract}
Open time-series forecasting (TSF) benchmarks cover
retail, energy, weather, and traffic, but supply-chain
logistics remains underserved. We introduce \sysname{},
the first public digital twin of a multi-echelon
logistics network with fully interpretable,
user-configurable parameters and modular topology,
demand process, and control rules. The simulator
advances a directed routing graph in discrete time:
demand arrives at the destination, is served from stock
or recorded as backlog, and triggers replenishment
through the network. 
By construction, the state vector
tracks per-node on-hand inventory together with
outstanding orders, in-transit shipments, and a smoothed
demand estimate across the network, so the dynamics
close as a Markov chain on a high-dimensional but
tractable state space whose transition kernel acts
linearly on the empirical distribution of the state. 
The released data reproduces the
bullwhip effect---the
standard empirical signature of supply-chain
dynamics---at empirically consistent
magnitudes, and three conservation laws structurally encoded in the Markov chain serve as verification tools
when users extend the simulator with their own control
rules. We release datasets at two catalogue scales ($C{=}50$
and $C{=}200$) with six scenario sweeps producing 30
additional rollouts and 20 Latin-hypercube perturbations,
exhibiting dynamics largely absent from fixed TSF
benchmarks: variance amplification, cascading
bottlenecks, regime shifts, and cross-channel coupling
through shared macro shocks.
Zero-shot evaluation of four foundation models (Chronos,
Moirai, TimesFM, Lag-Llama) shows MASE values exceeding
public GIFT-Eval references at low-to-moderate horizons,
supporting incorporation into existing benchmark
datasets. The same pairing produces forecast confidence bands via
Latin-hypercube perturbation of demand-side knobs, a
form of forward UQ from parameter uncertainty
unavailable on standard TSF datasets, demonstrating
that foundation models can serve as fast surrogates for
the digital twin's forward UQ. 
Code (MIT) is released at 
\url{https://github.com/tuhinsahai/ISOMORPH}. An interactive demo is available at
\url{https://huggingface.co/spaces/HyeminGu/ISOMORPH-demo},
where users can stress-test the 13-node U.S. network
by selecting from three disruption scenarios and
observing the resulting dynamics in real time. 

\end{abstract}

\paragraph{Acknowledgements.}
This material is based upon work of the authors supported by the Defense Advanced Research Projects Agency (DARPA) under Agreement No. HR00112590112. Approved for public release; distribution is unlimited.

\tableofcontents

\section{Introduction}
\label{sec:intro}
 
Public time-series forecasting (TSF) benchmarks cover retail, energy,
weather, and traffic fairly extensively. However, supply chain logistics, one of the
most critical applied forecasting domains, remains underserved, with a complete lack of datasets for training and benchmarks. The closest existing resources address the domain in a limited manner. SupplyGraph
\citep{wasi2024supplygraph} provides eight months of per-product
demand with a static product relationship graph from a single
company. However, it has no logistics network characteristics that include inventory state, transportation edges, or capacities. Deepbullwhip \citep{arief2026deepbullwhip} simulates a four-stage serial chain to study how demand swings amplify upstream,
but releases no supply chain forecasting dataset. M5 \citep{m5comp} aggregates retail demand hierarchically by store and category. It records only what customers bought, not the warehouses, shipments, or inventory behind those sales. Across all
three, the same piece is missing: demand series paired with the
topology, capacities, and control decisions that determine whether
the network can deliver.

Therefore, to close this gap, one needs a different kind
of resource. A real-world dataset, even from a willing
operator, observes one realization of a logistics
network. Capacities, routing rules, and control decisions
sit implicitly inside the recorded series; they cannot be
inspected, modified, or replayed under alternative
settings. A simulator-based dataset makes those rules
explicit: the network, the demand process, and the
control policy are part of the deliverable, and the
dataset is one realization of a configuration that other
users can reproduce, rescale, or modify.
Adjacent domains have used this perspective.
BuildingsBench \citep{emami2023buildingsbench} pairs an
EnergyPlus simulator with a curated load-forecasting
benchmark for buildings, the most closely related work.
No equivalent exists for supply-chain logistics.

This paper introduces \sysname, a digital twin of a multi echelon logistics network. The user supplies a directed graph of source, intermediate, and destination nodes with edge transit times and per edge container capacities; the simulator runs the network
forward at a user chosen time resolution, with shipments routed by Dijkstra, packed greedily into containers, and replenished under an $(s,S)$ policy. The simulator's parameters fall into two groups: \emph{structural parameters} (topology, catalogue size $C$,
time resolution, and horizon), which fix what dataset is being generated; and \emph{scenario knobs} (demand structure, inventory, edge transport), which
govern how the network operates within that structure. We release a family of datasets generated from one set
of structural parameters---a 13-node U.S.\ network at
daily resolution (three sources, nine warehouses across
five tiers, and a destination served by two last-mile
edges) at two cardinalities ($C{=}50$ and
$C{=}200$)---together with rollouts from perturbed
scenario knobs.
The simulator is fully open-source and modular: any user can swap the topology, time resolution, demand process, or control rules to generate releases tailored to their own setting. 
Each release includes the full state trajectory of the network, covering per item demand and on hand inventory as well as per edge shipments and utilization, so quantities like per edge saturation events and destination fill rate drops can be recovered directly from the released data without rerunning the simulator.

A central challenge in simulating supply-chain dynamics
is that the system mixes continuously evolving variables
(inventory levels, smoothed demand) with abrupt regime
changes (demand shocks, capacity bottlenecks,
replenishment triggers), producing nonlinear,
history-dependent behavior. Without the right
state-space formulation, the Markov property fails and
conservation guarantees become unverifiable. Our
approach is to formulate the network as a Markov chain
on the right state space: unlike existing resources that
track only per-node demand or
inventory~\citep{wasi2024supplygraph, naik2025bullode},
the state vector carries outstanding orders, in-transit
shipments, and a smoothed demand estimate alongside
per-node inventory. The resulting state dimension is at least 
$C(3|\mathcal{N}|+1)$ (for $|\mathcal{N}|$ nodes and
catalogue size $C$). This closes the dynamics to be Markovian with a
transition kernel that acts linearly on the state
distribution, and structurally encodes three conservation
laws---the discrete realization of coupled transport and
queueing equations---directly into the transition
function (\S\ref{sec:state-space-consequences}),
enabling tractable simulation over arbitrary horizons,
topologies, catalogue sizes, and scenario knobs. The full ISOMORPH release provides datasets at catalogue
sizes $C=50$ and $C=200$, corresponding to more than
$2{,}000$ and $8{,}000$  state dimensions,
respectively.

To demonstrate \sysname's use as a forecasting benchmark,
we evaluate four pretrained TSF foundation models
(Chronos, Moirai, TimesFM, Lag-Llama) in a zero-shot
setting. At moderate prediction horizons ($h \ge 14$),
MASE values exceed public GIFT-Eval reference values
(Table~\ref{tab:foundation}), suggesting that the
logistics-domain patterns in our data--such as those
in Figure~\ref{fig:family-compact}--pose challenges
not covered by current benchmarks. Incorporating
\sysname into benchmark datasets is expected to
improve foundation models' generalization to this
underrepresented domain.

\paragraph{Contributions}
\begin{enumerate}[leftmargin=1.4em,itemsep=2pt,topsep=2pt]

  \item \textbf{Simulator.} We introduce \sysname{} (\S\ref{sec:twin}), the first
public digital twin of a multi-echelon logistics network
designed for TSF benchmarking, with every parameter--- demand, inventory, transport, routing, and control --- physically interpretable and user-configurable. The simulator is open-source and modular: topology, time resolution, demand process, replenishment policy, and routing rule are swappable without modifying the simulator code. Every parameter is documented with defaults in \S\ref{app:params}. The simulator closely captures the day-to-day operational logic of large supply chain entities.

  \item \textbf{Empirical and structural validation.} We validate the released data on two complementary axes. \emph{Empirically}, the data reproduce the bullwhip effect at amplification levels within the cross-industry distribution reported by \citet{cachon2007search}. \emph{Structurally}, three pathwise conservation laws -- per-node mass, global mass, and backlog conservation (Proposition~\ref{prop:conservation}) -- are encoded in the transition function by the right state space design. When users add their own control rules to the simulator, these conservation laws---or adapted versions---can verify fidelity of the modified implementation.

  \item \textbf{Released datasets with complex supply-chain dynamics.} We release two baseline datasets generated by \sysname{} (\S\ref{sec:datasets}) at cardinalities $C{=}50$ and $C{=}200$ ($T{=}52{,}560$ time units, chronological $70/15/15$ splits, with loaders)  on a
13-node U.S. network, together with 30 scenario rollouts and 20
Latin-hypercube perturbations for forward UQ. Each release exhibits coupled dynamics that fixed TSF benchmarks lack: variance amplification along the supply chain (bullwhip), cascading bottlenecks when demand spikes saturate last-mile capacity, long-horizon regime shifts in demand, and cross-channel coupling through a shared macro shock $G(t)$ that simultaneously lifts every item's demand and propagates across the network.
   The full network state trajectory is released alongside, so each phenomenon is directly observable from the public files.

   \item \textbf{Forward UQ on supply-chain dynamics.} The
  pairing of \sysname{} with zero-shot foundation models
  enables forward uncertainty quantification (UQ) under
  perturbations that standard TSF datasets cannot expose:
  \emph{parameter uncertainty} (numerical values of
  scenario knobs such as demand-shock intensities and
  drift coefficients) and \emph{model uncertainty}
  (structural choices such as distributional families
  and replenishment policies). We perform forward UQ
  via a Latin-hypercube design over three demand-side
  knobs (\S\ref{sec:uq},
  Figure~\ref{fig:uq-envelope}), producing forecast
  envelopes (confidence bands) that allow foundation
  models to be evaluated on their ability to generalize
  under perturbations in the dynamics' parameters.

\end{enumerate}

\section{Related work}
\label{sec:related}

\subsection{Time-series forecasting benchmarks}
\label{sec:related-tsf}

Standard TSF suites collect real-world series across multiple domains such as
electricity, weather, retail, and traffic. Common ones include
\textbf{ETT}~\citep{zhou2021informer}, \textbf{M4} and
\textbf{M5}~\citep{m4comp,m5comp}, \textbf{Monash}~\citep{godahewa2021monash},
and the recent \textbf{GIFT-Eval}~\citep{aksu2024gifteval}. The
\textbf{Time-Series-Library}~\citep{wang2024tslib} is a code library of TSF
model implementations and evaluation tooling, not a benchmark. None of these
releases include the data-generating process: users cannot change parameters
or regenerate the data under alternative settings. Synthetic priors such as
\textbf{KernelSynth}~\citep{ansari2024chronos} and
\textbf{ForecastPFN}~\citep{dooley2023forecastpfn} are tunable, but their
parameters describe abstract mathematical models rather than physically
interpretable quantities. \sysname{} differs from both: it is synthetic, but
every parameter corresponds to a physical operational quantity in a
multi-echelon logistics network.

\subsection{Supply-chain simulators and datasets}
\label{sec:related-sc}

Public supply-chain forecasting resources are limited.
\textbf{SupplyGraph}~\citep{wasi2024supplygraph}, the only real-data release,
provides eight months of SKU-level demand from one FMCG company, with no
logistics topology. Open-source simulators target inventory policy and
bullwhip dynamics rather than standalone TSF benchmarking:
\textbf{deepbullwhip}~\citep{arief2026deepbullwhip} provides a serial-chain
simulator with a collection of forecasting methods evaluated on bullwhip
metrics, and \textbf{OR-Gym}~\citep{hubbs2020orgym} packages multi-echelon
inventory as RL environments. \textbf{BULL-ODE}~\citep{naik2025bullode}
forecasts coupled inventory--order--demand trajectories from a single-echelon
continuous-time testbed under three synthetic demand regimes. All three fix
the topology to a serial or single-tier chain, and none releases a TSF
benchmark in the style of GIFT-Eval or Monash. Hierarchical retail releases
such as \textbf{M5}~\citep{m5comp} have a location hierarchy but no
transport network with capacity constraints, so they cannot expose the
network-state dynamics that drive bottleneck and fill-rate behavior in
\sysname{}.

\subsection{Other digital twins as ML dataset generators}
\label{sec:related-dt}

Pairing a simulator with a forecasting benchmark has been done in other
fields. \textbf{BuildingsBench}~\citep{emami2023buildingsbench} pairs the
EnergyPlus building simulator with a benchmark for short-term electricity
load forecasting. \textbf{ClimateBench}~\citep{watsonparris2022climatebench}
trains ML models to emulate CMIP6 climate model output, which is a distinct
task from time-series forecasting. \sysname{} brings the same pairing to supply chains and extends it with zero-shot foundation-model evaluation under parameter perturbations; to our knowledge, this is the first such computational methodology for multi-echelon logistics.

\section{The \sysname digital twin}
\label{sec:twin}

\subsection{Overview}
\sysname simulates the flow of \emph{multiple item types} (a catalogue of~$C$ goods)
through \emph{a directed network} of factories, intermediate warehouses, and a
customer-facing destination, advancing the state of every location and link
forward in \emph{discrete time} at user-prescribed time resolution
(day/hour/minute).
In a single rollout, random customer demand arrives at the
destination each step, is served from available stock, and
triggers replenishment through the network; the cycle repeats.
At the end of each step~$t$ the simulator collects the state
of the entire network---stock levels at each location for each
of the~$C$ items, unfilled orders, in-transit shipments, and a
smoothed demand estimate---into a single vector~$\xi_t$
(formal definition in \S\ref{app:formal},
Eq.~\ref{eq:state-vector}).

The network is a user-supplied directed graph
$\mathcal{G} = (\mathcal{N}, \mathcal{E})$ whose locations are
embedded in two-dimensional geography (e.g.\ U.S.\ cities);
however, the metric on this graph is defined by user-set
integer transit times~$\tau_e$ on each link, not by geographic
distance.
Locations split into three roles: factories
$\mathcal{N}_{\mathrm{src}}$ with unbounded upstream supply,
intermediate warehouses $\mathcal{N}_{\mathrm{int}}$, and a
customer-facing destination~$d^\star$.
Each directed link $e = (u,v)$ carries, in addition
to~$\tau_e$, a per-container volume~$V_e$ and a per-step
container count~$K_e$, giving volume capacity $K_e V_e$ per
step.
Dijkstra's algorithm routes shipments over these transit times,
so the user controls both the topology and the effective
distances by choosing which locations to connect and what
$\tau_e$ to assign
(Figure~\ref{fig:pgm}(a); \S\ref{app:state-space};
released values in Appendix~\ref{app:params:edges},
Table~\ref{tab:edge-params}).

The dynamics is modeled as a \emph{Markov chain}: the sequence
of state variables $\{\xi_t\}$ satisfies
$\xi_{t+1} = \Psi(\xi_t, y_t, L_t)$,
where~$\Psi$ is a deterministic transition function driven by
two random inputs---exogenous Poisson customer demand~$y_t$
and Gaussian factory lead times~$L_t$, the latter drawn
conditioned on~$\xi_t$ at source locations whose stock falls
below a reorder threshold. 
The demand~$y_t$ is generated from a five-component intensity
combining seasonality, drift, bursts, and a shared macro shock
(\S\ref{app:demand}). Figure~\ref{fig:pgm}(b) shows the Markov chain unrolled
over three time slices.
The physical rules that constitute~$\Psi$ are organized into
two groups: replenishment and service rules
(\S\ref{app:replenish}), which govern when each
location reorders stock and how the destination serves or
backlogs demand; and inter-warehouse flows
(\S\ref{app:flows}), which specify source arrivals,
Dijkstra-based dispatch, and inter-warehouse pulls, see Figure~\ref{fig:psi-flow}.
The state space, the random inputs, the transition map~$\Psi$,
and the resulting transition kernel are specified in
\S\ref{app:formal}
(Eqs.~\ref{eq:one-step-update}--\ref{eq:kernel}).

\begin{figure}[t]
\centering
\begin{minipage}[c]{0.36\linewidth}
\centering
\IfFileExists{figures/network_map.pdf}
  {\includegraphics[width=\linewidth]{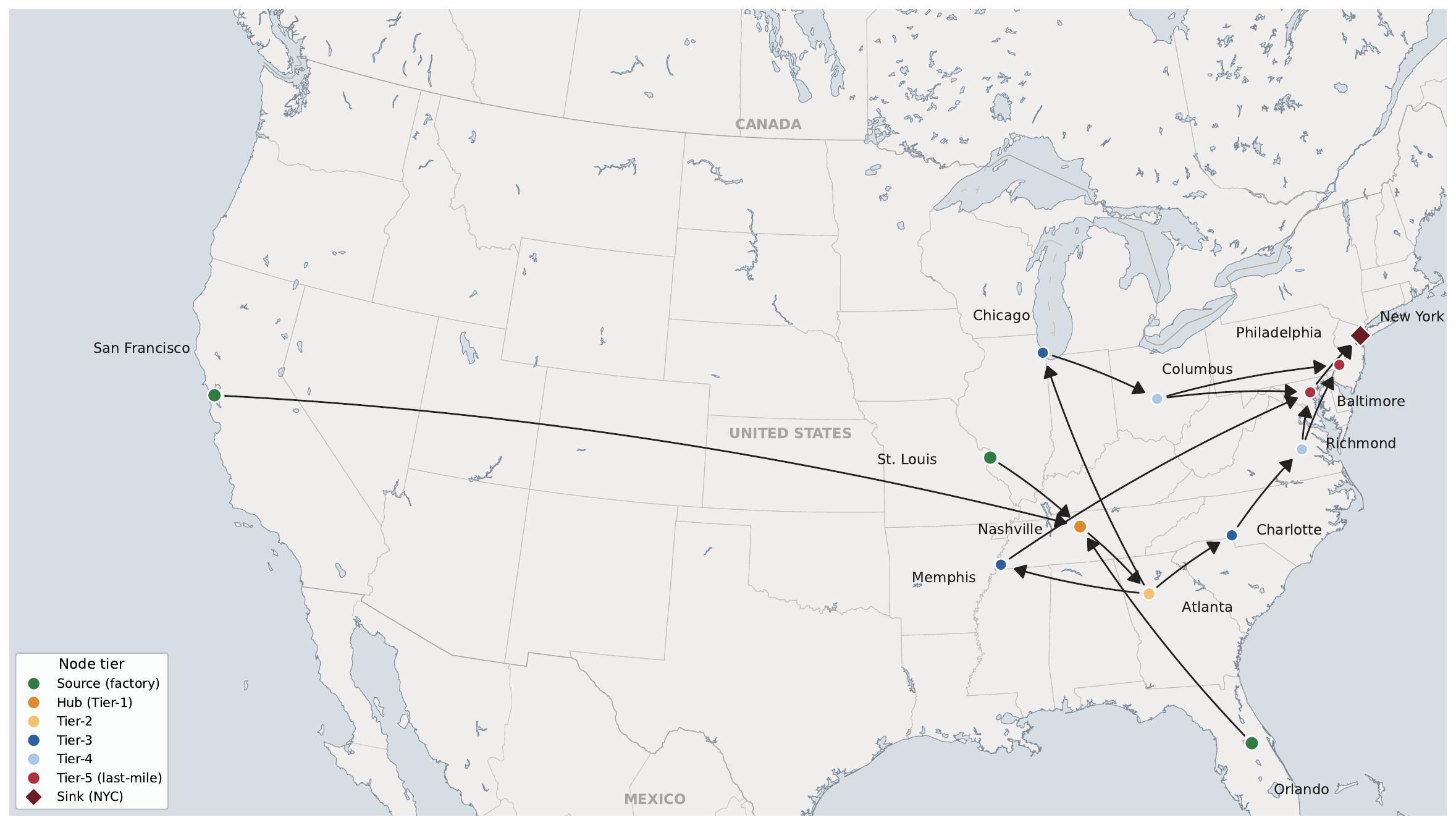}}
  {\fbox{\parbox[c][3.6cm][c]{0.95\linewidth}{\centering\itshape\small
   [network\_map.pdf placeholder:\\3 sources $\to$ Nashville $\to$ Atlanta\\
   $\to$ 5-tier graph $\to$ Phila/Balt $\to$ NYC]}}}\\[2pt]
{\small (a) Released network}
\end{minipage}\hfill
\begin{minipage}[c]{0.62\linewidth}
\centering
\resizebox{\linewidth}{!}{
\begin{tikzpicture}[
  >={Stealth[length=2mm]},
  font=\small,
  rv/.style={circle, draw=violet!60!black, fill=violet!20, line width=0.4pt,
             minimum size=0.65cm, inner sep=0pt},
  det/.style={circle, draw=orange!70!black, fill=orange!30, line width=0.4pt,
              minimum size=0.65cm, inner sep=0pt},
  src/.style={circle, draw=black!50, fill=gray!30, line width=0.4pt,
              minimum size=0.5cm, inner sep=0pt},
  hub/.style={circle, draw=blue!60!black, fill=blue!20, line width=0.4pt,
              minimum size=0.5cm, inner sep=0pt},
  ware/.style={circle, draw=blue!60!black, fill=blue!20, line width=0.4pt,
               minimum size=0.5cm, inner sep=0pt},
  dest/.style={circle, draw=blue!80!black, fill=blue!50, line width=0.4pt,
               minimum size=0.55cm, inner sep=0pt},
  edge/.style={->, line width=0.4pt, color=black!60},
  inputedge/.style={->, line width=0.4pt, color=violet!70!black},
  trans/.style={->, line width=0.7pt, color=black!70},
]

\draw[draw=blue!50!black, dashed, rounded corners=4pt, line width=0.4pt]
  (-4.7, -0.8) rectangle (-1.3, 2.2);
\draw[draw=blue!50!black, dashed, rounded corners=4pt, line width=0.4pt]
  ( 0.1, -0.8) rectangle ( 3.5, 2.2);
\draw[draw=blue!50!black, dashed, rounded corners=4pt, line width=0.4pt]
  ( 4.9, -0.8) rectangle ( 8.3, 2.2);

\node[font=\footnotesize, color=blue!60!black, anchor=north west]
  at (-4.6, 2.1) {\textit{$\xi(t-1)$}};
\node[font=\footnotesize, color=blue!60!black, anchor=north west]
  at ( 0.2, 2.1) {\textit{$\xi(t)$}};
\node[font=\footnotesize, color=blue!60!black, anchor=north west]
  at ( 5.0, 2.1) {\textit{$\xi(t+1)$}};

\node[font=\small, color=black!60] at (-3.0, 4.6) {\textit{Step $t-1$}};
\node[font=\small, color=black!60] at ( 1.8, 4.6) {\textit{Step $t$}};
\node[font=\small, color=black!60] at ( 6.6, 4.6) {\textit{Step $t+1$}};

\node[det] (lam0) at (-3.9, 3.6) {$\lambda$};
\node[rv]  (y0)   at (-3.0, 3.6) {$y$};
\node[rv]  (l0)   at (-2.1, 3.6) {$L$};
\draw[edge] (lam0) -- (y0);

\node[det] (lam1) at ( 0.9, 3.6) {$\lambda$};
\node[rv]  (y1)   at ( 1.8, 3.6) {$y$};
\node[rv]  (l1)   at ( 2.7, 3.6) {$L$};
\draw[edge] (lam1) -- (y1);

\node[det] (lam2) at ( 5.7, 3.6) {$\lambda$};
\node[rv]  (y2)   at ( 6.6, 3.6) {$y$};
\node[rv]  (l2)   at ( 7.5, 3.6) {$L$};
\draw[edge] (lam2) -- (y2);


\node[src]  (s0) at (-4.2, 1.5) {};
\node[hub]  (h0) at (-3.4, 1.5) {};
\node[ware] (w0) at (-2.6, 1.5) {};
\node[dest] (d0) at (-2.0, 0.2) {};
\node[font=\scriptsize, color=black!70] at (-4.2, 1.0) {src};
\node[font=\scriptsize, color=black!70] at (-3.4, 1.0) {hub};
\node[font=\scriptsize, color=black!70] at (-2.6, 1.0) {w};
\node[font=\scriptsize, color=black!70] at (-2.0,-0.3) {$d^\star$};
\draw[edge] (s0) -- (h0);
\draw[edge] (h0) -- (w0);
\draw[edge] (w0) -- (d0);

\node[src]  (s1) at ( 0.6, 1.5) {};
\node[hub]  (h1) at ( 1.4, 1.5) {};
\node[ware] (w1) at ( 2.2, 1.5) {};
\node[dest] (d1) at ( 2.8, 0.2) {};
\node[font=\scriptsize, color=black!70] at ( 0.6, 1.0) {src};
\node[font=\scriptsize, color=black!70] at ( 1.4, 1.0) {hub};
\node[font=\scriptsize, color=black!70] at ( 2.2, 1.0) {w};
\node[font=\scriptsize, color=black!70] at ( 2.8,-0.3) {$d^\star$};
\draw[edge] (s1) -- (h1);
\draw[edge] (h1) -- (w1);
\draw[edge] (w1) -- (d1);

\node[src]  (s2) at ( 5.4, 1.5) {};
\node[hub]  (h2) at ( 6.2, 1.5) {};
\node[ware] (w2) at ( 7.0, 1.5) {};
\node[dest] (d2) at ( 7.6, 0.2) {};
\node[font=\scriptsize, color=black!70] at ( 5.4, 1.0) {src};
\node[font=\scriptsize, color=black!70] at ( 6.2, 1.0) {hub};
\node[font=\scriptsize, color=black!70] at ( 7.0, 1.0) {w};
\node[font=\scriptsize, color=black!70] at ( 7.6,-0.3) {$d^\star$};
\draw[edge] (s2) -- (h2);
\draw[edge] (h2) -- (w2);
\draw[edge] (w2) -- (d2);

\draw[trans] (-1.3, 0.7) -- (0.1, 0.7)
  node[midway, above, font=\footnotesize] {$\Psi$};
\draw[trans] ( 3.5, 0.7) -- (4.9, 0.7)
  node[midway, above, font=\footnotesize] {$\Psi$};

\coordinate (target1) at (0.3, 2.0);
\coordinate (target2) at (5.1, 2.0);

\draw[inputedge] (y0) to[bend right=8] (target1);
\draw[inputedge] (l0) to[bend right=8] (target1);
\draw[inputedge] (y1) to[bend right=8] (target2);
\draw[inputedge] (l1) to[bend right=8] (target2);

\begin{scope}[shift={(-3.5, -1.8)}]
  \node[rv] at (0, 0) {};
  \node[font=\footnotesize, anchor=west, color=black!70] at (0.35, 0)
    {random external};

  \node[det] at (3.0, 0) {};
  \node[font=\footnotesize, anchor=west, color=black!70] at (3.35, 0)
    {deterministic};

  \draw[draw=blue!50!black, dashed, rounded corners=2pt, line width=0.4pt]
    (5.45, -0.2) rectangle (5.95, 0.2);
  \node[font=\footnotesize, anchor=west, color=black!70] at (6.05, 0)
    {Markov state $\xi$ (network of nodes)};
\end{scope}

\end{tikzpicture}
}\\[4pt]
{\small (b) Markov chain dynamics across three time slices}
\end{minipage}
\caption{The \textsc{ISOMORPH} model. \textbf{(a)} Released
network: 13 U.S.\ cities (3 sources, 9 intermediate
warehouses across 5 tiers, and destination NewYork)
connected by 16 directed edges, each carrying a user-set
transit time~$\tau_e$; the map shows geographic placement,
but the graph structure and routing are defined by the
edges and their transit times, not by geographic distance.
\textbf{(b)} Dynamic Bayesian network across three time
slices. Top row: random inputs---demand~$y_t$ (violet,
drawn from deterministic intensity~$\lambda_t$, orange)
and source lead times~$L_t$ (violet). Bottom row (dashed
boxes): Markov state~$\xi_t$ transitioned by~$\Psi$
(Algorithm~\ref{alg:day}), see also Fig.~\ref{fig:psi-flow} Each slice shows a representative subset of the
$|\mathcal{N}| = 13$ nodes (cities).}
\label{fig:pgm}
\end{figure}

\begin{figure}[t]
\centering
\resizebox{\textwidth}{!}{
\begin{tikzpicture}[
  >=Latex,
  every node/.style={font=\footnotesize},
  procstep/.style={
    draw, rounded corners=2pt, align=center,
    text width=20mm, minimum height=16mm,
    inner sep=3pt, fill=gray!8
  },
  rndstep/.style={
    procstep, fill=violet!12, draw=violet!55!black, line width=0.6pt
  },
  rinput/.style={
    draw=violet!55!black, rounded corners=2pt, align=center,
    text width=26mm, minimum height=7mm,
    fill=violet!5, font=\scriptsize, inner sep=2pt
  },
  state/.style={
    draw=black!55, dashed, rounded corners=2pt,
    minimum width=11mm, minimum height=14mm, align=center,
    font=\footnotesize\itshape
  },
  flow/.style={->, thick},
  rflow/.style={->, thick, violet!55!black}
]

\node[state] (xin) {$\xi_t$};

\node[procstep, right=3mm of xin]   (s1) {\textbf{1.} Receive\\scheduled arrivals\\at non-destination\\nodes};
\node[procstep, right=1.5mm of s1]  (s2) {\textbf{2.} Receive\\arrivals at $d^\star$;\\clear backlog};
\node[procstep, right=1.5mm of s2]  (s3) {\textbf{3.} Reset edge\\containers};
\node[rndstep,  right=1.5mm of s3]  (s4) {\textbf{4.} Draw $y_t$;\\update smoothed\\demand;\\serve at $d^\star$};
\node[procstep, right=1.5mm of s4]  (s5) {\textbf{5.} Dispatch\\$\to d^\star$ via\\Dijkstra + pack};
\node[procstep, right=1.5mm of s5]  (s6) {\textbf{6.} Inter-WH\\pull on\\real edges};
\node[rndstep,  right=1.5mm of s6]  (s7) {\textbf{7.} Source orders;\\draw lead times $L_t$};

\node[state, right=3mm of s7] (xout) {$\xi_{t+1}$};

\draw[flow] (xin) -- (s1);
\draw[flow] (s1) -- (s2);
\draw[flow] (s2) -- (s3);
\draw[flow] (s3) -- (s4);
\draw[flow] (s4) -- (s5);
\draw[flow] (s5) -- (s6);
\draw[flow] (s6) -- (s7);
\draw[flow] (s7) -- (xout);

\node[rinput, above=10mm of s4] (yt) {demand $y_t$};
\node[rinput, above=10mm of s7] (lt) {source lead times $L_t$};
\draw[rflow] (yt) -- (s4);
\draw[rflow] (lt) -- (s7);

\end{tikzpicture}
}
\caption{One-step transition map $\Psi: \xi_t \mapsto \xi_{t+1}$ as a fixed sequence of seven sub-steps (Algorithm~\ref{alg:day}). Random external inputs (\textcolor{violet!55!black}{violet}, top) enter at exactly two sub-steps---demand $y_t$ at sub-step (4) and source lead times $L_t$ at sub-step (7); the remaining five sub-steps are deterministic bookkeeping given $(\xi_t, y_t, L_t)$. Sub-steps (1) and (2) integrate inbound arrivals due at $t$, (3) replenishes per-step edge capacity, (4) consumes the demand draw and updates the destination's stock and backlog, (5) and (6) move units along edges via Dijkstra routing and greedy first-fit packing, and (7) places fresh source orders with stochastic lead times.}
\label{fig:psi-flow}
\end{figure}

Our model assumes stationary topology, fixed control
policy, and deterministic edge transit within any run
(\S\ref{app:replenish}).
Based on these assumptions, two conservation laws -- over
physical units and over demand events -- hold on every
sample path; they are stated and proved in
\S\ref{app:properties}. 

The simulator is realized as three algorithms in
\S\ref{app:params}.
The top-level procedure is Algorithm~\ref{alg:day}
(\S\ref{app:day}), a seven-stage per-step loop
that defines~$\Psi$, see Figure~\ref{fig:psi-flow}; it calls two subroutines:
\begin{itemize}[leftmargin=1.4em,itemsep=2pt,topsep=2pt]
  \item Algorithm~\ref{alg:pack}
    (\S\ref{app:pack}): greedy first-fit packing of
    units into finite-capacity containers along a shipping
    path.
  \item Algorithm~\ref{alg:demand}
    (\S\ref{app:demand-alg}): sampling of the
    five-component demand intensity tensor.
\end{itemize}

\subsection{Mathematical model}
\label{app:math}

This section specifies the simulator as a mathematical object: the Markov chain structure (\S\ref{app:formal}), the demand intensity (\S\ref{app:demand}), the replenishment and service rules (\S\ref{app:replenish}), and the inter-warehouse flows (\S\ref{app:flows}). 

\subsubsection{Markov chain structure}
\label{app:formal}

This section presents the simulator as a Markov chain. \S\ref{app:state-space} declares the state space, see Figure \ref{fig:pgm}. \S\ref{app:exogenous} specifies the random external inputs that drive each transition. \S\ref{app:transition-map} introduces the transition map $\Psi$ as a measurable function. \S\ref{app:kernel} defines the transition kernel as the average of the deterministic transition over the random external inputs. \S\ref{app:markov-prop} establishes the Markov property and notes the equivalent \textbf{Dynamic Bayesian Network} (DBN) representation.


\textbf{Graph and node attributes.} Let $\mathcal{G} = (\mathcal{N}, \mathcal{E})$ be a directed graph with $|\mathcal{N}| = 13$. The nodes split into three groups: source nodes $\mathcal{N}_{\mathrm{src}}$, intermediate nodes $\mathcal{N}_{\mathrm{int}}$, and a single destination $d^\star$, see Figure \ref{fig:pgm}. . Each edge $e = (u,v) \in \mathcal{E}$ carries a transit time $\tau_e \in \mathbb{Z}_{\geq 1}$ (time units), a per-container volume $V_e \in \mathbb{R}_{>0}$, and a per-step container count $K_e \in \mathbb{Z}_{\geq 0}$. The per-step volume capacity on $e$ is $C_e := K_e V_e$. 

In the released instantiation (\S3.1), $\mathcal{N}_{\mathrm{src}} = \{\text{San Francisco}, \text{St.\,Louis}, \text{Orlando}\}$, $d^\star = \text{NewYork}$, and $\mathcal{N}_{\mathrm{int}}$ contains the nine intermediate warehouses; the per-edge values $(\tau_e, V_e, K_e)$ are listed in Table~\ref{tab:edge-params} (\S\ref{app:params:edges}).

\textbf{Items and time.} The item set is $\mathcal{I} = \{1, \ldots, C\}$ with per-item unit volumes $v_i \in \mathbb{R}_{>0}$. Time is discrete with one step per user-chosen time unit, $t \in \{0, 1, \ldots, T-1\}$ and horizon $T = 52{,}560$.

\textbf{State variables.} 
The state at the end of time unit $t$ is the vector
\begin{equation}
    \xi_t = \big(\mathrm{OH}_t,\ B_t,\ \mathrm{Out}_t,\ \mathrm{IT}_t,\ \tilde{\lambda}_t\big) \in \mathcal{X},
    \label{eq:state-vector}
\end{equation}
with the following five components:
\begin{itemize}
    \item On-hand inventory $\mathrm{OH}_t = (\mathrm{OH}^{n,i}_t)_{n \in \mathcal{N},\, i \in \mathcal{I}} \in \mathbb{Z}_{\geq 0}^{|\mathcal{N}| \cdot C}$, where $\mathrm{OH}^{n,i}_t$ is the number of units of item $i$ held at node $n$.
    \item Backlog $B_t = (B^{n,i}_t)_{n \in \mathcal{N},\, i \in \mathcal{I}} \in \mathbb{Z}_{\geq 0}^{|\mathcal{N}| \cdot C}$, where $B^{n,i}_t$ is the number of demand units owed by node $n$ for item $i$. Non-zero only at $n = d^\star$.
    \item Outstanding source orders $\mathrm{Out}_t = (\mathrm{Out}^{n,i}_t)_{n \in \mathcal{N},\, i \in \mathcal{I}}$, where each $\mathrm{Out}^{n,i}_t \in \{\emptyset\} \cup (\mathbb{Z}_{>0} \times \mathbb{Z}_{>0})$ is either empty or a pair $(\tau, q)$ recording $q$ units due to arrive at time unit $\tau$.
    \item Scheduled destination arrivals $\mathrm{IT}_t \in \mathcal{M}\big(\mathbb{Z}_{>0} \times \mathcal{I} \times \mathbb{Z}_{>0}\big)$, a finite collection of triples $(\tau, i, q)$ recording shipments en route to $d^\star$.
    \item Smoothed demand estimate $\tilde{\lambda}_t = (\tilde{\lambda}^i_t)_{i \in \mathcal{I}} \in \mathbb{R}_{>0}^C$, an exponentially weighted moving average of recent observed demand, updated by $\tilde{\lambda}^i_{t+1} = \alpha y_{i,t} + (1-\alpha)\tilde{\lambda}^i_t$ with $\alpha = 0.05$.
\end{itemize}

\textbf{Hybrid state space.} The state $\xi_t$ combines four discrete-valued components and one continuous-valued component. The discrete part lives in
\begin{equation*}
    \mathcal{Q} = \mathbb{Z}_{\geq 0}^{|\mathcal{N}| \cdot C} \times \mathbb{Z}_{\geq 0}^{|\mathcal{N}| \cdot C} \times \big(\{\emptyset\} \cup \mathbb{Z}_{>0} \times \mathbb{Z}_{>0}\big)^{|\mathcal{N}| \cdot C} \times \mathrm{Lists}\big(\mathbb{Z}_{>0} \times \mathcal{I} \times \mathbb{Z}_{>0}\big),
\end{equation*}
a countable set comprising integer counts for on-hand inventory and backlog ($|\mathcal{N}| \cdot C$ entries each), optional integer pairs for outstanding source orders ($|\mathcal{N}| \cdot C$ entries), and a finite list of triples $(\tau, i, q)$ recording shipments scheduled to arrive at $d^\star$ at time unit $\tau$. The continuous part is the smoothed demand vector $\tilde{\lambda}_t \in \mathbb{R}_{>0}^C$, which takes values in a $C$-dimensional positive-real space. The full state space is the hybrid product
\begin{equation}
    \mathcal{X} = \mathcal{Q} \times \mathbb{R}_{>0}^C,
    \label{eq:state-space}
\end{equation}
on which probability distributions are defined in the standard way for hybrid spaces (the Borel product of the discrete $\sigma$-algebra on $\mathcal{Q}$ with the Borel $\sigma$-algebra on $\mathbb{R}_{>0}^C$).

\textbf{Per-edge in-transit volumes (derived quantity).} Per-edge in-transit counts are not part of the state variables above. We write $q^{u \to v}_{i,t}$ for the cumulative count of item-$i$ units on edge $(u,v)$ at the end of time unit $t$, computed as the sum of all dispatches that have been routed onto $(u,v)$ but whose realized arrival time is strictly later than $t$. This is a derived quantity, reconstructible from the dispatch records produced inside the per-step update (Algorithm~\ref{alg:pack}, \S\ref{app:pack}); it is referenced in Proposition~\ref{prop:conservation}.


\begin{figure}[t]
\centering
\begin{tikzpicture}[
  >={Stealth[length=2mm]},
  font=\small,
  rv/.style={circle, draw=violet!60!black, fill=violet!20, line width=0.4pt,
             minimum size=0.65cm, inner sep=0pt},
  det/.style={circle, draw=orange!70!black, fill=orange!30, line width=0.4pt,
              minimum size=0.65cm, inner sep=0pt},
  src/.style={circle, draw=black!50, fill=gray!30, line width=0.4pt,
              minimum size=0.5cm, inner sep=0pt},
  hub/.style={circle, draw=blue!60!black, fill=blue!20, line width=0.4pt,
              minimum size=0.5cm, inner sep=0pt},
  ware/.style={circle, draw=blue!60!black, fill=blue!20, line width=0.4pt,
               minimum size=0.5cm, inner sep=0pt},
  dest/.style={circle, draw=blue!80!black, fill=blue!50, line width=0.4pt,
               minimum size=0.55cm, inner sep=0pt},
  edge/.style={->, line width=0.4pt, color=black!60},
  inputedge/.style={->, line width=0.4pt, color=violet!70!black},
  trans/.style={->, line width=0.7pt, color=black!70},
]

\draw[draw=blue!50!black, dashed, rounded corners=4pt, line width=0.4pt]
  (-4.7, -0.8) rectangle (-1.3, 2.2);
\draw[draw=blue!50!black, dashed, rounded corners=4pt, line width=0.4pt]
  ( 0.1, -0.8) rectangle ( 3.5, 2.2);
\draw[draw=blue!50!black, dashed, rounded corners=4pt, line width=0.4pt]
  ( 4.9, -0.8) rectangle ( 8.3, 2.2);

\node[font=\footnotesize, color=blue!60!black, anchor=north west]
  at (-4.6, 2.1) {\textit{$\xi(t-1)$}};
\node[font=\footnotesize, color=blue!60!black, anchor=north west]
  at ( 0.2, 2.1) {\textit{$\xi(t)$}};
\node[font=\footnotesize, color=blue!60!black, anchor=north west]
  at ( 5.0, 2.1) {\textit{$\xi(t+1)$}};

\node[font=\small, color=black!60] at (-3.0, 4.6) {\textit{Step $t-1$}};
\node[font=\small, color=black!60] at ( 1.8, 4.6) {\textit{Step $t$}};
\node[font=\small, color=black!60] at ( 6.6, 4.6) {\textit{Step $t+1$}};

\node[det] (lam0) at (-3.9, 3.6) {$\lambda$};
\node[rv]  (y0)   at (-3.0, 3.6) {$y$};
\node[rv]  (l0)   at (-2.1, 3.6) {$L$};
\draw[edge] (lam0) -- (y0);

\node[det] (lam1) at ( 0.9, 3.6) {$\lambda$};
\node[rv]  (y1)   at ( 1.8, 3.6) {$y$};
\node[rv]  (l1)   at ( 2.7, 3.6) {$L$};
\draw[edge] (lam1) -- (y1);

\node[det] (lam2) at ( 5.7, 3.6) {$\lambda$};
\node[rv]  (y2)   at ( 6.6, 3.6) {$y$};
\node[rv]  (l2)   at ( 7.5, 3.6) {$L$};
\draw[edge] (lam2) -- (y2);


\node[src]  (s0) at (-4.2, 1.5) {};
\node[hub]  (h0) at (-3.4, 1.5) {};
\node[ware] (w0) at (-2.6, 1.5) {};
\node[dest] (d0) at (-2.0, 0.2) {};
\node[font=\scriptsize, color=black!70] at (-4.2, 1.0) {src};
\node[font=\scriptsize, color=black!70] at (-3.4, 1.0) {hub};
\node[font=\scriptsize, color=black!70] at (-2.6, 1.0) {w};
\node[font=\scriptsize, color=black!70] at (-2.0,-0.3) {$d^\star$};
\draw[edge] (s0) -- (h0);
\draw[edge] (h0) -- (w0);
\draw[edge] (w0) -- (d0);

\node[src]  (s1) at ( 0.6, 1.5) {};
\node[hub]  (h1) at ( 1.4, 1.5) {};
\node[ware] (w1) at ( 2.2, 1.5) {};
\node[dest] (d1) at ( 2.8, 0.2) {};
\node[font=\scriptsize, color=black!70] at ( 0.6, 1.0) {src};
\node[font=\scriptsize, color=black!70] at ( 1.4, 1.0) {hub};
\node[font=\scriptsize, color=black!70] at ( 2.2, 1.0) {w};
\node[font=\scriptsize, color=black!70] at ( 2.8,-0.3) {$d^\star$};
\draw[edge] (s1) -- (h1);
\draw[edge] (h1) -- (w1);
\draw[edge] (w1) -- (d1);

\node[src]  (s2) at ( 5.4, 1.5) {};
\node[hub]  (h2) at ( 6.2, 1.5) {};
\node[ware] (w2) at ( 7.0, 1.5) {};
\node[dest] (d2) at ( 7.6, 0.2) {};
\node[font=\scriptsize, color=black!70] at ( 5.4, 1.0) {src};
\node[font=\scriptsize, color=black!70] at ( 6.2, 1.0) {hub};
\node[font=\scriptsize, color=black!70] at ( 7.0, 1.0) {w};
\node[font=\scriptsize, color=black!70] at ( 7.6,-0.3) {$d^\star$};
\draw[edge] (s2) -- (h2);
\draw[edge] (h2) -- (w2);
\draw[edge] (w2) -- (d2);

\draw[trans] (-1.3, 0.7) -- (0.1, 0.7)
  node[midway, above, font=\footnotesize] {$\Psi$};
\draw[trans] ( 3.5, 0.7) -- (4.9, 0.7)
  node[midway, above, font=\footnotesize] {$\Psi$};

\coordinate (target1) at (0.3, 2.0);
\coordinate (target2) at (5.1, 2.0);

\draw[inputedge] (y0) to[bend right=8] (target1);
\draw[inputedge] (l0) to[bend right=8] (target1);
\draw[inputedge] (y1) to[bend right=8] (target2);
\draw[inputedge] (l1) to[bend right=8] (target2);

\begin{scope}[shift={(-3.5, -1.8)}]
  \node[rv] at (0, 0) {};
  \node[font=\footnotesize, anchor=west, color=black!70] at (0.35, 0)
    {random external};

  \node[det] at (3.0, 0) {};
  \node[font=\footnotesize, anchor=west, color=black!70] at (3.35, 0)
    {deterministic};

  \draw[draw=blue!50!black, dashed, rounded corners=2pt, line width=0.4pt]
    (5.45, -0.2) rectangle (5.95, 0.2);
  \node[font=\footnotesize, anchor=west, color=black!70] at (6.05, 0)
    {Markov state $\xi$ (network of nodes)};
\end{scope}

\end{tikzpicture}

\caption{The \textsc{ISOMORPH} model as a dynamic Bayesian network across three time slices. \emph{Random external inputs} (top row, violet): demand $y_t \sim \mathrm{Poisson}(\lambda_{i,t})$ and source lead times $L_t$ are the only sources of randomness in the chain. The intensity $\lambda$ (orange) is deterministic given the simulator seed. \emph{Markov state} (bottom row, dashed boxes): the state $\xi_t$ is the joint state of the simulator, transitioned by the deterministic map $\Psi$ defined in Algorithm~\ref{alg:day}. The illustrated nodes inside each slice are organized by the partition $\mathcal{N} = \mathcal{N}_{\mathrm{src}} \sqcup \mathcal{N}_{\mathrm{int}} \sqcup \{d^\star\}$ from \S\ref{app:state-space}: a representative source, two intermediate nodes (a hub and a downstream warehouse), and the destination. We do not draw all 13 cities of the released topology in each slice; the figure shows the role of each group, not the full graph. The geographic instantiation ($\mathcal{N}_{\mathrm{src}} = \{\text{San Francisco, St.\,Louis, Orlando}\}$, $d^\star = \text{NewYork}$, and the nine intermediate nodes) is in Figure~\ref{fig:pgm} of the main text.}
\label{fig:pgm:appendix}
\end{figure}

\paragraph{Random external inputs.}
\label{app:exogenous}

At each step $t$, the per-step update reads two random vectors $(y_t, L_t)$ that are external inputs, supplied to the system rather than computed from its current state. They are not part of the state.

\textbf{Demand.} The demand vector $y_t = (y_{1,t}, \ldots, y_{C,t}) \in \mathbb{Z}_{\geq 0}^C$ has independent Poisson components,
\begin{equation}
    y_{i,t} \sim \mathrm{Poisson}(\lambda_{i,t}),
    \label{eq:poisson-demand}
\end{equation}
with the draws independent across both $i$ and $t$. The intensity sequence $(\lambda_{i,t})_{i \in \mathcal{I},\, t \in \{0, \ldots, T-1\}}$ is constructed in \S\ref{app:demand} (see \eqref{eq:intensity}) and \S\ref{app:demand-alg} (algorithmic construction). For the present subsection, $(\lambda_{i,t})$ is treated as a fixed input to the chain.

\textbf{Source lead times.} Define the set of source-item pairs that need a fresh order on a state $\xi$ as
\begin{equation}
    \mathcal{T}(\xi) = \big\{(n,i) \in \mathcal{N}_{\mathrm{src}} \times \mathcal{I} \;:\; \mathrm{OH}^{n,i} < s^{n,i},\ \mathrm{Out}^{n,i} = \emptyset\big\},
    \label{eq:trigger-set}
\end{equation}
where the parameters $(s^{n,i}, \mu^{n,i})$ are specified in \S\ref{app:replenish}. For each $(n,i) \in \mathcal{N}_{\mathrm{src}} \times \mathcal{I}$, define
\begin{equation}
    L^{n,i}_t = \max\!\big(1,\ \lceil X^{n,i}_t \rceil\big), \qquad X^{n,i}_t \sim \mathcal{N}\!\big(\mu^{n,i},\ (0.2\,\mu^{n,i})^2\big).
    \label{eq:lead-time}
\end{equation}
Writing $\Phi$ for the standard-Gaussian cumulative distribution function, the integer $L^{n,i}_t$ takes value $\ell$ with probability
\begin{equation}
    \mathbb{P}\big(L^{n,i}_t = \ell\big) = 
    \begin{cases}
        \displaystyle \Phi\!\left(\frac{1 - \mu^{n,i}}{0.2\,\mu^{n,i}}\right), & \ell = 1, \\[10pt]
        \displaystyle \Phi\!\left(\frac{\ell - \mu^{n,i}}{0.2\,\mu^{n,i}}\right) - \Phi\!\left(\frac{\ell - 1 - \mu^{n,i}}{0.2\,\mu^{n,i}}\right), & \ell \geq 2.
    \end{cases}
    \label{eq:lead-time-pmf}
\end{equation}
Collect all draws into the lead-time vector
\begin{equation}
    L_t = \big(L^{n,i}_t\big)_{(n,i) \in \mathcal{N}_{\mathrm{src}} \times \mathcal{I}} \in \mathbb{Z}_{\geq 1}^{|\mathcal{N}_{\mathrm{src}}| \cdot C}.
    \label{eq:lead-time-vector}
\end{equation}
The transition map $\Psi$ in \S\ref{app:transition-map} reads only the components indexed by $\mathcal{T}(\xi_t)$; the remaining components are extra randomness that has no effect on $\xi_{t+1}$.

\textbf{Independence and initialization.} Conditional on $\xi_t$, the lead-time vector $L_t$ has independent components and is independent of $y_t$. The collection $\big((y_t, L_t)\big)_{t \geq 0}$ is jointly independent across $t$. The initial state $\xi_0 \in \mathcal{X}$ is fixed in advance, specified by the simulator configuration. Throughout this paper, the simulation seed is treated as a fixed parameter; randomness in the chain comes only from $\big((y_t, L_t)\big)_{t \geq 0}$.


\paragraph{One-step transition map.}
\label{app:transition-map}

The transition from $\xi_t$ to $\xi_{t+1}$ has two layers. The first is a deterministic map $\Psi$ that, given the current state and a realized draw of the random external inputs, computes the next state by mechanical bookkeeping. The second is the transition probability $P_t$ obtained by averaging $\Psi$ over the distribution of those inputs. This subsection specifies $\Psi$; \S\ref{app:kernel} specifies $P_t$.

Define $\Psi$ as the composition of the seven sub-step operations of Algorithm~\ref{alg:day} (\S\ref{app:day}), executed in fixed order, viewed as a map
\begin{equation}
    \Psi : \mathcal{X} \times \mathbb{Z}_{\geq 0}^C \times \mathbb{Z}_{\geq 1}^{|\mathcal{N}_{\mathrm{src}}| \cdot C} \longrightarrow \mathcal{X}.
    \label{eq:psi-domain}
\end{equation}
Given $\xi_t$ and realized inputs $(y_t, L_t)$, the next state is
\begin{equation}
    \xi_{t+1} = \Psi(\xi_t,\ y_t,\ L_t)
    \label{eq:one-step-update}
\end{equation}
deterministically: there is no further randomness in the transition once $(y_t, L_t)$ are fixed. The seven sub-steps execute in the order specified by Algorithm~\ref{alg:day}, and the resulting state $\xi_{t+1}$ depends on this ordering. Each sub-step is built from arithmetic operations, indicator functions, and finite enumeration over the finite sets $\mathcal{N}$, $\mathcal{I}$, $\mathcal{E}$. Writing $\Psi$ in closed form would require inlining all seven sub-steps with their (s,S) trigger logic and packing-residual dependencies into one expression; the algorithmic specification of Algorithm~\ref{alg:day} is more compact and is the definition of $\Psi$ throughout this paper.

\paragraph{Transition probability.}
\label{app:kernel}

The transition probability defined as $$P_t(\xi, A) = \mathbb{P}(\xi_{t+1} \in A \mid \xi_t = \xi)$$ is constructed from $\Psi$ in three steps: fix the realized inputs and apply $\Psi$ deterministically, ask whether the result lands in $A$, then average over the distribution of inputs.

\textbf{Step 1 --- pointwise transition.} For $\xi_t = \xi$ and realizations $(y, \ell)$, the next state is $\xi_{t+1} = \Psi(\xi, y, \ell)$ (\S\ref{app:transition-map}).

\textbf{Step 2 --- conditional law of inputs.} By \S\ref{app:exogenous}, the joint conditional distribution of $(y_t, L_t)$ given $\xi_t = \xi$ factorizes as $\pi^{\mathrm{dem}}_t(y) \, \pi^{\mathrm{lt}}(\ell)$, with neither factor depending on $\xi$.

\textbf{Step 3 --- average over inputs.} For each $t \geq 0$, $\xi \in \mathcal{X}$, and event $A$,
\begin{equation}
    P_t(\xi, A) = \sum_{y \in \mathbb{Z}_{\geq 0}^C} \sum_{\ell \in \mathbb{Z}_{\geq 1}^{|\mathcal{N}_{\mathrm{src}}| \cdot C}} \mathbf{1}\!\left\{\Psi(\xi, y, \ell) \in A\right\} \, \pi^{\mathrm{dem}}_t(y) \, \pi^{\mathrm{lt}}(\ell),
    \label{eq:kernel}
\end{equation}
where
\begin{equation}
    \pi^{\mathrm{dem}}_t(y) = \prod_{i=1}^C \frac{\lambda_{i,t}^{\,y_i}\, e^{-\lambda_{i,t}}}{y_i!}, \qquad \pi^{\mathrm{lt}}(\ell) = \prod_{(n,i) \in \mathcal{N}_{\mathrm{src}} \times \mathcal{I}} \mathbb{P}\big(L^{n,i} = \ell^{n,i}\big),
    \label{eq:kernel-factors}
\end{equation}
and the lead-time mass function is given in \eqref{eq:lead-time-pmf}.

Summing over the entire state space, $P_t(\xi, \mathcal{X}) = \sum_y \pi^{\mathrm{dem}}_t(y) \sum_\ell \pi^{\mathrm{lt}}(\ell) = 1$, since both factors are probability mass functions and $\Psi$ takes values in $\mathcal{X}$. For each fixed $\xi$, $P_t(\xi, \cdot)$ is a probability distribution over the next state.

The function $P_t : \mathcal{X} \times \mathcal{B}(\mathcal{X}) \to [0,1]$ is also called a \emph{Markov transition kernel} in the probability literature. We use ``transition probability'' throughout to mean the same object in less technical language.

\paragraph{Markov property}
\label{app:markov-prop}

The Markov property compares the conditional distribution of $\xi_{t+1}$ given the entire past against the conditional distribution given only the current state. To formalize ``the entire past,'' let $\mathcal{F}_t$ be the collection of events whose outcomes are fully determined by the initial state $\xi_0$ and the random external inputs through time unit $t-1$:
\begin{equation*}
    \mathcal{F}_t = \sigma\!\big(\xi_0,\ y_0,\ \ell_0,\ \ldots,\ y_{t-1},\ \ell_{t-1}\big).
\end{equation*}
Concretely, $\mathcal{F}_t$ is the set of all yes-or-no questions whose answer can be read off from the values of these inputs alone --- ``did inventory at warehouse $w$ exceed 100 units yesterday?'', ``has any backlog accumulated at $d^\star$ in the first week?'', and so on. Closing this set under the natural operations on events (union, intersection, complement) gives the formal object $\mathcal{F}_t$ (a $\sigma$-algebra). Since $\xi_t$ is computed from these inputs by repeated application of $\Psi$, any event involving $\xi_t$ belongs to $\mathcal{F}_t$.

By construction, $(y_t, L_t)$ is independent of $\mathcal{F}_t$, so back in \eqref{eq:kernel}
\begin{equation}
    \mathbb{P}\big(\xi_{t+1} \in A \;\big|\; \mathcal{F}_t\big) = P_t(\xi_t, A) \qquad \text{a.s.}
    \label{eq:markov}
\end{equation}
The conditional distribution of $\xi_{t+1}$ given the past depends on the past only through $\xi_t$. The chain $(\xi_t)_{t \geq 0}$ is therefore a time-inhomogeneous Markov chain on $\mathcal{X}$ with transition probabilities $(P_t)_{t \geq 0}$, where time-inhomogeneity comes from the time-varying demand intensity $\lambda_{i,t}$ that enters $\pi^{\mathrm{dem}}_t$. The augmented process $\big((t, \xi_t)\big)_{t \geq 0}$ on $\mathbb{Z}_{\geq 0} \times \mathcal{X}$ is time-homogeneous, with transition probability
\begin{equation}
    \bar{P}\big((t, \xi),\ \{t+1\} \times A\big) = P_t(\xi, A).
    \label{eq:augmented-kernel}
\end{equation}

The state space $\mathcal{X} = \mathcal{Q} \times \mathbb{R}_{>0}^C$ is hybrid: transitions on the continuous component $\tilde{\lambda}$ are deterministic given $y_t$, while transitions on the discrete component are random.

\paragraph{DBN representation.}
The chain admits an equivalent representation as a dynamic Bayesian network (DBN). In this representation, the state $\xi_t$ is decomposed into its per-node, per-item components $$\big(\mathrm{OH}^{n,i}_t, B^{n,i}_t, \mathrm{Out}^{n,i}_t, \mathrm{IT}_t, \tilde{\lambda}^i_t\big)_{n \in \mathcal{N},\, i \in \mathcal{I}}\, ,$$ and the random external inputs $(y_t, L_t)$ enter as separate input nodes, see also Figure~\ref{fig:pgm}. The network topology $\mathcal{G}$ is then embedded in the cross-node dependencies within each time slice: $\mathrm{OH}^{n,i}_{t+1}$ depends on $\mathrm{OH}^{u,i}_t$ for upstream neighbours $u$ via the inter-warehouse flows of \S\ref{app:flows}. We do not use the DBN representation in the present work --- the Markov chain abstraction over $\xi_t$ is sufficient for the formal claims of \S\ref{app:properties} and the algorithmic specification of \S\ref{app:params}. The DBN view could be  useful for follow-up work on inference, structure learning, or interventional reasoning.

\subsubsection{Demand intensity}
\label{app:demand}

This section gives the equational form of the demand intensity sequence $(\lambda_{i,t})$ used in \S\ref{app:exogenous} as the rate parameter of the Poisson demand draws.

For each item $i$ and time unit $t$, demand intensity is
\begin{equation}
    \lambda_{i,t} = \bar{\lambda}_i \cdot \big[1 + S_i(t) + W_i(t) + A_i(t) + P_i(t) + g_i\, G(t)\big]_{\geq \varepsilon},
    \label{eq:intensity}
\end{equation}
with $[\cdot]_{\geq \varepsilon} := \max(\cdot, \varepsilon)$ and $\varepsilon = 0.08$ (a floor that prevents the intensity from collapsing to zero). The five components are as follows.

\textbf{Yearly seasonal} (with second harmonic, so the shape is not a perfect sine):
\begin{equation}
    S_i(t) = a^{(y)}_{1,i}\, \sin(\omega_y t + \phi_i) + a^{(y)}_{2,i}\, \sin(2\omega_y t + 0.7\,\phi_i), \qquad \omega_y = 2\pi/365.
    \label{eq:yearly}
\end{equation}

\textbf{Weekly seasonal:}
\begin{equation}
    W_i(t) = a^{(w)}_i\, \sin(\omega_w t + \psi_i), \qquad \omega_w = 2\pi/7.
    \label{eq:weekly}
\end{equation}

\textbf{Long-horizon drift, autoregressive of order 1 (AR(1)) and clipped:}
\begin{equation}
    A_i(t) = \mathrm{clip}\!\big(\phi^{\mathrm{AR}}_i\, A_i(t-1) + \epsilon_{i,t},\ \pm 0.6\big), \qquad \epsilon_{i,t} \sim \mathcal{N}(0, \sigma^2_{\mathrm{AR}, i}).
    \label{eq:drift}
\end{equation}
The AR(1) recursion produces a slowly varying random trajectory whose persistence is set by $\phi^{\mathrm{AR}}_i$ and whose noise level is set by $\sigma_{\mathrm{AR},i}$; the clip at $\pm 0.6$ keeps the drift contribution bounded.

\textbf{Per-item bursts:}
\begin{equation}
    P_i(t) = \sum_k h^P_{i,k}\, \pi\!\left(\frac{t - t^P_{i,k}}{\Delta^P_{i,k}}\right),
    \label{eq:bursts}
\end{equation}
where $\pi(\cdot)$ is a trapezoidal pulse: it ramps linearly up over the first 15\% of its duration, plateaus for 60\%, and ramps down over the last 25\%. Bursts fire as a sparse random process per item.

\textbf{Shared macro-shock:}
\begin{equation}
    G(t) = \sum_k h^G_k\, \pi\!\left(\frac{t - t^G_k}{\Delta^G_k}\right),
    \label{eq:macro-shock}
\end{equation}
drawn once per run and shared across all items, with per-item sensitivity $g_i$.
The distributions of all coefficients are given in Table~\ref{tab:demand-params} (\S\ref{app:params:demand}).

\subsubsection{Replenishment policy and destination service}
\label{app:replenish}

This section specifies the per-node replenishment policy and the destination service rule. The parameters $(s^{n,i}, S^{n,i}, \mu^{n,i})$ are used in \S\ref{app:exogenous} (the trigger set $\mathcal{T}$ and lead-time means) and in \S\ref{app:day} (sub-step (vii)). Standing assumptions held fixed across all releases are recorded at the end.

\textbf{Replenishment.} Replenishment follows an $(s, S)$ policy at every non-destination node: when on-hand drops below the reorder threshold $s$, place an order that brings on-hand back up to the order-up-to level $S$. The two node types realize this policy differently. Source nodes $n \in \mathcal{N}_{\mathrm{src}}$ act as factories with unlimited upstream supply: when $\mathrm{OH}^{n,i} < s^{n,i}$ and no order is outstanding, an order of size $S^{n,i} - \mathrm{OH}^{n,i}$ is placed and arrives after a random lead time $L^{n,i}$ drawn as in \eqref{eq:lead-time}. Intermediate nodes $n \in \mathcal{N}_{\mathrm{int}}$ use the same trigger but pull inventory from upstream neighbours along real edges; the requested quantity is physically routed and packed (Algorithm~\ref{alg:pack}, \S\ref{app:pack}), and the arrival time equals the realized path-transit time. Only the source boundary has a closed-form lead time.

\textbf{Destination service.} 
Demand $y_{i,t}$ is served at $d^\star$ in two ordered events within a single transition $\xi_t \to \xi_{t+1}$. First, scheduled arrivals at $d^\star$ are received: writing
\begin{equation}
A_{i,t} \;:=\; \mathrm{IT}_t[t{+}1,\, i]
\label{eq:dest-arrivals}
\end{equation}
for the units of item $i$ scheduled to arrive at $d^\star$ between $t$ and $t{+}1$, the arrivals are applied first to outstanding backlog and only the residual is added to on-hand:
\begin{equation}
B^{d^\star,i}_{(2)} \;=\; \bigl(B^{d^\star,i}_t - A_{i,t}\bigr)_+,
\qquad
\mathrm{OH}^{d^\star,i}_{(2)} \;=\; \mathrm{OH}^{d^\star,i}_t + \bigl(A_{i,t} - B^{d^\star,i}_t\bigr)_+ .
\label{eq:dest-arrival-update}
\end{equation}
Second, the day's demand is served from the post-arrival on-hand and any unmet portion accumulates as new backlog:
\begin{equation}
\mathrm{OH}^{d^\star,i}_{t+1} \;=\; \bigl(\mathrm{OH}^{d^\star,i}_{(2)} - y_{i,t}\bigr)_+,
\qquad
B^{d^\star,i}_{t+1} \;=\; B^{d^\star,i}_{(2)} + \bigl(y_{i,t} - \mathrm{OH}^{d^\star,i}_{(2)}\bigr)_+ .
\label{eq:backlog-update}
\end{equation}
The fill-rate of item $i$ over a horizon is the ratio of served units to demanded units on that horizon.

\textbf{Standing assumptions.} Three properties are held fixed within a run:
\begin{itemize}
    \item \emph{Stationary topology}: the graph $\mathcal{G}$ and the per-edge triple $(\tau_e, V_e, K_e)$ are set at construction and not modified.
    \item \emph{Fixed control policy}: the $(s, S)$ levels, the routing rule, and the packing primitive are held constant; no online control is applied.
    \item \emph{Deterministic edge transit}: edges in $\mathcal{E}$ contribute no lead-time noise; randomness in lead times is confined to the source-level draw of \eqref{eq:lead-time}.
\end{itemize}

\subsubsection{Inter-warehouse flows}
\label{app:flows}

Cities communicate by shipping physical units across edges. This section gives the mathematical form of the three flow types that move material between cities. Each is a deterministic function of the state once the random external inputs $(y_t, L_t)$ are fixed; the algorithmic resolution is in \S\ref{app:day}.

\textbf{Source arrivals (boundary inflow).} For $n \in \mathcal{N}_{\mathrm{src}}$, source orders are placed under the (s,S) trigger of \eqref{eq:trigger-set} and delivered after the random lead time of \eqref{eq:lead-time}. An order placed at time unit $t' < t$ with realized lead time $L^{n,i}_{t'}$ delivers as a scheduled arrival at time unit $t' + L^{n,i}_{t'}$; if this equals $t$ the order arrives at $t$ with size $S^{n,i} - \mathrm{OH}^{n,i}_{t'}$ (the order-up-to amount at the time of placement). The cumulative source-arrival mass between $t$ and $t+1$, denoted $A^{\mathrm{src}}_{i,t}$, is defined formally in \eqref{eq:source-arrivals}. Source replenishment is the only mechanism that adds units to the network from outside; sources act as boundary inflow.

\textbf{Outbound dispatch (warehouses to destination).} For each item $i$ and time unit $t$, the per-step target shipment volume to the destination is
\begin{equation}
    \mathrm{target}_{i,t} = \max\!\left\{0,\ B^{d^\star, i}_t + m\, \tilde{\lambda}^i_t - \mathrm{IT}^i_t - \mathrm{OH}^{d^\star, i}_t\right\},
    \label{eq:dispatch-target}
\end{equation}
where $\mathrm{IT}^i_t = \sum_{\tau > t} \mathrm{IT}_t[\tau, i]$ is total in-transit volume bound for $d^\star$ and $m$ is the pipeline multiplier (a configuration parameter; $m = 7$ in this paper). The interpretation: ship enough to cover the destination's backlog plus $m$ time units of expected demand, minus what is already on hand or in transit. Warehouses are visited in increasing Dijkstra distance to $d^\star$, where the edge weight on $(u,v)$ is $\tau_{(u,v)} / (K_{(u,v)} V_{(u,v)})$ (favoring short paths with high per-step capacity); each warehouse's contribution is bounded by its on-hand stock and by the remaining target after earlier warehouses have shipped. The flow quantity from a warehouse is the volume placed by $\textsc{GreedyPack}$ along the resolved path; arrivals are scheduled at $d^\star$ at time unit $t + \lceil \sum_e \tau_e \rceil$.

\textbf{Inter-warehouse pull.} Intermediate nodes $n \in \mathcal{N}_{\mathrm{int}}$ trigger replenishment when $\mathrm{OH}^{n,i}_t < s^{n,i}$ and $\mathrm{Out}^{n,i}_t = \emptyset$. The requested quantity is $S^{n,i} - \mathrm{OH}^{n,i}_t$. The flow is sourced from the first upstream supplier $u \in \mathcal{N}_{\mathrm{int}} \cup \mathcal{N}_{\mathrm{src}}$ with available stock $\mathrm{OH}^{u,i}_t > 0$, walked in increasing total path-transit-time order. The flow quantity is the volume placed by $\textsc{GreedyPack}$ along the path from $u$ to $n$, and arrival is scheduled at $n$ at time unit $t + \lceil \sum_e \tau_e \rceil$.

\textbf{Sequential coupling within a step.} In all three flow types, the actual quantity placed depends on the residual capacity of edge containers along the path, which is itself a function of earlier flows in the same step. This makes the inter-warehouse flows sequentially coupled across items and across warehouses within the per-step update; the ordering of sub-steps in Algorithm~\ref{alg:day} fixes this sequence. In Algorithm~\ref{alg:day}, source arrivals are realized in sub-step (vii) (order placement) and sub-step (i) (receipt), outbound dispatch in sub-step (v), and inter-warehouse pull in sub-step (vi).


\subsection{Algorithms}
\label{app:params}
 
This section gives the explicit algorithmic construction of the transition map $\Psi$ defined abstractly in \S\ref{app:transition-map}. \S\ref{app:day} gives the seven-sub-step per-step procedure that defines $\Psi$. \S\ref{app:pack} defines the packing routine called from \S\ref{app:day}. \S\ref{app:demand-alg} gives the intensity-construction procedure that produces the demand rates of \S\ref{app:demand}. The parameter values used to produce the released runs are in Appendix~\ref{app:param-values}.
 
\subsubsection{Per-step algorithm}
\label{app:day}
 
Algorithm~\ref{alg:day} realizes the transition map $\Psi$ of \S\ref{app:transition-map} as a sequence of seven sub-steps, executed in the fixed order given below. Each sub-step reads state written by the previous one, and the conservation identities of Proposition~\ref{prop:conservation} hold only under this ordering.
 
\par\noindent\hrule height 0.6pt\vspace{3pt}
\noindent\textbf{Algorithm~\refstepcounter{algorithm}\thealgorithm\label{alg:day}}\quad
One step of the \textsc{ISOMORPH} simulation loop.
\vspace{2pt}\hrule height 0.4pt\vspace{4pt}
\begin{algorithmic}[1]
\State \textbf{Input:} time-unit index $t$; graph $\mathcal{G}$; per-node $(s, S, \mu_L)$; current state $\{\mathrm{OH}, B, \mathrm{Out}, \mathrm{IT}\}$; demand function $\mathrm{demand}(t)$; smoothed demand $\tilde{\lambda}$ with $\alpha = 0.05$; pipeline multiplier $m$.
\Statex
\Statex \textbf{(1) Receive scheduled $(s, S)$ replenishment.}
\For{$n \in \mathcal{N} \setminus \{d^\star\}$ and $i \in \mathcal{I}$}
    \If{$\mathrm{Out}^{n,i}$ is $(\tau, q)$ with $\tau \leq t$}
        \State $\mathrm{OH}^{n,i} \gets \mathrm{OH}^{n,i} + q$; clear $\mathrm{Out}^{n,i}$.
    \EndIf
\EndFor
\Statex
\Statex \textbf{(2) Receive arrivals at destination; backlog cleared first.}
\For{$i$ such that $\mathrm{IT}_t[t, i] > 0$}
    \State $q \gets \mathrm{IT}_t[t, i]$; $b \gets B^{d^\star, i}$.
    \State $B^{d^\star, i} \gets (b - q)_+$;\quad $\mathrm{OH}^{d^\star, i} \gets \mathrm{OH}^{d^\star, i} + (q - b)_+$.
\EndFor
\Statex
\Statex \textbf{(3) Reset edge containers each step.}
\For{$e \in \mathcal{E}$}
    \State $\mathrm{containers}_e \gets [V_e] \times K_e$.
\EndFor
\Statex
\Statex \textbf{(4) Demand, service, and smoothed-demand update at $d^\star$.}
\State $y \gets \mathrm{demand}(t)$.
\For{$i \in \mathcal{I}$}
    \State $\tilde{\lambda}_i \gets \alpha\, y_i + (1-\alpha)\, \tilde{\lambda}_i$.
    \State $\mathrm{served} \gets \min(\mathrm{OH}^{d^\star, i},\ y_i)$;\quad $\mathrm{unfilled} \gets y_i - \mathrm{served}$.
    \State $\mathrm{OH}^{d^\star, i} \gets \mathrm{OH}^{d^\star, i} - \mathrm{served}$;\quad $B^{d^\star, i} \gets B^{d^\star, i} + \mathrm{unfilled}$.
\EndFor
\Statex
\Statex \textbf{(5) Dispatch warehouses $\to d^\star$ (round-robin over items).}
\For{$i$ in round-robin order}
    \State $\mathrm{target}_i \gets \max\{0,\ B^{d^\star, i} + m\, \tilde{\lambda}_i - \mathrm{in\text{-}transit}_i - \mathrm{OH}^{d^\star, i}\}$.
    \State $\mathrm{remaining} \gets \mathrm{target}_i$.
    \For{$w$ in warehouses sorted by Dijkstra distance to $d^\star$ (ascending)}
        \If{$\mathrm{remaining} \leq 0$} \textbf{break} \EndIf
        \State $q_{\mathrm{try}} \gets \min(\mathrm{OH}^{w,i},\ \mathrm{remaining})$.
        \State $q_{\mathrm{placed}} \gets \textsc{GreedyPack}(i,\ q_{\mathrm{try}},\ \mathrm{Path}(w, d^\star))$.\quad $\triangleright$ Alg.~\ref{alg:pack}
        \State $\mathrm{OH}^{w,i} \mathrel{-}= q_{\mathrm{placed}}$;\quad $\mathrm{remaining} \mathrel{-}= q_{\mathrm{placed}}$.
        \State Schedule arrival at $d^\star$ at time unit $t + \lceil \sum_{e \in \mathrm{Path}(w, d^\star)} \tau_e \rceil$ of size $q_{\mathrm{placed}}$.
    \EndFor
\EndFor
\Statex
\Statex \textbf{(6) Inter-warehouse replenishment (pull along real edges).}
\For{$n \in \mathcal{N}_{\mathrm{int}}$ in upstream-first order and each $i \in \mathcal{I}$}
    \If{$\mathrm{OH}^{n,i} < s^{n,i}$ and $\mathrm{Out}^{n,i}$ is empty}
        \State $\mathrm{qty} \gets S^{n,i} - \mathrm{OH}^{n,i}$.
        \For{$u$ upstream supplier of $n$, sorted by total transit time}
            \State $q_{\mathrm{placed}} \gets \textsc{GreedyPack}(i,\ \min(\mathrm{OH}^{u,i},\ \mathrm{qty}),\ \mathrm{Path}(u, n))$.
            \If{$q_{\mathrm{placed}} > 0$}
                \State $\mathrm{OH}^{u,i} \mathrel{-}= q_{\mathrm{placed}}$;\quad $\mathrm{Out}^{n,i} \gets (t + \lceil \sum_e \tau_e \rceil,\ q_{\mathrm{placed}})$;\quad \textbf{break}.
            \EndIf
        \EndFor
    \EndIf
\EndFor
\Statex
\Statex \textbf{(7) Place $(s, S)$ orders at source nodes (random lead time).}
\For{$n \in \mathcal{N}_{\mathrm{src}}$ and $i \in \mathcal{I}$}
    \If{$\mathrm{OH}^{n,i} < s^{n,i}$ and $\mathrm{Out}^{n,i}$ is empty}
        \State Draw $X \sim \mathcal{N}(\mu^{n,i},\ (0.2\,\mu^{n,i})^2)$; $L \gets \max(1, \lceil X \rceil)$.
        \State $\mathrm{Out}^{n,i} \gets (t + L,\ S^{n,i} - \mathrm{OH}^{n,i})$.
    \EndIf
\EndFor
\end{algorithmic}
\vspace{2pt}\hrule height 0.6pt\vspace{8pt}
 
\subsubsection{Greedy first-fit packing}
\label{app:pack}
 
The same routine handles outbound shipping (sub-step 5 of Algorithm~\ref{alg:day}) and inter-warehouse pull (sub-step 6). A call attempts to place $Q$ units of a single item along a fixed path $P$, one unit at a time. For each unit, the routine walks $P$ edge by edge and selects the first container on that edge with enough residual volume. This keeps the order of units along the path consistent with the order in which they were requested.
 
\begin{algorithm}
\caption{\textsc{GreedyPack}: first-fit allocation of up to $Q$ units of item $i$ along a path $P$.}
\label{alg:pack}
\begin{algorithmic}[1]
\State \textbf{Input:} item $i$ with volume $v_i$; max units $Q$; path $P = (e_1, \ldots, e_k)$.
\State $\mathrm{placed} \gets 0$.
\For{$q = 1, \ldots, Q$}
    \State $\mathrm{slots} \gets [\,]$;\quad $\mathrm{ok} \gets \textsc{true}$.
    \For{$e \in P$}
        \State $j \gets$ smallest index with $\mathrm{containers}_e[j] \geq v_i$, or $\bot$ if none.
        \If{$j = \bot$}
            \State $\mathrm{ok} \gets \textsc{false}$;\quad \textbf{break}.
        \EndIf
        \State Append $(e, j)$ to $\mathrm{slots}$.
    \EndFor
    \If{not $\mathrm{ok}$}
        \State \textbf{break} \quad $\triangleright$ stop filling; do not retry a later unit
    \EndIf
    \For{$(e, j) \in \mathrm{slots}$}
        \State $\mathrm{containers}_e[j] \mathrel{-}= v_i$.
    \EndFor
    \State $\mathrm{placed} \gets \mathrm{placed} + 1$.
\EndFor
\State \Return $\mathrm{placed}$.
\end{algorithmic}
\end{algorithm}
 
\subsubsection{Intensity construction algorithm}
\label{app:demand-alg}
 
Algorithm~\ref{alg:demand} constructs the intensity tensor $\lambda$ used in \eqref{eq:intensity}. Of the five components it samples, four are drawn independently per item: yearly seasonal, weekly seasonal, AR(1) drift, and per-item burst train. The fifth, $G(t)$, is drawn once and reused across every item, with a per-item weight $g_i$. As noted in \S\ref{app:demand}, $G(t)$ is the only component that lifts every item's intensity simultaneously, and is the mechanism by which demand forecasting and bottleneck forecasting become coupled.
 
\begin{algorithm}
\caption{\textsc{BuildIntensity}: sample the five-component intensity $\lambda$.}
\label{alg:demand}
\begin{algorithmic}[1]
\State \textbf{Input:} item list $\mathcal{I}$; horizon $T$; seed.
\Statex
\State \textbf{Global macro-shock process $G(t)$.}
\State $G \gets 0_T$;\quad draw $N$ uniformly from $\{5, \ldots, 11\}$.
\For{$k = 1, \ldots, N$}
    \State Draw $t^G_k$ uniform on $\{0, \ldots, T-1\}$;\quad $\Delta^G_k$ uniform on $\{180, \ldots, 1099\}$;\quad $h^G_k \sim \mathcal{U}[0.20, 0.60]$.
    \State Add trapezoidal pulse $\pi(\cdot)$ with parameters $(t^G_k, \Delta^G_k, h^G_k)$ to $G$.
\EndFor
\Statex
\State \textbf{Per-item components.}
\For{$i \in \mathcal{I}$}
    \State $\bar{\lambda}_i \sim \mathcal{U}[80, 250]$.
    \State \textbf{Yearly:} $S_i(t) \gets a^{(y)}_{1,i} \sin(2\pi t / 365 + \phi_i) + a^{(y)}_{2,i} \sin(4\pi t / 365 + 0.7\, \phi_i)$
    \Statex \quad with $a^{(y)}_{1,i} \sim \mathcal{U}[0.12, 0.28]$, $a^{(y)}_{2,i} \sim \mathcal{U}[0.04, 0.10]$, $\phi_i \sim \mathcal{U}[0, 2\pi]$.
    \State \textbf{Weekly:} $W_i(t) \gets a^{(w)}_i \sin(2\pi (t \bmod 7) / 7 + \psi_i)$ with $a^{(w)}_i \sim \mathcal{U}[0.04, 0.10]$, $\psi_i \sim \mathcal{U}[0, 2\pi]$.
    \State \textbf{AR(1) drift:} $\phi^{\mathrm{AR}}_i \sim \mathcal{U}[0.9990, 0.9996]$, $\sigma_{\mathrm{AR}, i} \sim \mathcal{U}[0.008, 0.018]$;
    \Statex \quad $A_i(0) \sim \mathcal{N}(0, 0.10^2)$;\quad $A_i(t) \gets \mathrm{clip}\!\big[\phi^{\mathrm{AR}}_i A_i(t-1) + \mathcal{N}(0, \sigma^2_{\mathrm{AR}, i}),\ \pm 0.6\big]$.
    \State \textbf{Per-item spikes $P_i(t)$:} burst rate $r_i \sim \mathcal{U}[2 \cdot 10^{-4}, 10^{-3}]$; at each $t$ with $\mathrm{Bern}(r_i) = 1$, add a trapezoidal pulse of duration $\mathcal{U}\{30, \ldots, 179\}$ and height $\mathcal{U}[0.20, 0.70]$.
    \State \textbf{Global sensitivity:} $g_i \sim \mathcal{U}[0.4, 1.2]$.
    \State \textbf{Assemble:} $\lambda_{i,t} \gets \bar{\lambda}_i \cdot \max\!\big(0.08,\ 1 + S_i(t) + W_i(t) + A_i(t) + P_i(t) + g_i\, G(t)\big)$.
\EndFor
\State \Return $\lambda$.
\end{algorithmic}
\end{algorithm}

The user configures each simulation through two kinds of
parameters. \emph{Structural parameters} are fixed within a release
and define what dataset is being generated: the graph
$\mathcal{G}$, the catalogue size~$C$, and the time
horizon~$T$.
\emph{Scenario knobs} govern the operating regime of the
network and fall into three categories: \emph{demand
structure}, \emph{inventory}, and \emph{edge transport}.
Table~\ref{tab:scenario-knobs} lists all scenario knobs
with their baseline values; the complete parameter tables
and their mechanisms within the simulator are detailed in
Appendix~\ref{app:param-values}.
 
\begin{table}[h]
\centering
\small
\caption{Scenario knobs of \sysname with baseline values; full
distributions in Appendix~\ref{app:param-values}.}
\label{tab:scenario-knobs}
\begin{tabular}{@{}l p{0.55\linewidth} l@{}}
\toprule
Symbol & Description & Baseline \\
\midrule
\multicolumn{3}{@{}l}{\emph{Demand structure}} \\
\quad $\bar\lambda_i$           & per-item base demand intensity                                                                              & $\mathcal{U}[80, 250]$ \\
\quad $a^{(y)}_i, a^{(w)}_i$    & yearly and weekly seasonal amplitudes                                                                       & $a^{(y)}{\in}[0.04, 0.28]$, $a^{(w)}{\in}[0.04, 0.10]$ \\
\quad $\phi_i^{\mathrm{AR}}$    & AR(1) drift coefficient                                                                                     & $\mathcal{U}[0.9990, 0.9996]$ \\
\quad $\sigma_{\mathrm{AR},i}$  & AR(1) innovation scale                                                                                      & $\mathcal{U}[0.008, 0.018]$ \\
\quad $r_i$                     & per-item burst rate                                                                                         & $\mathcal{U}[2{\cdot}10^{-4}, 10^{-3}]$ \\
\quad $h^P_{i,k}$ & per-item burst height & $\mathcal{U}[0.20, 0.70]$ \\
\quad $N$                       & number of macro-shock events                                                                                & $\mathcal{U}\{5, \ldots, 11\}$ \\
\quad $h^G_k$                   & per-event macro-shock height                                                                                & $\mathcal{U}[0.20, 0.60]$ \\
\midrule
\multicolumn{3}{@{}l}{\emph{Inventory}} \\
\quad $(s_n, S_n)$              & per-node $(s,S)$ reorder threshold and target                                                               & node-specific \\
\quad $\mu^n$                   & mean lead time at source node $n$                                                                           & $3$ time units \\
\midrule
\multicolumn{3}{@{}l}{\emph{Edge transport}} \\
\quad $K_e$                     & per-step container count on edge $e$                                                                        & $3$ on every edge \\
\quad $\rho$                    & target load factor in last-mile back-solve                                                                  & $1.20$ \\
\quad $m$                       & proactive-shipping pipeline multiplier                                                                      & $7$ \\
\bottomrule
\end{tabular}
\end{table}

\section{The right state space and its consequences}
\label{sec:state-space-consequences}

Unlike existing resources that track only per-node
demand or inventory
\citep{wasi2024supplygraph, naik2025bullode}, the state
vector $\xi_t$ in \sysname carries outstanding orders,
scheduled arrivals, and smoothed demand estimates
alongside per-node inventory. These auxiliary components
lift the system into a space where three properties
emerge: the dynamics are Markovian
(\S\ref{sec:markov-consequence}), three discrete
conservation laws hold exactly on every sample path
(\S\ref{app:properties}), and the discrete laws align
with classical continuum transport and queueing
equations (\S\ref{sec:conservation-continuous}).

\subsection{Markov property}
\label{sec:markov-consequence}

Without the auxiliary components $\mathrm{Out}_t$,
$\mathrm{IT}_t$, and $\tilde\lambda_t$ in the state,
the next-step inventory would depend on a window of
past dispatches and lead-time draws, not on $\xi_t$
alone: the dynamics would require history and the Markov
property would fail. By including them, the system
closes into a Markov chain
$\xi_{t+1} = \Psi(\xi_t, y_t, L_t)$ whose transition
probability acts linearly on the empirical distribution
of the state (\S\ref{app:kernel})--the right space for
scalable simulation, forward UQ, and downstream
stochastic control.

The resulting state representation contains
$C(3|\mathcal{N}|+1)$ fixed scalar dimensions (for $|\mathcal{N}|$ nodes and
catalogue size $C$), together with a variable-length
in-transit list $\mathrm{IT}_t$. In this high-dimensional space of $$\mathrm{dimension} \geq C(3|\mathcal{N}|+1),$$ the dynamics become
Markovian: these components contain all information
required to determine the next step's state without
reliance on historical trajectories, enabling efficient
simulation of the digital twin over arbitrary horizons.
The full ISOMORPH release provides datasets at catalogue
sizes $C=50$ and $C=200$, corresponding to more than
$2{,}000$ and $8{,}000$ fixed scalar state dimensions,
respectively.

\subsection{Discrete conservation laws}
\label{app:properties}

The same state completeness that enables the Markov
property also reveals conservation structure structurally encoded
in the transition function $\Psi$. The network-internal
stock $I_{i,t}$ in~\eqref{eq:stock} sums over on-hand,
in-transit, and scheduled-arrival components---all
present in $\xi_t$ by design---so the conservation laws
below are expressible as closed-form functions of the
current state.

\begin{proposition}[Conservation of units and demand
events]
\label{prop:conservation}
For each item $i \in \mathcal{I}$ and time unit
$t \in \{0, \ldots, T-1\}$, let $u^e_{i,t}$ denote the
units of item $i$ placed onto edge $e$ at sub-steps
(5)--(6) of Algorithm~\ref{alg:day}. Define per-node
receipts and dispatches in $[t, t+1]$ as
\begin{align}
R^{n,i}_t &:=
\begin{cases}
q, & n \neq d^\star \text{ and } \mathrm{Out}^{n,i}_t = (t{+}1, q), \\
(A_{i,t} - B^{d^\star,i}_t)_+, & n = d^\star, \\
0, & \text{otherwise};
\end{cases} \label{eq:per-node-R} \\
D^{n,i}_t &:=
\begin{cases}
\sum_{(n, v) \in \mathcal{E}} u^{(n,v)}_{i,t}, & n \neq d^\star, \\
\min\bigl(\mathrm{OH}^{d^\star,i}_{(2)},\, y_{i,t}\bigr), & n = d^\star.
\end{cases} \label{eq:per-node-D}
\end{align}
Define the network-internal stock
\begin{equation}
    I_{i,t} := \sum_{n \in \mathcal{N}} \mathrm{OH}^{n,i}_t + \sum_{(u,v) \in \mathcal{E} , \\ v \neq d^\star} q^{u \to v}_{i,t} + \sum_{\tau > t} \mathrm{IT}_t[\tau, i],
    \label{eq:stock}
\end{equation}
where $q^{u \to v}_{i,t}$ is the per-edge in-transit
count defined in \S\ref{app:state-space} and
$\mathrm{IT}_t[\tau, i]$ is the multiplicity of arrivals
scheduled at $d^\star$ at time unit $\tau$ for item $i$.
Define the source arrivals between $t$ and $t+1$ as
\begin{equation}
    A^{\mathrm{src}}_{i,t} := \sum_{n \in \mathcal{N}_{\mathrm{src}}} \mathbf{1}\!\left\{\mathrm{Out}^{n,i}_t = (\tau, q),\ \tau = t+1\right\} \cdot q,
    \label{eq:source-arrivals}
\end{equation}
the mass arriving at non-destination nodes from sources
between time units $t$ and $t+1$.
With $A_{i,t} := \mathrm{IT}_t[t{+}1, i]$ and
$\mathrm{OH}^{d^\star,i}_{(2)} = \mathrm{OH}^{d^\star,i}_t
+ (A_{i,t} - B^{d^\star,i}_t)_+$ as defined
in~\eqref{eq:dest-arrivals}--\eqref{eq:dest-arrival-update},
define the total destination outflow as the sum of the
arrival-clears-backlog flow and the immediate-service
flow:
\begin{equation}
S_{i,t} \;:=\; \min\!\bigl(A_{i,t},\, B^{d^\star,i}_t\bigr) \;+\; \min\!\bigl(\mathrm{OH}^{d^\star,i}_{(2)},\, y_{i,t}\bigr).
\label{eq:served}
\end{equation}
Then the chain $(\xi_t)_{t \geq 0}$ satisfies, almost
surely for all $t \in \{0, \ldots, T-2\}$ and
$i \in \mathcal{I}$:
\begin{align}
    \text{(a) Per-node mass conservation:} \qquad & \mathrm{OH}^{n,i}_{t+1} = \mathrm{OH}^{n,i}_t + R^{n,i}_t - D^{n,i}_t, \quad \forall n \in \mathcal{N},
    \label{eq:per-node-mass} \\
    \text{(b) Global mass conservation:} \qquad & I_{i,t+1} = I_{i,t} + A^{\mathrm{src}}_{i,t} - S_{i,t},
    \label{eq:conservation} \\
    \text{(c) Backlog conservation:} \qquad & B_{t+1}^{d^\star, i} = B_t^{d^\star, i} + \big(y_{i,t} - \mathrm{OH}_t^{d^\star, i}\big)_+.
    \label{eq:backlog-thm}
\end{align}
\end{proposition}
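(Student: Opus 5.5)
The plan is a direct sub-step-by-sub-step bookkeeping argument. I would fix an item $i$ and a time $t$, condition on a realization $(y_t,L_t)$, and treat $\Psi$ as the fixed composition of the seven sub-steps of Algorithm~\ref{alg:day}. For each sub-step I would record how it changes $\mathrm{OH}^{n,i}$, $B^{d^\star,i}$, $\mathrm{Out}^{n,i}$, $\mathrm{IT}$ and the derived edge counts $q^{u\to v}_{i}$. Each of (a)--(c) then follows by summing these increments. Since the identities hold for every realization of the inputs, they hold almost surely.

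\textbf{Part (a).} I split into $n\neq d^\star$ and $n=d^\star$.

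For $n\neq d^\star$, only three sub-steps touch $\mathrm{OH}^{n,i}$:
\begin{itemize}
\item sub-step (1) adds $q$ exactly when $\mathrm{Out}^{n,i}_t=(t{+}1,q)$, and this is $R^{n,i}_t$;
\item sub-steps (5) and (6) each subtract the $q_{\mathrm{placed}}$ of a $\textsc{GreedyPack}$ call whose path starts at $n$;
\item no other sub-step modifies $\mathrm{OH}^{n,i}$.
\end{itemize}
The second bullet is the one step needing care. Every such unit is placed onto the first edge $(n,v)$ of its path, so the total subtracted equals $\sum_{(n,v)}u^{(n,v)}_{i,t}=D^{n,i}_t$.

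For $n=d^\star$, sub-step (2) adds $(A_{i,t}-B^{d^\star,i}_t)_+$ and sub-step (4) subtracts $\min(\mathrm{OH}^{d^\star,i}_{(2)},y_{i,t})$, which give (a) directly.

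\textbf{Part (b).} I would sum (a) over $n$ and add the change in the in-transit terms of \eqref{eq:stock}.

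\begin{itemize}
\item Every unit dispatched in (5)--(6) leaves some $\mathrm{OH}^{u,i}$ and enters either $q^{u\to v}_{i}$ (for $v\neq d^\star$) or $\mathrm{IT}$ (for $v=d^\star$). These flows cancel in the sum.
\item Internal receipts in sub-step (1) at intermediate nodes leave $q$ and enter $\mathrm{OH}$, so they also cancel.
\item Source receipts are not drawn from any tracked in-transit quantity, so they survive as $A^{\mathrm{src}}_{i,t}$.
\item At $d^\star$, the scheduled mass $A_{i,t}$ leaves $\mathrm{IT}$. Only $(A_{i,t}-B_t)_+$ of it enters $\mathrm{OH}$; the remaining $\min(A_{i,t},B_t)$ exits the system, as does the served quantity $\min(\mathrm{OH}_{(2)},y_{i,t})$. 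These two exits sum to $S_{i,t}$.
\end{itemize}

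\textbf{Part (c), the main obstacle.} I would compose \eqref{eq:dest-arrival-update} with \eqref{eq:backlog-update}, which gives
\[
B_{t+1}=(B_t-A_{i,t})_+ + \bigl(y_{i,t}-\mathrm{OH}_t-(A_{i,t}-B_t)_+\bigr)_+ .
\]
This reduces to the stated identity $B_{t+1}=B_t+(y_{i,t}-\mathrm{OH}_t)_+$ exactly when $A_{i,t}=0$. I would first try to show that $A_{i,t}=0$ is forced by the index convention: sub-step (2) reads $\mathrm{IT}_t[t,i]$, whereas \eqref{eq:dest-arrivals} uses $\mathrm{IT}_t[t{+}1,i]$. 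If the arrival at $t{+}1$ is in fact absorbed in the next transition, the term vanishes inside this step and (c) follows immediately.

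If that reconciliation fails, the honest conclusion is weaker. The stated identity then holds on the event $\{A_{i,t}=0\}$ (for example, on all steps with no destination receipts). It should be read with $\mathrm{OH}_t$ and $B_t$ taken as the post-arrival values $\mathrm{OH}_{(2)}$ and $B_{(2)}$.

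The key step is therefore checking the arrival-time indexing between Algorithm~\ref{alg:day} and \eqref{eq:dest-arrivals}; everything else is routine telescoping.
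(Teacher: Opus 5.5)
Parts (a) and (b) follow the paper's argument. You trace which sub-steps touch $\mathrm{OH}^{n,i}$, then sum over nodes so that internal transfers cancel and only $A^{\mathrm{src}}_{i,t}$ and $S_{i,t}$ survive. Your split $A_{i,t}=(A_{i,t}-B^{d^\star,i}_t)_+ + \min(A_{i,t},B^{d^\star,i}_t)$ at $d^\star$ is more explicit than the paper's.

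One point you share with the paper should be stated. The claim ``every such unit is placed onto the first edge $(n,v)$'' gives only one inclusion. \textsc{GreedyPack} loads each unit onto \emph{every} edge of a multi-hop path. For example, a Columbus$\to$NewYork dispatch occupies the edge Philadelphia$\to$NewYork without ever touching $\mathrm{OH}^{\mathrm{Philadelphia},i}$. So $\sum_{(n,v)}u^{(n,v)}_{i,t}=D^{n,i}_t$ requires reading $u^{(n,v)}_{i,t}$ as units of dispatches \emph{originating} at $n$. Likewise, $q^{u\to v}_{i,t}$ in (b) must count each in-flight unit once, per origin--endpoint pair, as your (b) bullet tacitly does. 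Counting per path edge instead makes (a) and (b) fail.

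On (c), your first reconciliation cannot work. The mismatch between $\mathrm{IT}_t[t,i]$ in sub-step (2) and $\mathrm{IT}_t[t{+}1,i]$ in \eqref{eq:dest-arrivals} is only a relabeling: Algorithm~\ref{alg:day} run with index $t{+}1$ is the transition $\xi_t\to\xi_{t+1}$. You already accept this shift in (a), where you match sub-step (1)'s test $\tau\le t$ with $\mathrm{Out}^{n,i}_t=(t{+}1,q)$. You accept it again in (b), where $A_{i,t}$ leaves $\mathrm{IT}$ during the step. So $A_{i,t}$ is exactly the mass absorbed in this transition, and it is positive whenever a shipment lands.

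You should therefore commit to your fallback: (c) as stated is false. It would force $B^{d^\star,i}_{t+1}\ge B^{d^\star,i}_t$, so backlog could never be cleared. For instance, $B_t=5$, $A_{i,t}=10$, $\mathrm{OH}_t=0$ gives $B_{t+1}=(y_{i,t}-5)_+$, not $5+y_{i,t}$. The stated identity holds iff $A_{i,t}=0$, or $B_t=0$ and $y_{i,t}\le\mathrm{OH}_t$. Your phrase ``exactly when $A_{i,t}=0$'' is therefore slightly too strong.

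What is provable is $B^{d^\star,i}_{t+1}=B^{d^\star,i}_{(2)}+(y_{i,t}-\mathrm{OH}^{d^\star,i}_{(2)})_+$, with $B^{d^\star,i}_{(2)}=(B^{d^\star,i}_t-A_{i,t})_+$. This is precisely what the paper's own proof derives from \eqref{eq:dest-arrival-update}--\eqref{eq:backlog-update} before calling the stated form ``immediate''. The defect lies in the statement, and your proposal locates it correctly where the paper's proof passes over it.
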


\begin{proof}
\textbf{Part (a).} Fix $n \in \mathcal{N}$ and
$i \in \mathcal{I}$. We trace changes to
$\mathrm{OH}^{n,i}$ across the seven sub-steps of
Algorithm~\ref{alg:day} during the transition from $t$
to $t+1$.

\emph{Case $n \neq d^\star$.}
Sub-step (1) increments $\mathrm{OH}^{n,i}$ by $q$ when
$\mathrm{Out}^{n,i}_t = (t{+}1, q)$, and by zero
otherwise; this matches $R^{n,i}_t$ as defined
in~\eqref{eq:per-node-R}.
Sub-steps (2) and (4) act only at $d^\star$ and leave
$\mathrm{OH}^{n,i}$ unchanged.
Sub-step (3) resets edge containers without modifying
any node's on-hand.
Sub-steps (5) and (6) decrement $\mathrm{OH}^{n,i}$ by
exactly the units placed onto $n$'s outgoing edges via
\textsc{GreedyPack}, i.e.,
$\sum_{(n,v) \in \mathcal{E}} u^{(n,v)}_{i,t}
= D^{n,i}_t$.
Sub-step (7) modifies $\mathrm{Out}^{n,i}$ for
$n \in \mathcal{N}_{\mathrm{src}}$ but does not modify
$\mathrm{OH}^{n,i}$.
Collecting:
$\mathrm{OH}^{n,i}_{t+1} = \mathrm{OH}^{n,i}_t
+ R^{n,i}_t - D^{n,i}_t$.

\emph{Case $n = d^\star$.}
Sub-step (1) acts only at non-destination nodes.
Sub-step (2) increments $\mathrm{OH}^{d^\star,i}$ by
$(A_{i,t} - B^{d^\star,i}_t)_+ = R^{d^\star,i}_t$
(the post-backlog residual of arriving units,
per~\eqref{eq:dest-arrival-update}).
Sub-step (3) resets edge containers.
Sub-step (4) decrements $\mathrm{OH}^{d^\star,i}$ by
$\min(\mathrm{OH}^{d^\star,i}_{(2)}, y_{i,t})
= D^{d^\star,i}_t$ (the served quantity, second term of
$S_{i,t}$ in~\eqref{eq:served}).
Sub-steps (5)--(7) do not modify
$\mathrm{OH}^{d^\star,i}$: the destination is excluded
from the warehouse loop in (5), is not an upstream
supplier in (6), and is not a source in (7).
Hence $\mathrm{OH}^{d^\star,i}_{t+1}
= \mathrm{OH}^{d^\star,i}_t + R^{d^\star,i}_t
- D^{d^\star,i}_t$.

\textbf{Part (b).} Summing Part~(a) over all nodes and
adding the in-transit and scheduled-arrival components:
routing along edges transfers units between
$\sum_n \mathrm{OH}^{n,i}$ and
$\sum_e q^{u \to v}_i$ without changing the total
$I_{i,t}$ defined in~\eqref{eq:stock}. The only
mechanisms that change $I_{i,t}$ are source arrivals
(sub-steps i and vii), which add
$A^{\mathrm{src}}_{i,t}$, and destination service
(sub-steps ii and iv), which removes $S_{i,t}$.
Therefore
$I_{i,t+1} - I_{i,t} = A^{\mathrm{src}}_{i,t} - S_{i,t}$
almost surely.

\textbf{Part~(c).} Immediate
from~\eqref{eq:dest-arrival-update}--\eqref{eq:backlog-update}
of \S\ref{app:replenish}: at sub-step~(ii), backlog is
updated to
$B^{d^\star,i}_{(2)} = (B^{d^\star,i}_t - A_{i,t})_+$;
at sub-step~(iv), the unfilled remainder
$(y_{i,t} - \mathrm{OH}^{d^\star,i}_{(2)})_+$ is added
to $B^{d^\star,i}_{(2)}$ to give $B^{d^\star,i}_{t+1}$.
No other sub-step modifies $B^{d^\star,i}$.
\end{proof}

All three identities are preserved exactly---not
approximately---because the simulator's transitions are
bookkeeping operations on integer counts, with no
rounding or stochastic loss inside the network. Any
deviation at any $(t, i)$ in a released run would
indicate a bug in the simulator, corruption in the
released files, or a mismatch between the model and its
algorithmic realization.

\paragraph{Conservation laws as structural invariants
of the DBN.}
All three laws hold pathwise---for every realization of
the random external inputs $(y_t, L_t)$, not only in
expectation. The two boundary terms of the global mass
conservation align with the two random input nodes in
the Markov chain (Figure~\ref{fig:pgm}):
$A^{\mathrm{src}}_{i,t}$ is determined by the lead-time
vector $L_t$ (which controls when source orders arrive),
and the service component of $S_{i,t}$ is determined by
the demand vector $y_t$ together with the on-hand state
at $d^\star$. The places where randomness enters the
chain are the same places where mass crosses the
boundary of the conserved domain. Internal flows
(routing, packing, inter-warehouse transfers) are
deterministic given $(y_t, L_t)$ and conservative by
construction. The conservation laws are therefore
structural invariants of the Markov chain: they
constrain every sample path the simulator can produce.

\subsection{Continuous-time fluid limit: coupled transport and queueing}
\label{sec:conservation-continuous}

This section highlights the mathematical structure of \sysname{} and connects it to the literature on queueing systems and their fluid limits. 
First, the three discrete conservation laws of
Proposition~\ref{prop:conservation} are not isolated
bookkeeping identities: they are the pre-limit forms of
classical continuum equations that govern the fluid limit
of the Markov chain. Consider a sequence of systems
indexed by a system-size parameter $N$, in which the
catalogue size, demand intensities, edge capacities, and
control thresholds $(s, S)$ are all scaled by $N$
simultaneously. Under this Kurtz-type
law-of-large-numbers
scaling~\citep{kurtz1970solutions}, each individual unit
becomes a vanishing fraction of the total, and the
rescaled density
$\rho^{N,n,i}(t) = N^{-1}\mathrm{OH}^{n,i}_{\lfloor tN\rfloor}$
converges in probability to the deterministic solution
of a coupled system: a transport equation on the graph
$\mathcal{G}$ governing inventory, and a
Skorokhod-reflected ODE at $d^\star$ governing backlog.
The discrete conservation laws hold at every finite $N$
and pass to the limit, making the continuum system their
deterministic counterpart.

At the discrete level, the simulator is a jump process
on a network of control volumes: each node
$n \in \mathcal{N}$ is a fixed control volume that
accumulates on-hand inventory $\mathrm{OH}^{n,i}_t$,
and the edges $\mathcal{E}$ are conduits that carry
discrete unit-valued jumps between control volumes.
The fluid limit smooths these integer-valued jumps into
continuous densities and fluxes.

\paragraph{Transport (inventory side).}
In the fluid limit, the per-node, per-item inventory
satisfies a delay-differential equation on
$\mathcal{G}$: for each node $n \in \mathcal{N}$ and
item $i \in \mathcal{I}$,
\begin{equation}
\partial_t \rho^{n,i}(t) = \sum_{u:(u,n)\in\mathcal{E}}
  \phi^{u\to n}_i(t - \tau_{u,n}) -
  \sum_{v:(n,v)\in\mathcal{E}} \phi^{n\to v}_i(t)
  + r^n_i(t)\,\mathbf{1}\{n \in \mathcal{N}_{\mathrm{src}}\}
  - s^n_i(t)\,\mathbf{1}\{n = d^\star\},
\label{eq:fluid-limit}
\end{equation}
subject to edge capacity constraints
$\sum_i v_i \phi^{u\to v}_i(t) \leq \kappa_{u,v}$.
This is the continuous form of the local continuity
equation $\partial_t \rho + \nabla \cdot j = \sigma$
on the graph, where the delayed inflows from upstream
edges play the role of flux divergence, and source
replenishment $r^n_i$ and destination service $s^n_i$
act as boundary source/sink terms.

The per-node mass conservation~\eqref{eq:per-node-mass}
is the time-discretized form of~\eqref{eq:fluid-limit}:
on-hand inventory $\mathrm{OH}^{n,i}_t$ is the local
density in the control volume at node $n$, receipts
$R^{n,i}_t$ and dispatches $D^{n,i}_t$ are the discrete
inflow and outflow across the control-volume boundary,
and their difference is the discrete divergence.
Integrating over the entire graph--summing over all
control volumes--yields the global mass
conservation~\eqref{eq:conservation}: internal fluxes
cancel in pairs, and only boundary terms survive, giving
the discrete analog of
$\frac{d}{dt}\int_\Omega \rho
= -\int_{\partial\Omega} j \cdot n
+ \int_\Omega \sigma$.

\paragraph{Queueing (demand side).}
In classical queueing theory, the queue length $Q$
evolves according to the fluid model
$dQ/dt = \lambda(t) - \mu(t)$, where $\lambda$ is the
arrival rate and $\mu$ is the service rate, subject to
the constraint $Q \geq 0$ (Skorokhod reflection). In
\sysname, the backlog $B^{d^\star,i}_t$ at the
destination $d^\star$ for each item $i \in \mathcal{I}$
plays the role of $Q$, demand $y_{i,t}$ plays the role
of $\lambda$, and fulfilled orders
$\min(\mathrm{OH}^{d^\star,i}_t, y_{i,t})$ play the
role of $\mu$. The discrete backlog
conservation~\eqref{eq:backlog-thm} is the
time-discretized form of this fluid model, with the
$(\cdot)_+$ operator enforcing non-negativity.

In the fluid limit, the rescaled backlog at the
destination $d^\star$ for each item
$i \in \mathcal{I}$ converges to the
Skorokhod-reflected ODE
\begin{equation}
\partial_t b^{d^\star,i}
= (\lambda_i - \rho^{d^\star,i})_+
- (A_i - b^{d^\star,i})_+,
\label{eq:fluid-backlog}
\end{equation}
where the first term adds unmet demand (arrivals
exceeding on-hand density) and the second term drains
backlog when shipments arrive at $d^\star$.

\paragraph{Coupling at the destination.}
The transport and queueing subsystems couple at
$d^\star$: the service term $S_{i,t}$ is simultaneously
the boundary outflow of the transport
equation~\eqref{eq:fluid-limit} and the service rate of
the queue~\eqref{eq:fluid-backlog}. The transport
network's ability to deliver units to $d^\star$
determines the queue's service rate, and the queue's
backlog state feeds back into dispatch targets through
the proactive shipping formula of sub-step~(5). This
coupling persists in both the discrete and continuum
settings.
Table~\ref{tab:conservation-correspondence} summarizes
the full correspondence.

\begin{table}[h]
\centering\small
\caption{Correspondence between the fluid-limit
equations and the discrete conservation laws of
Proposition~\ref{prop:conservation}. Each node
$n \in \mathcal{N}$ serves as a control volume; edges
carry discrete unit-valued jumps that become continuous
fluxes in the fluid limit.}
\label{tab:conservation-correspondence}
\begin{tabular}{@{}l l@{}}
\toprule
Fluid limit & Discrete (\sysname) \\
\midrule
\multicolumn{2}{@{}l}{\emph{(a) Per-node mass
  conservation~\eqref{eq:per-node-mass}: local
  continuity
  equation~\eqref{eq:fluid-limit}}} \\
\quad density $\rho^{n,i}(t)$ &
  on-hand inventory $\mathrm{OH}^{n,i}_t$ \\
\quad flux $\phi^{u \to v}_i$ &
  shipment flow $u^{e}_{i,t}$ along edge $e$ \\
\quad divergence &
  net outflow from control volume at node $n$:
  $D^{n,i}_t - R^{n,i}_t$
  (Eqs.~\ref{eq:per-node-D},~\ref{eq:per-node-R}) \\
\quad source/sink $r^n_i,\, s^n_i$ &
  $A^{\mathrm{src}}_{i,t}$~\eqref{eq:source-arrivals}
  (source inflow),
  $S_{i,t}$~\eqref{eq:served} (destination outflow) \\
\midrule
\multicolumn{2}{@{}l}{\emph{(b) Global mass
  conservation~\eqref{eq:conservation}: integrated
  form}} \\
\quad integrated density
  $\int_\Omega \rho$ &
  network-internal stock
  $I_{i,t}$~\eqref{eq:stock} \\
\quad boundary flux
  $\int_{\partial\Omega} j \cdot n$ &
  $A^{\mathrm{src}}_{i,t}$~\eqref{eq:source-arrivals}
  (in), $S_{i,t}$~\eqref{eq:served} (out) \\
\midrule
\multicolumn{2}{@{}l}{\emph{(c) Backlog
  conservation~\eqref{eq:backlog-thm}: reflected
  ODE~\eqref{eq:fluid-backlog}}} \\
\quad queue length $b^{d^\star,i}$ &
  backlog $B^{d^\star,i}_t$ \\
\quad arrival rate $\lambda_i$ &
  demand $y_{i,t}$ \\
\quad service rate &
  $\min(\mathrm{OH}^{d^\star,i}_t,\, y_{i,t})$ \\
\midrule
\multicolumn{2}{@{}l}{\emph{Coupling at $d^\star$:
  $S_{i,t}$ = transport outflow = queue service
  rate}} \\
\bottomrule
\end{tabular}
\end{table}

Each sub-step of Algorithm~\ref{alg:day} contributes a
jump of size $O(1/N)$ at rate $O(N)$ to the rescaled
generator, placing the chain in the regime where Kurtz's
theorem~\citep{kurtz1970solutions} applies. The $(s, S)$
replenishment policy introduces switching boundaries at
the reorder threshold, producing either a hybrid system
with Filippov solutions or, under a smoothed base-stock
control, a classical ODE; both are standard in the
fluid-network
literature~\citep{chen2001fundamentals,
mandelbaum1998state}. The edge capacity constraints
produce a complementarity structure--flow at capacity
implies zero slack at the controller--shared with fluid
models of multi-class queueing
networks~\citep{whitt2002stochastic}. A rigorous
formulation of the limit theorem and the associated
diffusion correction (CLT-type Brownian fluctuations
around $\rho$) is of independent mathematical, computational  and statistical interest.

\section{Released datasets}
\label{sec:datasets}

We describe the user input configurations and output
sequences for the released datasets in two cardinality
regimes ($C{=}50$ and $C{=}200$) in
\S\ref{sec:datasets-io}.
In \S\ref{sec:datasets-validation}, we validate the
datasets by confirming that a well-known domain-specific
phenomenon---the bullwhip effect---emerges at empirically
consistent magnitudes, and present two conservation laws for verifying sequence fidelity when
users modify the simulator.

\subsection{User input configurations and output sequences}
\label{sec:datasets-io}

The released datasets are configured through the
structural parameters and scenario knobs of
\S\ref{sec:twin}.

\paragraph{Structural parameters.}
All releases in this paper use the following configuration:
\begin{itemize}[leftmargin=1.4em,itemsep=2pt,topsep=2pt]
  \item \emph{Graph}: the 13-node U.S.\ network of
    Figure~\ref{fig:pgm}(a)---three sources
    $\{$San Francisco, St.\,Louis, Orlando$\}$, nine
    intermediate warehouses organized in five tiers
    (hub, three mid-level tiers, and last-mile), and
    destination NewYork. All three sources converge at
    Nashville and fan out through Atlanta; every shipment
    to NewYork passes through one of two last-mile edges
    (Philadelphia$\to$NewYork or Baltimore$\to$NewYork).
  \item \emph{Cardinality}: two regimes, $C{=}50$ and
    $C{=}200$. The two regimes share the same demand
    parameters, reorder thresholds $(s, S)$, routing, and
    packing rules; only the edge capacities differ.
    Upstream edge container volumes scale linearly
    with~$C$, so they are four times larger at $C{=}200$
    when the per-step container count~$K_e$ is held fixed.
    The two last-mile edges are sized by back-solving the
    realized mean intensity
    (\S\ref{app:params:backsolve}).
    The first 50 items at $C{=}200$ coincide with the
    entire $C{=}50$ release under the same seed, so
    $C{=}50$ is suitable for faster model development
    with fewer item channels while $C{=}200$ tests
    scalability to a larger catalogue.
  \item \emph{Horizon}: $T{=}52{,}560$ time units.
\end{itemize}

\paragraph{Scenario knobs.}
We construct six one-at-a-time sweeps---three perturbing
the demand process and three perturbing the network's
response---each across five settings (a shared baseline plus four perturbations), producing
$6 \times 5 = 30$ rollouts on the $C{=}50$ regime
(sweep definitions in Table~\ref{tab:scenario-sweeps},
App.~\ref{app:scenarios}); the $C{=}200$ regime uses
baseline settings only.
Their use for forward UQ is discussed in \S\ref{sec:uq}.

Each rollout produces a complete record of the network's
state and activity at every step.

\paragraph{Output sequences.}
Each rollout records the full network state trajectory:
per-item demand, service, and backlog at the destination;
per-node inventory levels; in-transit volumes; and
per-edge shipments with resolved paths. Per-file schemas are in 
Appendix~\ref{app:schema}.
In the TSF evaluation of \S\ref{sec:foundation}, we
forecast the per-item demand series from
\texttt{daily\_records.csv};
Figure~\ref{fig:baseline-overview} (App.~\ref{app:schema})
shows sample channels from the $C{=}50$ baseline and
their interactions during a demand spike.
Diverse dynamics emerge from different scenario
configurations (scenario definitions in
Table~\ref{tab:scenario-sweeps}, App.~\ref{app:scenarios};
instantiation in \S\ref{sec:uq});
Figure~\ref{fig:family-compact} illustrates this for a
single item across four scenarios and two output variables.

\begin{figure}[t]
\centering
\includegraphics[width=\linewidth]{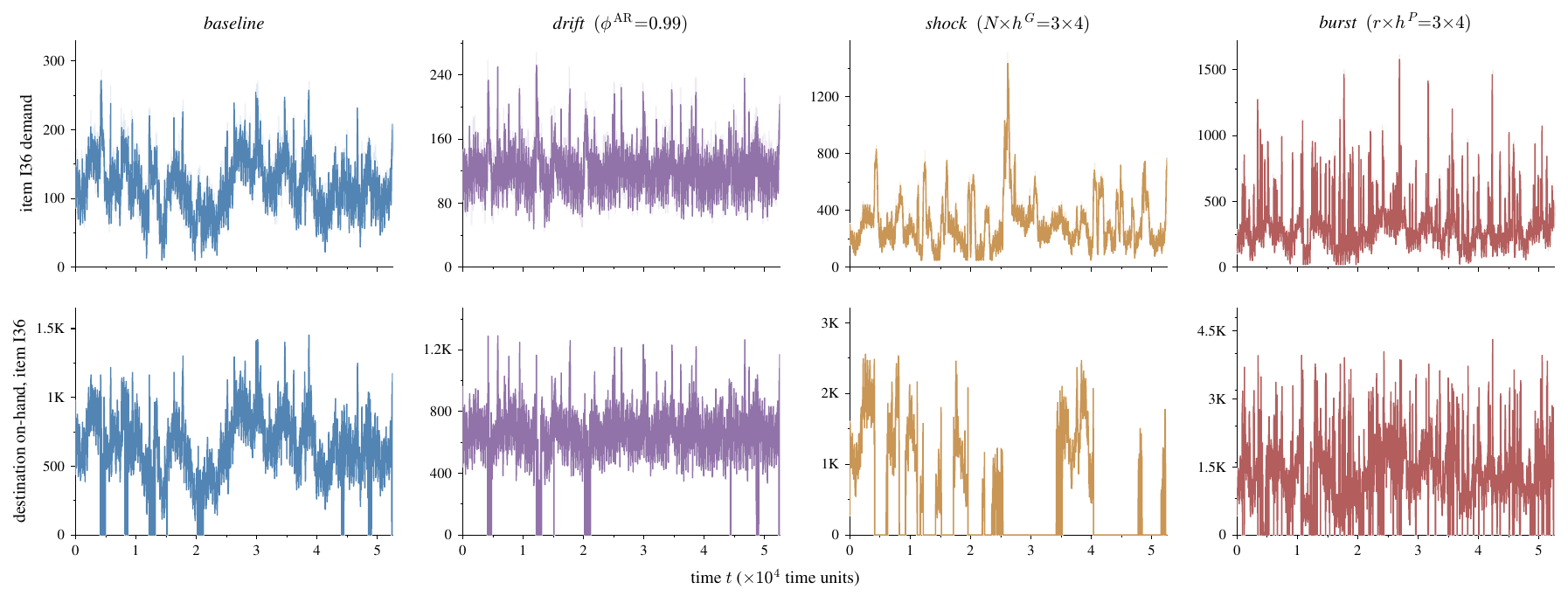}
\caption{Diversity of dynamics generated by the simulator
for a single item (I36) across four scenarios.
\textbf{Top row:} per-item demand series.
\textbf{Bottom row:} destination on-hand stock for the
same item. Each column is a different scenario: baseline,
drift ($\phi^{\mathrm{AR}}{=}0.99$), shock
($N{\times}h^G{=}3{\times}4$), and burst
($r{\times}h^P{=}3{\times}4$). Drift compresses the
demand range and depletes stock more persistently; shock
introduces large global surges that periodically drive
stock to zero; burst produces sharp per-item spikes with
corresponding stock crashes. Time on the $x$-axis is in
units of $10^4$ time steps (full horizon
$T = 52{,}560$).}
\label{fig:family-compact}
\end{figure}

\paragraph{Alignment with TSF benchmarks.}
The output sequences are structured to be directly usable
as time-series forecasting benchmark datasets in the same
sense as ETT, Weather, or Electricity, supporting both
univariate and multivariate evaluation across multiple
prediction horizons at fixed frequency.
The released datasets can serve as standalone benchmarks
or as a new logistics domain entry in multi-domain suites
such as GIFT-Eval \citep{aksu2024gifteval}.
Moreover, the simulator is fully open-source and modular,
so users can generate data at different scales (e.g.
catalogue size or graph structure), parameter settings,
control rules, and time resolutions tailored to their
own setting.

\subsection{Validation}
\label{sec:datasets-validation}

We validate the released data by checking whether it
exhibits the \emph{bullwhip effect}, the standard
empirical signature of supply-chain dynamics in which
demand variability amplifies as one moves upstream from
the customer-facing destination toward the factories.
\citet{cachon2007search} measured this across $74$ U.S.\
industries and found that amplification is real but
varies across industries and is not always monotonic
along the chain.
We compute the same amplification ratio at every
non-source node
(Table~\ref{tab:bullwhip-validation};
method in App.~\ref{app:bullwhip-method}).
All six tier-level monthly $\bar{B}$ values fall in
$[1.07, 1.57]$, inside the $[0.97, 1.90]$ range spanning
the 50th--90th percentiles of the empirical distribution.
Amplification is non-monotonic across tiers, peaking at
the last-mile nodes ($\bar{B} = 1.57$), consistent with
the empirical findings.
Per-node values are in
Table~\ref{tab:bullwhip-pernode}
(App.~\ref{app:additional:bullwhip}).

\begin{table}[h]
\centering
\small
\caption{Bullwhip amplification by tier on the $C{=}50$ release.
$\bar B = \mathrm{Var}(\mathrm{inflow})\,/\,\mathrm{Var}(\mathrm{outflow})$,
computed per (node, item) and averaged over items, then over
nodes in each tier.
The three source nodes are excluded because they have no inflow
on the released edges. The ``Daily'' column is inflated near the
destination because shipments arrive in bursts within a month;
the ``Monthly'' column smooths this out and is the granularity
comparable to \citet{cachon2007search}'s industry-level evidence.}
\label{tab:bullwhip-validation}
\begin{tabular}{@{}l p{0.34\linewidth} c c@{}}
\toprule
Tier         & Nodes                              & Daily $\bar B$ & Monthly $\bar B$ \\
\midrule
Destination  & NewYork                            & $9.03$  & $1.43$ \\
Tier-5 (LM)  & Baltimore, Philadelphia            & $13.00$ & $1.57$ \\
Tier-4       & Columbus, Richmond                 & $1.40$  & $1.30$ \\
Tier-3       & Charlotte, Chicago, Memphis        & $1.16$  & $1.24$ \\
Tier-2       & Atlanta                            & $1.29$  & $1.16$ \\
Hub          & Nashville                          & $0.97$  & $1.07$ \\
\bottomrule
\end{tabular}
\end{table}

The simulator also satisfies three conservation laws
(Proposition~\ref{prop:conservation},
\S\ref{app:properties}) that are structurally encoded in the
Markov transition function $\Psi$ by the right state
space design of \S\ref{sec:state-space-consequences}.
Because every unit and every demand event is accounted
for at every step, downstream metrics--fill rate,
service level, backlog counts--are well-defined and
recoverable from the released data. When users add
their own control rules to the simulator, these
conservation laws, or adapted versions of them, can
verify the fidelity of the modified implementation.

\section{Digital Twin meets TSF foundation models}
\label{sec:foundation}
Pairing a digital twin with TSF foundation models
unlocks several capabilities: datasets generated by the
simulator introduce the logistics domain to existing TSF
benchmarks (\S\ref{sec:foundation-protocol}); foundation models
can serve as fast surrogates for forward UQ over
parameter and model uncertainty; and this allows the foundation
models to be evaluated on their ability to generalize
under perturbations in the dynamics' parameters
(\S\ref{sec:uq}).
Experiments probing these capabilities are in
\S\ref{sec:results}.

\subsection{TSF benchmarking for logistics}
\label{sec:foundation-protocol}

\sysname introduces the logistics domain to TSF
benchmarking. Different scenario configurations produce
diverse sequential patterns---oscillatory reorder cycles,
persistent drift, global shocks, and localized bursts
(Figure~\ref{fig:family-compact})---that are largely
absent from existing benchmark collections. To test
whether these patterns pose new challenges, we evaluate
pretrained foundation models on our datasets and compare
with reference values from standard benchmarks
(Table~\ref{tab:foundation}).

We choose four models to cover the main architectural
choices in TSF foundation models:
\textbf{Chronos-T5} \citep{ansari2024chronos} (tokenized
values, encoder-decoder T5 backbone, real+KernelSynth
pretraining);
\textbf{Moirai-1.1-R} \citep{woo2024moirai} (masked
encoder, pretrained on LOTSA);
\textbf{TimesFM-2.0} \citep{das2024timesfm} (decoder-only,
trained on Google Trends and Wikipedia pageviews);
and \textbf{Lag-Llama} \citep{rasul2024lagllama}
(decoder-only with explicit lag features, multi-domain
corpus).

All four models see $L{=}512$ past steps and predict at
horizons $h \in \{1, 7, 14, 30\}$, which fall within
the short-to-medium range of GIFT-Eval's 97 task
configurations ($h \in [6, 900]$)
\citep{aksu2024gifteval}.
The forecast window slides across the test split
(the final $15\%$ of each release) in non-overlapping
steps of $30$, giving $|W|{=}262$ windows per release.
The forecast target is the per-item demand $y_{i,t}$.

We report Mean Absolute Scaled Error (MASE) following the GIFT-Eval evaluation protocol \citep{aksu2024gifteval}. At each rolling window, the model's MAE over the first $h$ steps is normalized by the seasonal naive forecasting error computed over the $L$-step context, $\frac{1}{L-m}\sum_{t}|y_{t}-y_{t-m}|$, where the seasonal period $m$ is determined by the sampling frequency of the dataset (see App.~\ref{app:mase-formal} for the formal definition). 
MASE provides standard reference values that have been fully tested across existing benchmarks, enabling direct
comparison of our dataset's difficulty against
established collections.

\subsection{Zero-shot forward UQ from the digital twin and foundation models}
\label{sec:uq}

Standard fixed datasets limit UQ to aleatoric
uncertainty---synthetic perturbations such as additive
noise, masking, or jitter. Because \sysname exposes the
full data-generating process, we can quantify the effect of two additional
sources of uncertainty on the foundation models'
predictions.
\emph{Parameter uncertainty} varies the numerical values
of scenario knobs (Table~\ref{tab:scenario-knobs}) while keeping the model structure
fixed; \emph{model uncertainty} substitutes structural
choices---such as the Poisson demand family
(Eq.~\ref{eq:intensity}, \S\ref{app:demand}),
rectangular pulse shapes for shocks and bursts
(Eqs.~\ref{eq:macro-shock}--\ref{eq:bursts}), or the
$(s, S)$ replenishment policy
(\S\ref{app:replenish})---while keeping knob values
fixed. In both cases, the simulator generates a new
rollout for each configuration, and the pretrained
foundation model is applied zero-shot, allowing us to
attribute prediction variability directly to the input
perturbations and produce confidence bands from multiple
configurations.

We perform forward UQ with respect to parameter
uncertainty by drawing $K{=}20$ configurations via
Latin-hypercube sampling over three demand-side scenario knobs within a $\pm 20\%$ sweep around the
baseline configuration:
$\varphi^{\mathrm{AR}} \in [0.99916, 0.99944]$,
$\rho_G \in [0.80, 1.20]$, and
$\rho_B \in [0.80, 1.20]$, with all three knobs varying
jointly in each rollout. Both the digital twin and the foundation models
produce forecast envelopes (confidence bands) over the prediction window;
Figure~\ref{fig:uq-envelope} shows the resulting
envelopes for item I01's demand series. Figure~\ref{fig:prediction-envelope-k15} further
compares individual trajectory-level predictions against
the digital twin realizations.

\subsection{Results}
\label{sec:results}

\paragraph{TSF performance.}
Table~\ref{tab:foundation} reports MASE for all four
models on the $C{=}50$ baseline and averaged across five
perturbed scenarios defined in \ref{fivescenarios} at four horizons ($C{=}200$
baseline in Table~\ref{tab:combined}; per-scenario breakdown in
Table~\ref{tab:scenario-mase-allh},
App.~\ref{app:hparams}).
At the shortest horizon ($h=1$), all models remain below their GIFT-Eval reference values (Ref.), which is expected because the limited prediction window provides little opportunity for domain-specific dynamics to manifest. As the horizon grows ($h\geq 14$), prediction errors rise beyond the reference values--even though our horizons remain in the short-to-medium range of GIFT-Eval's configurations ($h \in [6, 900]$)--and the gap widens further under perturbed scenarios.

Table~\ref{tab:combined} provides a more
controlled comparison: MASE values are computed on our
datasets in baseline scenarios ($C{=}50$ and $C{=}200$) and three standard
TSF benchmark datasets (ETTh1, Electricity, and Weather)
under identical evaluation settings, where the seasonal period used in MASE is determined by the frequency of each dataset.  Our logistics
datasets' MASE values surpass those of the other
benchmark datasets in almost all models regardless of
the prediction horizon~$h$.

Both results point to potential gains from fine-tuning
and incorporation of \sysname into benchmark datasets
such as GIFT-Eval as a new logistics domain.

\begin{table}[t]
\centering\small
\caption{Zero-shot forecasting error (MASE \eqref{eq:gift-mase}) on the $C{=}50$ datasets for four TSF
foundation models at four prediction horizons~$h$ (context length $L=512$).
Baseline values are shown first, with the geometric mean MASE
across five demand-side scenarios in parentheses
(per-scenario breakdown in
Table~\ref{tab:scenario-mase-allh},
App.~\ref{app:hparams}).
The Ref.\ column provides each model's aggregate MASE
from the public GIFT-Eval leaderboard for comparison.
Our prediction horizons ($h \leq 30$) fall within the
short-to-medium range of GIFT-Eval's configurations
($h \in [6, 900]$).
Bold entries exceed the model's own reference value,
suggesting increasing difficulty at longer horizons on
this domain.}
\label{tab:foundation}
\begin{tabular}{@{}l cccc |c@{}}
\toprule
Model & $h{=}1$ & $h{=}7$ & $h{=}14$ & $h{=}30$
      & Ref. \\
\midrule
Chronos
  & $0.777\;(0.799)$ & $0.822\;(\mathbf{0.896})$
  & $\mathbf{0.880}\;(\mathbf{1.022})$
  & $\mathbf{1.016}\;(\mathbf{1.313})$ & $0.876$ \\
Moirai
  & $0.789\;(0.830)$ & $0.835\;(\mathbf{0.924})$
  & $0.897\;(\mathbf{1.046})$
  & $\mathbf{1.038}\;(\mathbf{1.322})$ & $0.901$ \\
TimesFM
  & $0.746\;(\mathbf{0.779})$ & $\mathbf{0.790}\;(\mathbf{0.878})$
  & $\mathbf{0.850}\;(\mathbf{1.003})$
  & $\mathbf{0.987}\;(\mathbf{1.279})$ & $0.758$ \\
Lag-Llama
  & $1.035\;(1.191)$ & $\mathbf{1.299}\;(\mathbf{1.696})$
  & $\mathbf{1.407}\;(\mathbf{1.913})$
  & $\mathbf{1.619}\;(\mathbf{2.263})$ & $1.228$ \\
\bottomrule
\end{tabular}
\end{table}

\begin{table}[t]
\centering\small
\caption{Zero-shot forecasting error (MASE \eqref{eq:gift-mase}) on \textsc{Isomorph} ($C{=}50$, $C{=}200$) and three standard TSF benchmarks (ETTh1, Electricity, Weather). Results are evaluated under the same protocol as Table~\ref{tab:foundation}, with the seasonal period determined by each dataset's sampling frequency. Average is the geometric mean across prediction horizons.}
\label{tab:combined}
\begin{tabular}{@{}l ccccc@{}}
\toprule
Model & $h{=}1$ & $h{=}7$ & $h{=}14$ & $h{=}30$ & Average \\
\midrule
\multicolumn{6}{@{}l}{\emph{\textsc{Isomorph} ($C{=}50$; 50 channels, daily)} \hfill\textit{ours}} \\
\quad Chronos & 0.777 & 0.822& 0.880 & 1.016 & 0.869 \\
\quad Moirai & 0.789 & 0.835 & 0.897 & 1.038 & 0.885 \\
\quad TimesFM & 0.746& 0.790 & 0.850 & 0.987 & 0.839\\
\quad Lag-Llama & 1.035& 1.299 & 1.407 & 1.619 & 1.323 \\
\midrule
\multicolumn{6}{@{}l}{\emph{\textsc{Isomorph} ($C{=}200$; 200 channels, daily)} \hfill\textit{ours}} \\
\quad Chronos & 0.775 & 0.818 & 0.879 & 1.020 & 0.868\\
\quad Moirai & 0.793 & 0.839 & 0.901 & 1.045 & 0.890\\
\quad TimesFM & 0.742 & 0.787 & 0.846 & 0.987 & 0.836 \\
\quad Lag-Llama & 1.035 & 1.317 & 1.424 & 1.636 & 1.335 \\
\midrule
\multicolumn{6}{@{}l}{\emph{ETTh1 (7 channels, hourly)}} \\
\quad Chronos & 0.519 & 0.694 & 0.814 & 0.890& 0.715\\
\quad Moirai & 0.582 & 0.792& 0.894 & 0.949& 0.791\\
\quad TimesFM & 0.572 & 0.752 & 0.848 & 0.916 & 0.760\\
\quad Lag-Llama & 0.693 & 0.966 & 1.048 & 1.133 & 0.944\\
\midrule
\multicolumn{6}{@{}l}{\emph{Electricity (321 channels, hourly)}} \\
\quad Chronos & 0.508 & 0.643 & 0.704 & 0.767 & 0.648 \\
\quad Moirai & 0.754 & 0.862& 0.931 & 0.990 & 0.880 \\
\quad TimesFM & 0.548 & 0.677 & 0.722 & 0.762 & 0.672 \\
\quad Lag-Llama & 1.069 & 1.501& 1.590 & 1.701& 1.443\\
\midrule
\multicolumn{6}{@{}l}{\emph{Weather (21 channels, 10-min)}} \\
\quad Chronos & 0.490& 0.559& 0.613 & 0.799 & 0.605 \\
\quad Moirai & 0.448 & 0.529 & 0.593& 0.812& 0.581 \\
\quad TimesFM & 0.498 & 0.469 & 0.446 & 0.527 & 0.484 \\
\quad Lag-Llama & 0.905 & 1.157 & 1.246 & 1.395 & 1.162 \\
\bottomrule
\end{tabular}
\end{table}

\paragraph{Zero-shot forward UQ.}
Figure~\ref{fig:uq-envelope} shows forecast envelopes (confidence bands) from
the digital twin and four foundation models.
Across all four models and forecast
horizons, the foundation-model forecast envelopes cover
similar regions to the digital twin forecast envelope,
and the pointwise median forecasts track the digital
twin pointwise median closely.
This demonstrates that zero-shot foundation models can
serve as effective proxies for propagating parameter
uncertainty through the simulator without retraining.
Compared to the $K$ simulator runs, foundation model
inference in the zero-shot setting adds negligible cost, following the spirit of
neural-network-surrogate UQ \citep{taverniers2021mutual}.
Additional forward UQ results under broader Latin-hypercube sampling ranges
over three demand-side knobs --- burst height
$h^P \in [0.5, 2.0]$, macro-shock height
$h^G \in [0.5, 2.0]$, and AR(1) drift coefficient
$\phi^{\mathrm{AR}} \in [0.95, 0.999]$ ---
show consistent behavior in
Figure~\ref{fig:uq-envelope-multi}
(App.~\ref{app:uq-additional}).

At the trajectory level, however, the individual
foundation-model predictions exhibit smoother temporal
trajectories than the digital twin realizations
Figure~\ref{fig:prediction-envelope-k15}. Thus, the foundation
models capture the uncertainty envelope and median
behavior well, but tend to attenuate short-term temporal
fluctuations in individual rollouts.

\begin{figure}
\centering
\includegraphics[width=\linewidth]{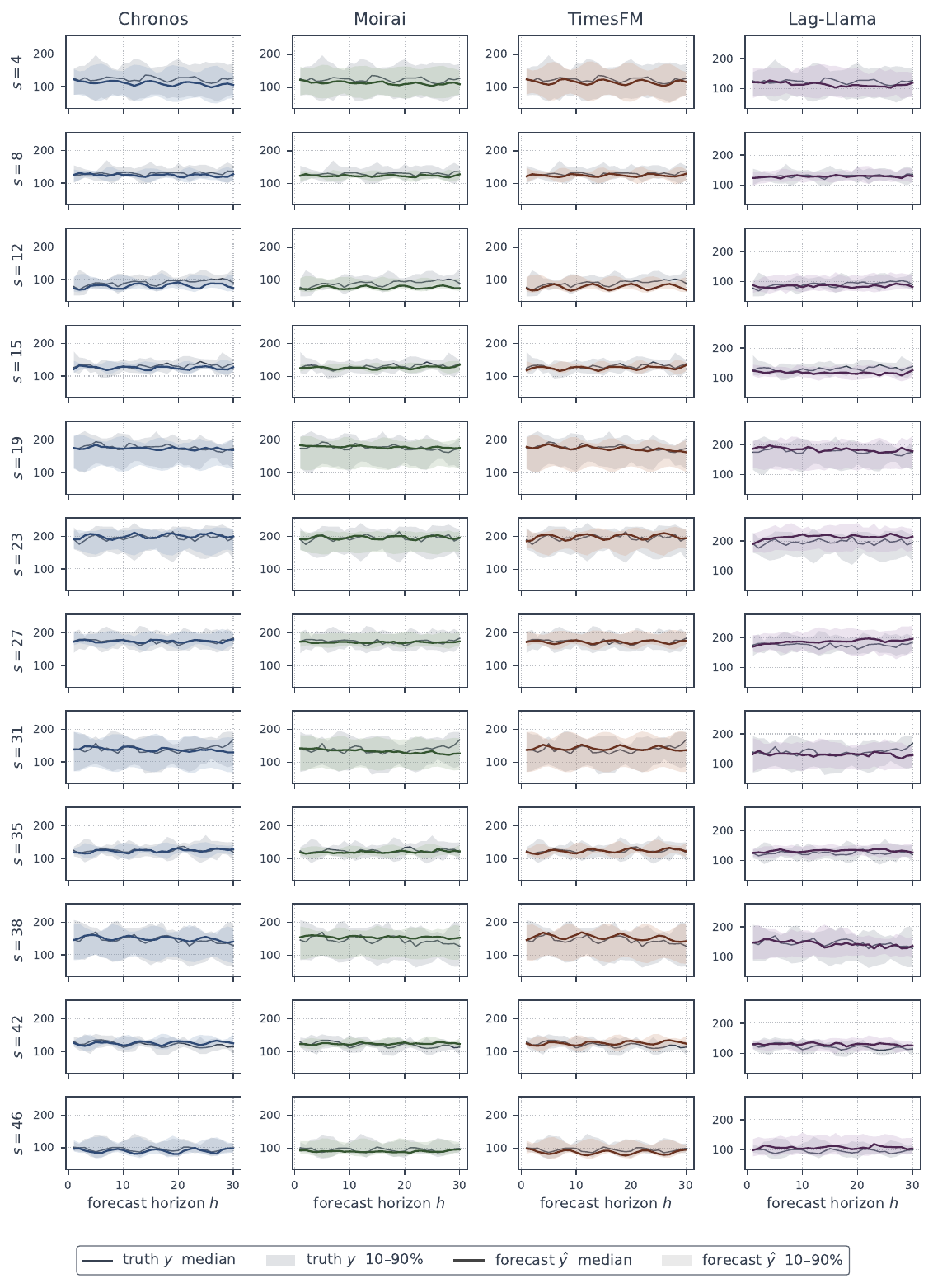}
\caption{Forward UQ forecast envelopes for item I01
across 12 representative forecast instances (rows),
each corresponding to a different rolling forecast
window in time, and four foundation models (columns).
The shaded bands span $K{=}20$ Latin-hypercube rollouts
over three demand-side scenario knobs within a
$\pm 20\%$ sweep around baseline
($\varphi^{\mathrm{AR}} \in [0.99916, 0.99944]$,
$\rho_G \in [0.80, 1.20]$,
$\rho_B \in [0.80, 1.20]$).
Median trajectories are computed pointwise across the
$K=20$ sampled scenarios; see
Fig.~\ref{fig:prediction-envelope-k15}  for individual
digital twin realizations relative to the smoother
foundation model forecast trajectories. 
Legend: colored line = foundation model median forecast;
dark gray line = digital twin median; light colored
band = foundation model forecast envelope; light gray
band = digital twin forecast envelope.}
\label{fig:uq-envelope}
\end{figure}

\begin{figure}
    \centering
    \includegraphics[width=1.0\linewidth]{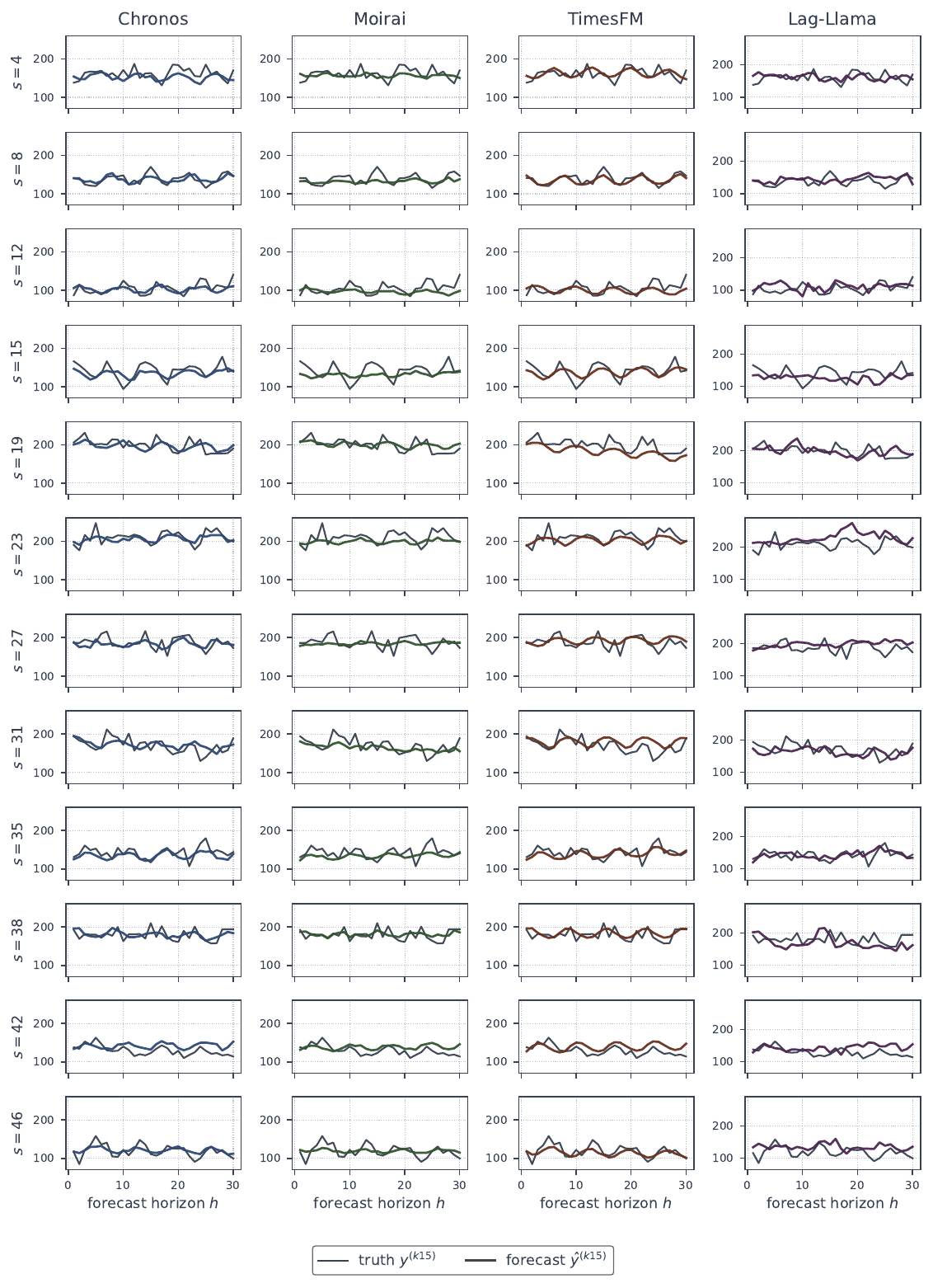}
   \caption{Representative forecast trajectories for item I01
under a single sampled scenario. Rows correspond to
12 different rolling forecast instances,
and columns correspond to four foundation models
(Chronos, Moirai, TimesFM, Lag-Llama). Each panel
compares the model forecast $\hat{y}^{(k15)}$ with the
digital-twin-generated realization $y^{(k15)}$ over a
30-step prediction horizon. The scenario $k=15$ is a
representative rollout whose forecast error is close to
the median among the $K=20$ Latin-hypercube-sampled
scenarios used to construct the uncertainty envelopes
in Fig.~\ref{fig:uq-envelope}. The scenarios are
generated by varying three demand-side knobs within a
$\pm 20\%$ sweep around the baseline configuration. At the trajectory level, however, the individual
foundation-model predictions exhibit smoother temporal
trajectories than the digital twin realizations.}
\label{fig:prediction-envelope-k15}
\end{figure}

\subsection{Some potential limitations}
\label{sec:limitations}

The released datasets use a fixed set of structural
parameters -- a single 13-node U.S.\ network, two
catalogue sizes, and one time horizon -- with variation
limited to the scenario knobs of
Table~\ref{tab:scenario-knobs}. The simulator is
open-source and accepts arbitrary user-supplied graphs,
so extending to larger or differently structured networks
is straightforward.
All releases share the same transition rules: $(s, S)$
replenishment, Dijkstra routing, and greedy first-fit
packing (Algorithm~\ref{alg:day}). The modular codebase
allows each of these to be replaced independently, and
the conservation laws of
Proposition~\ref{prop:conservation} (or adapted versions)
can verify fidelity of any modified implementation.

We evaluate the four foundation models on \sysname
releases only; a side-by-side comparison on a real
logistics dataset is not included, as proprietary access
restrictions make such data difficult to obtain. The
evaluation targets the per-item demand series $y_{i,t}$
exclusively. Extending the forecast targets to wider
network state variables -- such as edge utilization, fill
rate, or backlog -- requires the user to design the
appropriate problem formulation.


\section{Release and maintenance}
\label{sec:release}

The simulator code, evaluation pipeline, and data
loaders are released under MIT at
\url{https://github.com/tuhinsahai/ISOMORPH};
the datasets used in this work are available from the authors upon request. An interactive stress-test demo
on the 13-node U.S. network with three disruption
scenarios is available at
\url{https://huggingface.co/spaces/HyeminGu/ISOMORPH-demo}.
Bug
reports and feature requests use the repository's issue tracker;
the corresponding author will respond to issues for at least two
years after publication. 

\section{Conclusion and outlook}
\label{sec:conclusion}

We presented \sysname, an open-source digital twin of a
multi-echelon logistics network whose every parameter is
physically interpretable and user-configurable. 
Supply-chain dynamics are nonlinear and
history-dependent, mixing continuous evolution with
abrupt regime changes.
By augmenting per-node on-hand
inventory with outstanding orders, in-transit shipments,
and a smoothed demand estimate, the simulator closes the
dynamics as a Markov chain on a high-dimensional but
tractable state space, with a deterministic transition
function driven by two random inputs. The resulting transition kernel acts
linearly on the state distribution, enabling tractable simulation over
arbitrary horizons and topologies. Moreover, the chain satisfies pathwise
conservation laws that can serve as
verification tools when users modify the codebase.
We released datasets at two cardinalities ($C{=}50$ and
$C{=}200$) together with 30 scenario rollouts and 20
Latin-hypercube perturbations for forward UQ, validated by reproducing the bullwhip effect at
empirically consistent magnitudes.

Pairing the digital twin with four TSF foundation models
demonstrates two capabilities. First, the released
datasets introduce the logistics domain to TSF
benchmarking: zero-shot MASE values exceed public
GIFT-Eval references at moderate prediction horizons
($h \ge 14$), suggesting that logistics-domain dynamics
pose challenges not covered by current benchmark
collections. Second, propagating parameter uncertainty
through the simulator yields forecast envelopes that all
four models capture well in the zero-shot setting,
demonstrating that pretrained foundation models can serve
as effective proxies for forward UQ without retraining.

\paragraph{Outlook.}
Several directions can extend the present work.
On the simulator side, richer shock models (correlated
multi-region events, targeted supplier failures),
additional per-node covariates (weather, labor events),
and alternative control policies would broaden the range
of generatable dynamics.
On the evaluation side, extending forecast targets to
network state variables (edge utilization, fill rate,
backlog), adopting probabilistic metrics (CRPS, quantile
loss), and instantiating model uncertainty through
structural substitutions in the simulator are natural
next steps.
Two application directions follow from the pairing:
fine-tuning a foundation model on \sysname-generated data
to produce a logistics-domain surrogate, and submitting
\sysname datasets to GIFT-Eval
\citep{aksu2024gifteval} as a new logistics domain entry.
Finally, the three conservation laws impose algebraic
relations across demand, service, and backlog channels
that reflect the coupled transport-queueing structure of
the simulator. Structure-aware TSF architectures could
be trained to respect these constraints, connecting to
established directions in physics-informed
forecasting~\citep{beucler2021enforcing, naik2025bullode}
and fluid-limit theory for jump
processes~\citep{kurtz1970solutions,
chen2001fundamentals}.

\bibliographystyle{plainnat}
\bibliography{references}

@article{taverniers2021mutual,
  title   = {Mutual information for explainable deep learning of multiscale systems},
  author  = {Taverniers, S{\o}ren and Hall, Eric J. and Katsoulakis, Markos A. and Tartakovsky, Daniel M.},
  journal = {Journal of Computational Physics},
  volume  = {444},
  pages   = {110551},
  year    = {2021},
}

@article{beucler2021enforcing,
  author  = {Beucler, Tom and Pritchard, Michael and Rasp, Stephan and Ott, Jordan and Baldi, Pierre and Gentine, Pierre},
  title   = {Enforcing Analytic Constraints in Neural Networks Emulating Physical Systems},
  journal = {Physical Review Letters},
  volume  = {126},
  number  = {9},
  pages   = {098302},
  year    = {2021},
}

@misc{naik2025bullode,
      title={BULL-ODE: Bullwhip Learning with Neural ODEs and Universal Differential Equations under Stochastic Demand}, 
      author={Nachiket N. Naik and Prathamesh Dinesh Joshi and Raj Abhijit Dandekar and Rajat Dandekar and Sreedath Panat},
      year={2025},
      eprint={2509.18105},
      archivePrefix={arXiv},
      primaryClass={cs.LG},
      url={https://arxiv.org/abs/2509.18105}, 
}

@inproceedings{emami2023buildingsbench,
title={BuildingsBench: A Large-Scale Dataset of 900K Buildings and Benchmark for Short-Term Load Forecasting},
author={Patrick Emami and Abhijeet Sahu and Peter Graf},
booktitle={Thirty-seventh Conference on Neural Information Processing Systems Datasets and Benchmarks Track},
year={2023},
url={https://openreview.net/forum?id=c5rqd6PZn6}
}

@misc{wasi2024supplygraph,
      title={SupplyGraph: A Benchmark Dataset for Supply Chain Planning using Graph Neural Networks}, 
      author={Azmine Toushik Wasi and MD Shafikul Islam and Adipto Raihan Akib},
      year={2024},
      eprint={2401.15299},
      archivePrefix={arXiv},
      primaryClass={cs.LG},
      url={https://arxiv.org/abs/2401.15299},
}

@misc{hubbs2020orgym,
      title={OR-Gym: A Reinforcement Learning Library for Operations Research Problems}, 
      author={Christian D. Hubbs and Hector D. Perez and Owais Sarwar and Nikolaos V. Sahinidis and Ignacio E. Grossmann and John M. Wassick},
      year={2020},
      eprint={2008.06319},
      archivePrefix={arXiv},
      primaryClass={cs.AI},
      url={https://arxiv.org/abs/2008.06319}, 
}

@article{ansari2024chronos,
title={Chronos: Learning the Language of Time Series},
author={Abdul Fatir Ansari and Lorenzo Stella and Ali Caner Turkmen and Xiyuan Zhang and Pedro Mercado and Huibin Shen and Oleksandr Shchur and Syama Sundar Rangapuram and Sebastian Pineda Arango and Shubham Kapoor and Jasper Zschiegner and Danielle C. Maddix and Hao Wang and Michael W. Mahoney and Kari Torkkola and Andrew Gordon Wilson and Michael Bohlke-Schneider and Bernie Wang},
journal={Transactions on Machine Learning Research},
year={2024},
url={https://openreview.net/forum?id=gerNCVqqtR},
}

@inproceedings{woo2024moirai,
  title={Unified Training of Universal Time Series Forecasting Transformers},
  author={Woo, Gerald and Liu, Chenghao and Kumar, Akshat and Xiong, Caiming and Savarese, Silvio and Sahoo, Doyen},
  booktitle={Forty-first International Conference on Machine Learning},
  year={2024}
}

@misc{das2024timesfm,
      title={A decoder-only foundation model for time-series forecasting}, 
      author={Abhimanyu Das and Weihao Kong and Rajat Sen and Yichen Zhou},
      year={2024},
      eprint={2310.10688},
      archivePrefix={arXiv},
      primaryClass={cs.CL},
      url={https://arxiv.org/abs/2310.10688}, 
}

@inproceedings{rasul2024lagllama,
title={{Lag-Llama}: Towards Foundation Models for Time Series Forecasting},
author={Kashif Rasul and Arjun Ashok and Andrew Robert Williams and Arian Khorasani and George Adamopoulos and Rishika Bhagwatkar and Marin Bilo{\v{s}} and Hena Ghonia and Nadhir Hassen and Anderson Schneider and Sahil Garg and Alexandre Drouin and Nicolas Chapados and Yuriy Nevmyvaka and Irina Rish},
booktitle={R0-FoMo: Robustness of Few-shot and Zero-shot Learning in Large Foundation Models},
year={2023},
url={https://openreview.net/forum?id=jYluzCLFDM}
}

@inproceedings{zhou2021informer,
  author    = {Haoyi Zhou and
               Shanghang Zhang and
               Jieqi Peng and
               Shuai Zhang and
               Jianxin Li and
               Hui Xiong and
               Wancai Zhang},
  title     = {Informer: Beyond Efficient Transformer for Long Sequence Time-Series Forecasting},
  booktitle = {The Thirty-Fifth {AAAI} Conference on Artificial Intelligence, {AAAI} 2021, Virtual Conference},
  volume    = {35},
  number    = {12},
  pages     = {11106--11115},
  publisher = {{AAAI} Press},
  year      = {2021},
}

@inproceedings{godahewa2021monash,
title={Monash Time Series Forecasting Archive},
author={Rakshitha Wathsadini Godahewa and Christoph Bergmeir and Geoffrey I. Webb and Rob Hyndman and Pablo Montero-Manso},
booktitle={Thirty-fifth Conference on Neural Information Processing Systems Datasets and Benchmarks Track (Round 2)},
year={2021},
url={https://openreview.net/forum?id=wEc1mgAjU-}
}

@article{m4comp,
title = {The {M4} Competition: 100,000 time series and 61 forecasting methods},
journal = {International Journal of Forecasting},
volume = {36},
number = {1},
pages = {54--74},
year = {2020},
url = {https://www.sciencedirect.com/science/article/pii/S0169207019301128},
author = {Spyros Makridakis and Evangelos Spiliotis and Vassilios Assimakopoulos},
}

@article{m5comp,
title = {M5 accuracy competition: Results, findings, and conclusions},
journal = {International Journal of Forecasting},
volume = {38},
number = {4},
pages = {1346--1364},
year = {2022},
url = {https://www.sciencedirect.com/science/article/pii/S0169207021001874},
author = {Spyros Makridakis and Evangelos Spiliotis and Vassilios Assimakopoulos},
}

@misc{aksu2024gifteval,
  title        = {{GIFT}-Eval: A Benchmark for General Time Series Forecasting Model Evaluation},
  author       = {Taha Aksu and Gerald Woo and Juncheng Liu and Xu Liu and Chenghao Liu and Silvio Savarese and Caiming Xiong and Doyen Sahoo},
  year         = {2024},
  eprint       = {2410.10393},
  archivePrefix= {arXiv},
  primaryClass = {cs.LG},
  url={https://arxiv.org/abs/2410.10393}, 
}

@misc{wang2024tslib,
  title     = {Time-Series-Library},
  author    = {Wu, Haixu and others},
  note      = {\url{https://github.com/thuml/Time-Series-Library}},
  year      = {2024},
}

@inproceedings{dooley2023forecastpfn,
author = {Dooley, Samuel and Khurana, Gurnoor Singh and Mohapatra, Chirag and Naidu, Siddartha and White, Colin},
title = {ForecastPFN: synthetically-trained zero-shot forecasting},
year = {2023},
booktitle={Advances in Neural Information Processing Systems},
}

@misc{arief2026deepbullwhip,
      title={Deepbullwhip: An Open-Source Simulation and Benchmarking for Multi-Echelon Bullwhip Analyses}, 
      author={Mansur M. Arief},
      year={2026},
      eprint={2604.13478},
      archivePrefix={arXiv},
      primaryClass={math.OC},
      url={https://arxiv.org/abs/2604.13478}, 
}

@article{watsonparris2022climatebench,
author = {Watson-Parris, D. and Rao, Y. and Olivié, D. and Seland, Ø. and Nowack, P. and Camps-Valls, G. and Stier, P. and Bouabid, S. and Dewey, M. and Fons, E. and Gonzalez, J. and Harder, P. and Jeggle, K. and Lenhardt, J. and Manshausen, P. and Novitasari, M. and Ricard, L. and Roesch, C.},
title = {ClimateBench v1.0: A Benchmark for Data-Driven Climate Projections},
journal = {Journal of Advances in Modeling Earth Systems},
volume = {14},
number = {10},
pages = {e2021MS002954},
url = {https://agupubs.onlinelibrary.wiley.com/doi/abs/10.1029/2021MS002954},
year = {2022},
}

@article{cachon2007search,
author = {Cachon, Gérard and Randall, Taylor and Schmidt, Glen},
year = {2007},
pages = {457--479},
title = {In Search of the Bullwhip Effect},
volume = {9},
journal = {Manufacturing \& Service Operations Management},
}

@article{kurtz1970solutions,
  author  = {Kurtz, Thomas G.},
  title   = {Solutions of ordinary differential equations
             as limits of pure jump {M}arkov processes},
  journal = {Journal of Applied Probability},
  volume  = {7},
  number  = {1},
  pages   = {49--58},
  year    = {1970},
}

@book{chen2001fundamentals,
  author    = {Chen, Hong and Yao, David D.},
  title     = {Fundamentals of Queueing Networks:
               Performance, Asymptotics, and Optimization},
  series    = {Stochastic Modelling and Applied Probability},
  publisher = {Springer},
  year      = {2001},
}

@article{mandelbaum1998state,
  author  = {Mandelbaum, Avi and Pats, Gennady},
  title   = {State-dependent stochastic networks.
             {P}art {I}: Approximations and applications
             with continuous diffusion limits},
  journal = {Annals of Applied Probability},
  volume  = {8},
  number  = {2},
  pages   = {569--646},
  year    = {1998},
}

@book{whitt2002stochastic,
  author    = {Whitt, Ward},
  title     = {Stochastic-Process Limits: An Introduction
               to Stochastic-Process Limits and Their
               Application to Queues},
  series    = {Springer Series in Operations Research},
  publisher = {Springer},
  year      = {2002},
}

\newpage
\appendix

\section{Parameter values}
\label{app:param-values}
 
This appendix lists the parameter values used to produce the released runs. The algorithms that use these parameters are in \S\ref{app:params}.
 
\subsection{Demand-generator coefficients}
\label{app:params:demand}
 
Each item $i$ draws its demand-process coefficients independently from the distributions of Table~\ref{tab:demand-params}. The same distributions are used at both cardinalities $C = 50$ and $C = 200$; with shared seed, the coefficients of the first 50 items at $C = 200$ coincide with those of the entire $C = 50$ release. The per-item unit volume $v_i$ used by the packing routine of \S\ref{app:pack} is drawn from $\mathcal{U}[1.0, 4.0]$ using the same shared seed.
 
\begin{table}[h]
\centering
\caption{Distributions of the demand-generator coefficients of Eq.~\eqref{eq:intensity}. All coefficients are drawn once per run from the distributions below. The notation $\mathcal{U}\{a, \ldots, b\}$ denotes the discrete uniform on $\{a, a+1, \ldots, b\}$.}
\label{tab:demand-params}
\begin{tabular}{lll}
\toprule
Symbol & Role & Distribution \\
\midrule
$\bar{\lambda}_i$ & baseline per-item rate & $\mathcal{U}[80, 250]$ \\
$a^{(y)}_{1,i}$ & yearly 1st-harmonic amplitude & $\mathcal{U}[0.12, 0.28]$ \\
$a^{(y)}_{2,i}$ & yearly 2nd-harmonic amplitude & $\mathcal{U}[0.04, 0.10]$ \\
$a^{(w)}_i$ & weekly amplitude & $\mathcal{U}[0.04, 0.10]$ \\
$\phi_i, \psi_i$ & seasonal phases & $\mathcal{U}[0, 2\pi]$ \\
$\phi^{\mathrm{AR}}_i$ & AR(1) coefficient & $\mathcal{U}[0.9990, 0.9996]$ \\
$\sigma_{\mathrm{AR},i}$ & AR(1) innovation scale & $\mathcal{U}[0.008, 0.018]$ \\
$A_i(0)$ & AR(1) initial value & $\mathcal{N}(0, 0.10^2)$ \\
$r_i$ & per-item burst rate & $\mathcal{U}[2 \cdot 10^{-4}, 10^{-3}]$ \\
$\Delta^P_{i,k}$ & per-item burst duration & $\mathcal{U}\{30, \ldots, 179\}$ \\
$h^P_{i,k}$ & per-item burst height & $\mathcal{U}[0.20, 0.70]$ \\
$N$ & macro-shock event count & $\mathcal{U}\{5, \ldots, 11\}$ \\
$t^G_k$ & macro-shock event start & $\mathcal{U}\{0, \ldots, T-1\}$ \\
$\Delta^G_k$ & macro-shock event duration & $\mathcal{U}\{180, \ldots, 1099\}$ \\
$h^G_k$ & macro-shock event height & $\mathcal{U}[0.20, 0.60]$ \\
$g_i$ & per-item macro-shock sensitivity & $\mathcal{U}[0.4, 1.2]$ \\
\bottomrule
\end{tabular}
\end{table}
 
\subsection{Inventory policy and source lead times}
\label{app:params:inv}
 
The $(s, S)$ levels and source lead-time means (Table~\ref{tab:inventory-params}) are configuration constants of the simulator, set per node and identical across the two releases. For each (node, item) pair the realized value is drawn once at construction as $x = \text{base} + \mathcal{U}[-\text{var}, +\text{var}]$, rounded to the nearest integer; clipping ensures $0 \leq s^{n,i} < S^{n,i}$ and initial inventory lies in $[s^{n,i}, S^{n,i}]$. The lead-time mean $\mu^n$ enters the source $(s, S)$ rule only at $n \in \mathcal{N}_{\mathrm{src}}$; at intermediate nodes the realized arrival time is determined by the routing and packing of sub-steps 5--6 in Algorithm~\ref{alg:day}.
 
\begin{table}[h]
\centering
\caption{Per-node inventory parameters (base $\pm$ var) and source lead-time means $\mu^n$. Drawn once per (node, item) at simulator construction. ``LM'' marks last-mile DCs.}
\label{tab:inventory-params}
\begin{tabular}{lllllc}
\toprule
Node & Tier & Initial inv. & $s$ & $S$ & $\mu^n$ (days) \\
\midrule
SanFrancisco & Source & $4000 \pm 400$ & $400 \pm 60$ & $4000 \pm 400$ & $3 \pm 1$ \\
St.\,Louis & Source & $4000 \pm 400$ & $400 \pm 60$ & $4000 \pm 400$ & $3 \pm 1$ \\
Orlando & Source & $4000 \pm 400$ & $400 \pm 60$ & $4000 \pm 400$ & $3 \pm 1$ \\
Nashville & Hub & $8000 \pm 800$ & $1000 \pm 150$ & $8000 \pm 800$ & $3 \pm 1$ \\
Atlanta & Tier-2 & $6000 \pm 600$ & $500 \pm 80$ & $6000 \pm 600$ & $1$ \\
Chicago & Tier-3 & $5000 \pm 500$ & $1000 \pm 150$ & $5000 \pm 500$ & $8 \pm 1$ \\
Charlotte & Tier-3 & $5000 \pm 500$ & $1000 \pm 150$ & $5000 \pm 500$ & $7 \pm 1$ \\
Memphis & Tier-3 & $3000 \pm 300$ & $500 \pm 80$ & $3000 \pm 300$ & $7 \pm 1$ \\
Columbus & Tier-4 & $4000 \pm 400$ & $500 \pm 80$ & $4000 \pm 400$ & $2$ \\
Richmond & Tier-4 & $4000 \pm 400$ & $500 \pm 80$ & $4000 \pm 400$ & $2$ \\
Philadelphia & Tier-5 (LM) & $3000 \pm 300$ & $500 \pm 80$ & $3000 \pm 300$ & $1$ \\
Baltimore & Tier-5 (LM) & $3000 \pm 300$ & $500 \pm 80$ & $3000 \pm 300$ & $2$ \\
\bottomrule
\end{tabular}
\end{table}
 
\subsection{Edge transport parameters}
\label{app:params:edges}
 
Each edge $e = (u, v)$ has a transit time $\tau_e$ in time units, a per-container volume $V_e$ in the same units, and a per-step container count $K_e$, giving a per-step volume capacity $C_e = K_e V_e$. The two releases share $\tau_e$ and $K_e = 3$ on every edge; $V_e$ scales by a factor of four on the fourteen upstream edges (Table~\ref{tab:edge-params}). The two last-mile edges Philadelphia$\to$NewYork and Baltimore$\to$NewYork are back-solved at runtime; the formula is given in Appendix~\ref{app:params:backsolve}.
 
\begin{table}[h]
\centering
\caption{Per-edge transport parameters at the two cardinalities $C\in\{50, 200\}$. $\tau_e$ in time units, $V_e$ in unit-equivalent volume, $K_e$ per-step container count. Last-mile edges are back-solved (\S\ref{app:params:backsolve}).}
\label{tab:edge-params}
\begin{tabular}{lcccc}
\toprule
Edge & $\tau_e$ & $V_e$ ($C{=}50$) & $V_e$ ($C{=}200$) & $K_e$ \\
\midrule
SanFrancisco $\to$ Nashville & 4 & 5{,}000 & 20{,}000 & 3 \\
St.\,Louis $\to$ Nashville & 2 & 5{,}000 & 20{,}000 & 3 \\
Orlando $\to$ Nashville & 2 & 5{,}000 & 20{,}000 & 3 \\
Nashville $\to$ Atlanta & 1 & 15{,}000 & 60{,}000 & 3 \\
Atlanta $\to$ Chicago & 8 & 4{,}000 & 16{,}000 & 3 \\
Atlanta $\to$ Charlotte & 7 & 4{,}000 & 16{,}000 & 3 \\
Atlanta $\to$ Memphis & 7 & 4{,}000 & 16{,}000 & 3 \\
Chicago $\to$ Columbus & 2 & 4{,}000 & 16{,}000 & 3 \\
Charlotte $\to$ Richmond & 2 & 4{,}000 & 16{,}000 & 3 \\
Columbus $\to$ Philadelphia & 2 & 4{,}000 & 16{,}000 & 3 \\
Richmond $\to$ Philadelphia & 1 & 4{,}000 & 16{,}000 & 3 \\
Richmond $\to$ Baltimore & 3 & 3{,}000 & 12{,}000 & 3 \\
Columbus $\to$ Baltimore & 3 & 3{,}000 & 12{,}000 & 3 \\
Memphis $\to$ Baltimore & 2 & 3{,}000 & 12{,}000 & 3 \\
Philadelphia $\to$ NewYork & 1 & back-solved & back-solved & 3 \\
Baltimore $\to$ NewYork & 2 & back-solved & back-solved & 3 \\
\bottomrule
\end{tabular}
\end{table}
 
\subsection{Last-mile back-solve}
\label{app:params:backsolve}
 
Let $\bar{\lambda}$ denote the realized mean intensity of the run, $\bar{\lambda} := (CT)^{-1} \sum_{i,t} \lambda_{i,t}$, computed after the demand tensor is sampled. The target combined per-step volume for the two last-mile edges, with a buffer and an allowance for wasted packing space, is
\begin{equation}
    V_{\mathrm{raw}} = \frac{C \,\bar{\lambda}\, \bar{v}\, \rho}{\eta}, \qquad \bar{v} = 2.5,\ \rho = 1.20,\ \eta = 0.93,
    \label{eq:back-solve}
\end{equation}
where $\bar{v}$ is the midpoint of the per-item volume distribution $\mathcal{U}[1.0, 4.0]$ (used in place of the realized sample mean to keep the back-solve deterministic in the demand tensor), $\rho$ is a target load factor giving 20\% headroom above the mean, and $\eta$ accounts for greedy first-fit waste. The split between the two edges is fixed at $0.55 : 0.45$, after which each share is divided by $K_e = 3$ and rounded to the nearest 100 to give the integer per-container volumes. The values of $\rho$ and $\eta$ are engineering choices, held fixed across the two releases; varying them is not part of this work.
 
\subsection{Runtime parameters}
\label{app:params:invocation}
 
The two releases were produced by running the simulator script with the parameters of Table~\ref{tab:invocation}. All values are the script's built-in defaults, except for the pipeline multiplier $m$. The default $m = 0$ selects a reactive shipping rule that ships against backlog plus a three-time-unit buffer of smoothed demand; $m = 7$ selects the proactive rule used in this work, which keeps seven time units of smoothed demand in the pipeline at all times.
 
\begin{table}[h]
\centering
\caption{Runtime parameters used to produce the two releases.
The pipeline multiplier $m = 7$ is the only value that
differs from the script's defaults.}
\label{tab:invocation}
\begin{tabular}{ll}
\toprule
Parameter & Value \\
\midrule
Horizon $T$ & $52{,}560$ \\
Seed & 2025 \\
Pipeline multiplier $m$ & 7.0 (default 0.0) \\
Packing & greedy first-fit \\
Streaming output & enabled \\
\bottomrule
\end{tabular}
\end{table}

\section{Dataset schema}
\label{app:schema}

\subsection{Release overview}
\label{app:schema-conventions}

A release contains one directory, \texttt{output\_item50/} or \texttt{output\_item200/}, containing six CSV files, one NumPy array, and one short text sidecar. Both releases use the same schema; they differ only in $C$ and in the row counts of files that scale with $C$.

\textbf{Time index.} The time index $t$ runs over $\{0, \ldots, T-1\}$ with $T = 52{,}560$ in both releases. Each file stores the index as an integer column named \texttt{day}; the column name reflects the resolution of the released runs (one step is one day). The simulator itself accepts any user-chosen step length, and at other resolutions the column carries the same integer index of $t$.

\textbf{Item identifiers.} Items are zero padded strings: \texttt{I01}--\texttt{I50} at $C{=}50$ and \texttt{I001}--\texttt{I200} at $C{=}200$.

\textbf{Sidecar.} The release also contains \texttt{demand\_signals\_cols.txt}, a one-line text file listing the item identifiers in the column order used by \texttt{demand\_signals.npy}.

\begin{figure}[t]
\centering
\includegraphics[width=\linewidth]{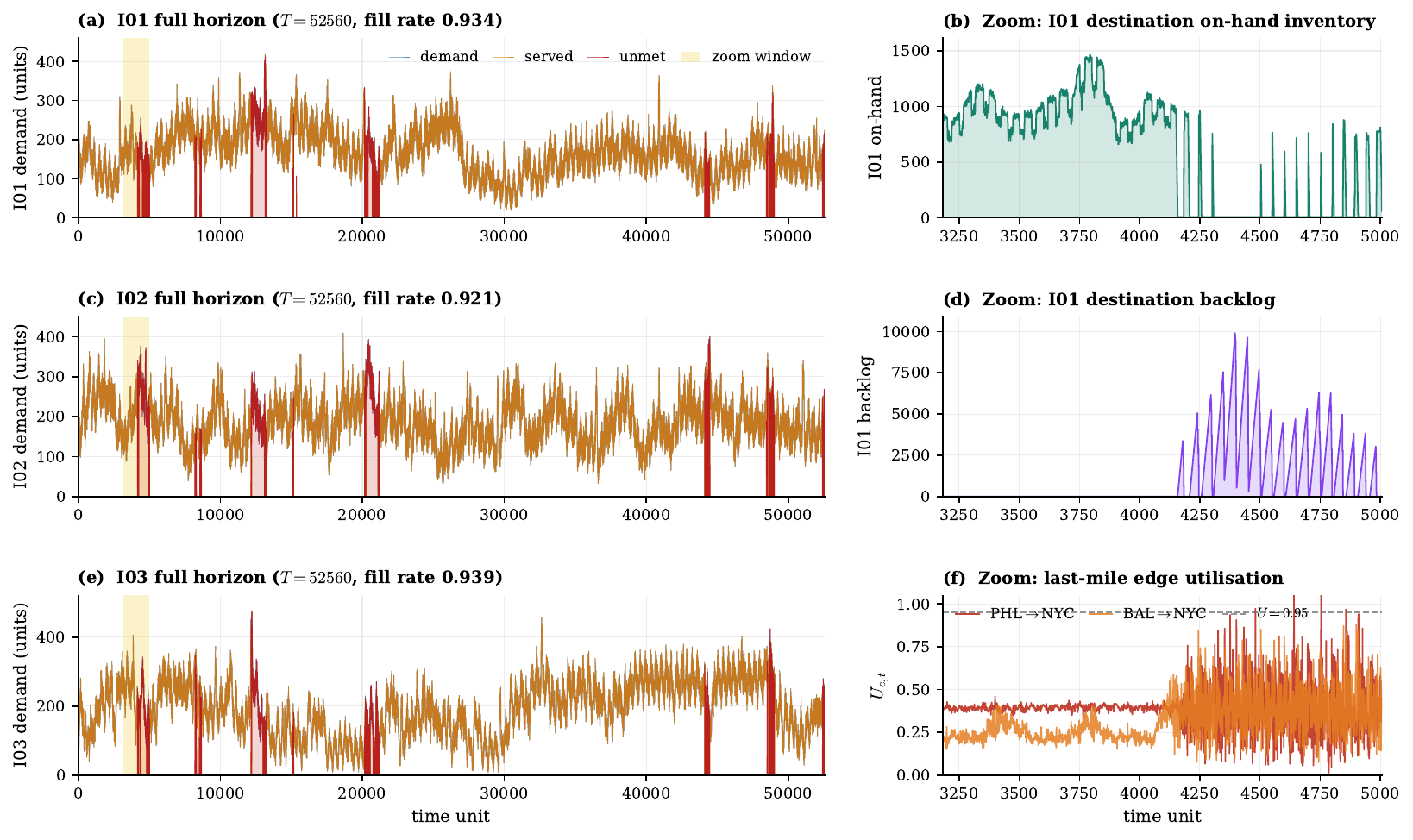}
\caption{Baseline release at $C{=}50$, raw values at each time unit.
\textbf{Left column (a, c, e):} for items I01--I03 over the full
$T{=}52{,}560$ time units horizon, the daily demand (blue), the part
served immediately from inventory at the destination (amber), and
the part left unmet (red, shaded). Per-item fill rates appear in
each panel's title; the yellow band marks the zoom window shown on
the right. \textbf{Right column (b, d, f):} the network's response
for item I01 during the window around the largest demand
spike --- \textbf{(b)} inventory on hand at the destination,
\textbf{(d)} accumulated unmet orders (backlog) at the destination,
and \textbf{(f)} how full the two last mile edges
(Philadelphia$\to$NewYork, Baltimore$\to$NewYork) are each day,
with the $U{=}0.95$ ``saturation'' line marked.}
\label{fig:baseline-overview}
\end{figure}

\subsection{File inventory}
\label{app:schema-inventory}

Table~\ref{tab:schema-overview} lists every file in a release, the columns that uniquely identify a row, and the number of rows or the array shape.

\begin{table}[h]
\centering
\small
\caption{File inventory of one \sysname release. ``Key'' is the
combination of columns that uniquely identifies a row.
``Rows'' is given as a function of the cardinality $C\in\{50,200\}$,
the horizon $T = 52{,}560$, and the node count $|\mathcal{N}|=13$.}
\label{tab:schema-overview}
\begin{tabular}{@{}l l l@{}}
\toprule
File & Key & Rows / shape \\
\midrule
\texttt{daily\_records.csv}        & (day, item)         & $T \cdot C$ \\
\texttt{shipments.csv}             & none (event log)    & variable ($3.59{\times}10^{6}$ at $C{=}50$; $1.35{\times}10^{7}$ at $C{=}200$) \\
\texttt{inventory\_history.csv}    & (day, node, item)   & $T \cdot |\mathcal{N}| \cdot C$ \\
\texttt{backlog\_history.csv}      & (day, node, item)   & $T \cdot |\mathcal{N}| \cdot C$ \\
\texttt{intransit\_history.csv}    & (day, item)         & $T \cdot C$ \quad (destination only) \\
\texttt{service\_summary.csv}      & item                & $C$ \\
\texttt{demand\_signals.npy}       & ---                 & $(T, C)$ \\
\texttt{demand\_signals\_cols.txt} & ---                 & 1 line ($C$ item identifiers) \\
\bottomrule
\end{tabular}
\end{table}

\subsection{Per-file schemas}
\label{app:schema-files}

This subsection walks through each file in the order of Table~\ref{tab:schema-overview}.

\paragraph{\texttt{daily\_records.csv}.}
One row per $(t, i)$ pair, recording per-item demand and service at $d^\star$. The two \texttt{*\_end\_before\_ship} columns are the destination's on-hand and backlog snapshots taken between sub-steps (4) and (5) of Algorithm~\ref{alg:day} --- after demand and service but before any dispatch. So thst $\texttt{served\_from\_stock} + \texttt{new\_backlog\_today} = \texttt{demand}$.

\begin{table}[h]
\centering\small
\caption{Schema of \texttt{daily\_records.csv}.}
\label{tab:schema-daily}
\begin{tabular}{@{}l l p{0.55\linewidth}@{}}
\toprule
Column & Type & Description \\
\midrule
\texttt{day}                              & int   & time index $t$ \\
\texttt{item}                             & str   & item identifier \\
\texttt{demand}                           & int   & realized demand $y_{i,t}$ (Poisson sample, Eq.~\ref{eq:poisson-demand}) \\
\texttt{served\_from\_stock}              & int   & $\min(\mathrm{OH}^{d^\star,i}_t, y_{i,t})$ \\
\texttt{new\_backlog\_today}              & int   & $(y_{i,t} - \mathrm{OH}^{d^\star,i}_t)_+$ \\
\texttt{dest\_on\_hand\_end\_before\_ship}& int   & destination on-hand right after service, before sub-step (5) \\
\texttt{dest\_backlog\_end\_before\_ship} & int   & destination backlog right after service, before sub-step (5) \\
\bottomrule
\end{tabular}
\end{table}

\paragraph{\texttt{shipments.csv}.}
One row per dispatch committed by \textsc{GreedyPack} in sub-steps (5) and (6) of Algorithm~\ref{alg:day}. A single $(t, \text{from}, \text{to}, i)$ combination may appear in multiple rows, since round-robin dispatch can revisit an item within the same step.

\begin{table}[h]
\centering\small
\caption{Schema of \texttt{shipments.csv}.}
\label{tab:schema-ship}
\begin{tabular}{@{}l l p{0.55\linewidth}@{}}
\toprule
Column & Type & Description \\
\midrule
\texttt{day}          & int  & dispatch step $t$ \\
\texttt{arrival\_day} & int  & arrival step at \texttt{to}, $t + \lceil\sum_{e \in P}\tau_e\rceil$ \\
\texttt{from}         & str  & origin node of the dispatch \\
\texttt{to}           & str  & path endpoint \\
\texttt{item}         & str  & item identifier \\
\texttt{units}        & int  & units dispatched on this row \\
\texttt{path\_nodes}  & str  & repr of \texttt{list[str]}: resolved Dijkstra path $P=(v_0,\dots,v_k)$ \\
\texttt{edge\_times}  & str  & repr of \texttt{list[float]}: per-edge transit times $(\tau_{e_1},\dots,\tau_{e_k})$ in time units \\
\bottomrule
\end{tabular}
\end{table}

\paragraph{\texttt{inventory\_history.csv}, \texttt{backlog\_history.csv}, \texttt{intransit\_history.csv}.}
Three long form files record the network state at the end of each step. All three share the key columns (\texttt{day}, \texttt{node}, \texttt{item}) of types (int, str, str) and differ only in the value column they carry (Table~\ref{tab:schema-history}). \texttt{inventory\_history.csv} and \texttt{backlog\_history.csv} cover every node $n \in \mathcal{N}$; \texttt{intransit\_history.csv} restricts to the destination $d^\star$, so its \texttt{node} column is constant. Backlog is zero at every node other than $d^\star$ by construction (\S\ref{app:replenish}) and is recorded as zero for schema regularity. All three values are taken at the end of step $t$, after all seven sub-steps of Algorithm~\ref{alg:day} have run.

\begin{table}[h]
\centering\small
\caption{Value column of each history file. All three share the key
(\texttt{day}, \texttt{node}, \texttt{item}).}
\label{tab:schema-history}
\begin{tabular}{@{}l l l p{0.45\linewidth}@{}}
\toprule
File & Value column & Type & Description \\
\midrule
\texttt{inventory\_history.csv} & \texttt{on\_hand}    & int & on-hand inventory $\mathrm{OH}^{n,i}_t$ \\
\texttt{backlog\_history.csv}   & \texttt{backlog}     & int & backlog $B^{n,i}_t$ (zero unless $n = d^\star$) \\
\texttt{intransit\_history.csv} & \texttt{in\_transit} & int & total units of item $i$ already scheduled to arrive at $d^\star$ on any step after $t$: $\sum_{\tau > t}\mathrm{IT}_t[\tau, i]$ \\
\bottomrule
\end{tabular}
\end{table}

\paragraph{\texttt{service\_summary.csv}.}
One row per item with five columns: \texttt{item} (str), \texttt{total\_demand} (int), \texttt{served\_from\_stock} (int), \texttt{new\_backlog\_added} (int), and \texttt{fill\_rate\_stock\_only} (float). The three integer columns aggregate the corresponding columns of \texttt{daily\_records.csv} over the full horizon; the float column is \texttt{served\_from\_stock} divided by \texttt{total\_demand}, rounded to six decimal places. The file is derived from \texttt{daily\_records.csv} and included for convenience.

\texttt{demand\_signals.npy}. It is a \texttt{float64} array of shape $(T, C)$ holding the intensity values $\lambda_{i,t}$ used to sample the \texttt{demand} column of \texttt{daily\_records.csv} (Eq.~\ref{eq:intensity}); pairing samples with rates lets users study sample-versus-rate behavior without rerunning the simulator. Columns are indexed by item identifier in alphabetical order, listed comma-separated on a single line in \texttt{demand\_signals\_cols.txt}.

\subsection{Scenario sweeps}
\label{app:scenarios}

The baseline release uses the default values of every
scenario knob in Table~\ref{tab:scenario-knobs}. To study
how the network's behavior and the resulting time series
change under different operating conditions, we construct six one-at-a-time sweeps: each varies a single knob across five settings---a shared baseline (all knobs at default) plus four perturbations---while holding all other knobs at baseline,
producing $6 \times 5 = 30$ rollouts on the $C{=}50$
regime.

Table~\ref{tab:scenario-sweeps} lists all six sweeps;
the baseline value is bold in each row. Three sweeps
perturb the demand process and three perturb the
network's response:
\begin{itemize}[leftmargin=1.4em,itemsep=2pt,topsep=2pt]
  \item \emph{Drift}: sets the AR(1) persistence to a
    single shared value $\phi^{\mathrm{AR}}$ across all
    items (Eq.~\ref{eq:drift}); baseline $0.9993$.
  \item \emph{Shock}: multiplies the macro-shock event
    count $N$ and height $h^G$
    (Eq.~\ref{eq:macro-shock}); baseline $(1,1)$.
  \item \emph{Burst}: multiplies the per-item burst rate
    $r_i$ and height $h^P$ (Eq.~\ref{eq:bursts});
    baseline $(1,1)$.
  \item \emph{Edge cap}: multiplies the per-edge container
    count $K_e$; baseline $1$.
    \item \emph{Buffer}: multiplies the per-node reorder
    thresholds $(s, S)$ and initial inventory;
    baseline $1$.
  \item \emph{Lead time}: multiplies the source-node mean
    $\mu^n$; edge transit times $\tau_e$ unchanged;
    baseline $1$.
\end{itemize}

\begin{table}[h]
\centering\small
\caption{The six scenario sweeps. Each sweep varies a
single knob across five settings while all others stay at
baseline (bold). The Type column indicates whether the
sweep changes the demand process or the network's response
to it. Details of each sweep are in the text. }

\label{tab:scenario-sweeps}
\begin{tabular}{@{}l c l l@{}}
\toprule
Sweep      & Notation                            & Type    & Settings {\bf (baseline)} \\
\midrule
Drift      & $\phi^{\mathrm{AR}}$              & demand  & $0.71,\; 0.86,\; 0.96,\; 0.99,\; \mathbf{0.9993}$ \\
\midrule
Shock      & $(N \times h^G)$                  & demand  & $(0,1),\;(0.5,0.7),\;\mathbf{(1,1)},\;(2,2),\;(3,4)$ \\
Burst      & $(r \times h^P)$                  & demand  & $\mathbf{(1,1)},\;(1.5,2),\;(2,3),\;(3,4),\;(5,8)$ \\
\midrule
Edge cap   & $\rho_K$                          & supply  & $0.3,\; 0.6,\; \mathbf{1.0},\; 1.5,\; 2.5$ \\
Buffer     & $\rho_{sS}$                       & supply  & $0.1,\; 0.2,\; 0.5,\; 0.75,\; \mathbf{1.0}$ \\
Lead time  & $\rho_{lt}$                       & supply  & $\mathbf{1.0},\; 2.0,\; 5.0,\; 10.0,\; 20.0$ \\
\bottomrule
\end{tabular}
\end{table}

We select five demand-side scenarios from the six sweeps
of Table~\ref{tab:scenario-sweeps}; three are single-knob
picks and two are compound scenarios that combine
perturbations along multiple axes:
\begin{itemize}[leftmargin=1.4em,itemsep=1pt,topsep=2pt]
\label{fivescenarios}
\item \texttt{shock\_xhi}: $(3, 4)$ from the shock sweep.
\item \texttt{drift\_mid}:
  $\phi^{\mathrm{AR}}{\in}[0.95, 0.97]$ from the drift
  sweep.
\item \texttt{burst\_xhi}: $(3, 4)$ from the burst sweep.
\item \texttt{chaos\_compound}: \texttt{shock\_xhi} with
  $\phi^{\mathrm{AR}}{\in}[0.96, 0.98]$.
\item \texttt{chaos\_burst}: \texttt{burst\_xhi} with
  $\phi^{\mathrm{AR}}{\in}[0.96, 0.98]$.
\end{itemize}
Supply-side sweeps leave the demand series unchanged and
therefore cannot propagate to forecasts of $y_{i,t}$; we
restrict the evaluation to demand-side knobs.
The evaluation protocol matches
\S\ref{sec:foundation-protocol} throughout.

\section{Foundation model evaluation details}
\label{app:hparams}

This appendix collects the implementation details required to
reproduce Table~\ref{tab:foundation}: the formal MASE definition
(\S\ref{app:mase-formal}), foundation-model results at $C{=}200$
(\S\ref{app:hparams:c200}), a Lag-Llama cross-check at $L{=}32$
(\S\ref{app:hparams:lagllama-l32}), per-horizon results for the
five scenarios (\S\ref{app:hparams:scenarios}), checkpoint
identifiers and parameter counts (\S\ref{app:hparams:models}),
and per-model inference hyperparameters
(Table~\ref{tab:hparams-infer}).

\subsection{MASE: formal definition}
\label{app:mase-formal}

We use the official GIFT-Eval evaluation protocol (\texttt{calculate\_seasonal\_error}
and \texttt{mase} from the gluonts evaluation module). For each
rolling-origin test window $w$ with start time $t_w$ and channel $i$,
the \emph{in-context seasonal error} is
\begin{equation*}
    \mathrm{SE}_{i,w}
    \;=\;
    \tfrac{1}{L-m}\sum_{t=t_w-L}^{t_w-1-m}
    \bigl|\,y_{i,\,t+m} - y_{i,\,t}\,\bigr|,
\end{equation*}
where $L$ is the context length and $m{=}\texttt{get\_seasonality(freq)}$
is the GluonTS seasonal period for each native frequency
($m{=}1$ daily, $24$ hourly, $144$ for 10-minute). The model's
point forecast $\hat{y}^{\rm model}_{i,\,t_w+k}$ is the per-step
median over its sample paths. The per-entry MASE at horizon $h$
is the mean MAE over the first $h$ forecast steps divided
by the in-context seasonal error:
\begin{equation*}
    \mathrm{MASE}_{i,w}(h)
    \;=\;
    \frac{\tfrac{1}{h}\sum_{k=0}^{h-1}
        \bigl|\,y_{i,\,t_w+k} - \hat{y}^{\rm model}_{i,\,t_w+k}\,\bigr|}
        {\mathrm{SE}_{i,w}}.
\end{equation*}
The MASE score for each dataset is computed by avaraging over windows and channels:
\begin{equation}
    \mathrm{MASE}(h)
    \;=\;
    \tfrac{1}{|W|\,C}\sum_{w \in W}\sum_{i=1}^{C} \mathrm{MASE}_{i,w}(h).
    \label{eq:gift-mase}
\end{equation}

\subsection{Lag-Llama at $L{=}32$}
\label{app:hparams:lagllama-l32}

Since Lag-Llama was pretrained with a context length of 32, the main experiments ($L=512$) in Table~\ref{tab:foundation} require substantial RoPE extrapolation. To assess whether this disadvantages the model, we repeat the evaluation with $L=32$. The aggregate MASE improves only slightly, from 1.323 to 1.305, primarily due to gains at $h=30$. Thus, while the long-context setting contributes modestly to Lag-Llama’s performance gap, it does not explain its overall underperformance relative to the other models. See Table~\ref{tab:lagllama-l32}.

\begin{table}[h]
\centering
\caption{Lag-Llama on the $C{=}50$ release at $L{=}32$, same
protocol as Table~\ref{tab:foundation} (GIFT-Eval-style MASE,
Eq.~\ref{eq:gift-mase}). The $L{=}512$ row is reproduced from
Table~\ref{tab:foundation} for comparison.}
\label{tab:lagllama-l32}
\small
\begin{tabular}{@{}l c c c c c@{}}
\toprule
Setting    & $h{=}1$ & $h{=}7$ & $h{=}14$ & $h{=}30$ & Agg.\ \\
\midrule
$L{=}512$ (Table~\ref{tab:foundation})  &  $1.035$ & $1.299$ & $1.407$ & $1.619$ & $1.323$ \\
$L{=}32$                                & $1.187$ & $1.274$ & $1.320$ & $1.452$ & $1.305$ \\
\bottomrule
\end{tabular}
\end{table}

\subsection{Per-horizon scenario results}
\label{app:hparams:scenarios}

Table~\ref{tab:scenario-mase-allh} reports the four models on the
six scenario picks at every
horizon $h{\in}\{1,7,14,30\}$. One 
pattern is visible only across horizons: 
the model crossover (Moirai overtaking TimesFM on burst
scenarios) only emerges at $h{=}30$, with TimesFM still best at
$h{\in}\{1,7,14\}$ on every burst column except
\texttt{chaos\_burst} at $h{\in}\{7,14\}$ where Chronos edges
ahead.

\begin{table}[h]
\centering
\caption{Per-horizon GIFT-Eval-style MASE on the six scenario
picks of Table \ref{tab:scenario-sweeps}. Same protocol as
Table~\ref{tab:foundation}. }
\label{tab:scenario-mase-allh}
\small
\begin{tabular}{@{}l c c c c c c@{}}
\toprule
Model & baseline & shock\_xhi & drift\_mid & chaos\_comp. & burst\_xhi & chaos\_burst \\
\midrule
\multicolumn{7}{@{}l}{\emph{$h{=}1$}} \\
\quad Chronos    & 0.777          & 0.800          & 0.763          & 0.792          & 0.819          & 0.825          \\
\quad Moirai     & 0.789          & 0.826          & 0.787          & 0.818          & 0.854          & 0.869          \\
\quad TimesFM    & 0.746 & 0.768 & 0.739 & 0.767 & 0.807 & 0.818 \\
\quad Lag-Llama  & 1.035          & 1.107          & 1.016          & 1.100          & 1.374          & 1.407          \\
\midrule
\multicolumn{7}{@{}l}{\emph{$h{=}7$}} \\
\quad Chronos    & 0.822          & 0.842          & 0.805          & 0.839          & 1.015          & 1.002 \\
\quad Moirai     & 0.835          & 0.864          & 0.832          & 0.864          & 1.042          & 1.041          \\
\quad TimesFM    & 0.790 & 0.808 & 0.782 & 0.809 & 1.010 & 1.009          \\
\quad Lag-Llama  & 1.299          & 1.540          & 1.252          & 1.499          & 2.209          & 2.197          \\
\midrule
\multicolumn{7}{@{}l}{\emph{$h{=}14$}} \\
\quad Chronos    & 0.880          & 0.916          & 0.849          & 0.907          & 1.268          & 1.248 \\
\quad Moirai     & 0.897          & 0.938          & 0.887          & 0.936          & 1.271          & 1.267          \\
\quad TimesFM    & 0.850 & 0.880 & 0.831 & 0.875 & 1.262 & 1.257          \\
\quad Lag-Llama  & 1.407          & 1.708          & 1.347          & 1.654          & 2.605          & 2.579          \\
\midrule
\multicolumn{7}{@{}l}{\emph{$h{=}30$ }} \\
\quad Chronos    & 1.016          & 1.109          & 0.945          & 1.077          & 1.880          & 1.840          \\
\quad Moirai     & 1.038          & 1.131          & 1.004          & 1.118          & 1.790 & 1.773 \\
\quad TimesFM    & 0.987 & 1.063 & 0.932 & 1.042 & 1.836          & 1.806          \\
\quad Lag-Llama  & 1.619          & 2.031          & 1.523          & 1.963          & 3.154          & 3.097          \\
\bottomrule
\end{tabular}
\end{table}

\subsection{Models and inference hyperparameters}
\label{app:hparams:models}

Table~\ref{tab:hparams-models} lists the released checkpoint, parameter
count, and pre-train context range for each model, together with the
context length $L$ used in our evaluation. Table~\ref{tab:hparams-infer} lists the inference-time
hyperparameters: the number of probabilistic sample paths drawn
(reduced to the per-step median for the point forecast), the
batching unit each wrapper operates on, and per-model
configuration knobs.

\begin{table}[h]
\centering
\caption{Foundation models evaluated. Pre-train context is the sequence
length used during published pre-training; $L$ used is the context
length in our zero-shot evaluation (see \S\ref{sec:foundation-protocol}).}
\label{tab:hparams-models}
\small
\begin{tabular}{@{}l l r r r@{}}
\toprule
Model & HuggingFace ID & Params & Pre-train ctx & $L$ used \\
\midrule
Chronos    & \texttt{amazon/chronos-t5-base}                  & 200M & 512        & 512 \\
Moirai     & \texttt{Salesforce/moirai-1.1-R-base}            & 91M  & 512--2048  & 512 \\
TimesFM    & \texttt{google/timesfm-2.0-500m-pytorch}         & 500M & up to 2048 & 512 \\
Lag-Llama  & \texttt{time-series-foundation-models/Lag-Llama} & 50M  & 32         & 512  \\
\bottomrule
\end{tabular}
\end{table}

\begin{table}[h]
\centering
\caption{Inference hyperparameters. \emph{Samples} is the number of
probabilistic sample paths drawn; the per-day median is the point
forecast used to compute MASE. TimesFM uses a deterministic quantile
head and reports no sample count. \emph{Batch} indicates the unit
batched: Chronos and Lag-Llama batch \emph{channels} within a window,
Moirai is multivariate-native and batches \emph{windows}, TimesFM
batches \emph{channels} per call.}
\label{tab:hparams-infer}
\small
\begin{tabular}{@{}l l l l@{}}
\toprule
Model & Samples & Batch & Model-specific \\
\midrule
Chronos    & 20            & channel\_batch$=$16            & bf16 inference \\
Moirai     & 100           & 8 ($C{=}50$) / 1 ($C{=}200$)   & patch\_size$=$32; OOM-bound at $C{=}200$ \\
TimesFM    & ---           & per\_core\_batch$=$32          & horizon\_len$=$128 (decode 1 patch, slice to $H$) \\
Lag-Llama  & 20            & 4                              & linear RoPE rescaling, factor $(L+H)/32$ \\
\bottomrule
\end{tabular}
\end{table}

\section{Additional results}
\label{app:additional}

\subsection{Bullwhip method}
\label{app:bullwhip-method}
For each non-source node $n$ and item $i$, we compute Cachon's
amplification ratio\cite{cachon2007search}:
\begin{equation*}
B_{n,i} = \frac{\mathrm{Var}\bigl(\mathrm{inflow}_{n,i,\cdot}\bigr)}{\mathrm{Var}\bigl(\mathrm{outflow}_{n,i,\cdot}\bigr)},
\end{equation*}
where $\mathrm{inflow}_{n,i,t}$ is the units of item $i$ arriving
at node $n$ at time unit $t$ and $\mathrm{outflow}_{n,i,t}$ is the
units shipped out of $n$ at the same time unit. At the destination,
there is no downstream node; we use the realized customer demand
$y_{i,t}$ as the boundary outflow, so $B_{\text{dest},i}$ measures
how much the inbound replenishment stream amplifies the demand it
is meant to serve. A ratio $B_{n,i} > 1$ indicates the classical
bullwhip signature — variability grows as one moves upstream from
the customer; $B_{n,i} = 1$ is no amplification, and $B_{n,i} < 1$
means $n$ smooths variability. The three source nodes
(\S\ref{app:replenish}) replenish via a magic-stock mechanism
rather than by inflow from the released network, so $B_{n,i}$ is
not defined for them and they are excluded.

Both series are aggregated into 30-time-unit totals — matching
the monthly granularity at which Cachon reports bullwhip
ratios — before computing variance, after dropping the first
365 time units as a warm-up. The per-node value reported in
Table~\ref{tab:bullwhip-validation} is the mean of $B_{n,i}$
over the $C$ items at that node; the per-tier value is the
mean over nodes in the tier.
\subsection{Per-node bullwhip ratios}
\label{app:additional:bullwhip}

Table~\ref{tab:bullwhip-validation} in
\S\ref{sec:datasets-validation} reports tier-level means. For reproducibility, Table~\ref{tab:bullwhip-pernode} below gives
the per-node values that those tier means are computed from. Each entry is the mean of $B_{n,i}$ over the $50$ items at node $n$,
using the procedure described in \S\ref{sec:datasets-validation};
tier names match Table~\ref{tab:inventory-params}.

\begin{table}[pbht]
\centering\small
\caption{Per-node bullwhip ratio $\bar B_n$ on the $C{=}50$
release. Each value is the mean over the $50$ items at that node;
the same numbers averaged within tier appear in
Table~\ref{tab:bullwhip-validation}. The wide spread between
Baltimore and Philadelphia in the daily column reflects
sub-monthly batching of arrivals, which the monthly column
smooths out.}
\label{tab:bullwhip-pernode}
\begin{tabular}{@{}l l c c@{}}
\toprule
Node          & Tier        & Daily $\bar B_n$ & Monthly $\bar B_n$ \\
\midrule
NewYork       & Destination & $9.03$  & $1.43$ \\
\midrule
Baltimore     & Tier-5  & $6.83$  & $1.64$ \\
Philadelphia  & Tier-5  & $19.16$ & $1.49$ \\
\midrule
Columbus      & Tier-4      & $1.39$  & $1.27$ \\
Richmond      & Tier-4      & $1.40$  & $1.34$ \\
\midrule
Charlotte     & Tier-3      & $1.15$  & $1.33$ \\
Chicago       & Tier-3      & $1.07$  & $1.17$ \\
Memphis       & Tier-3      & $1.27$  & $1.21$ \\
\midrule
Atlanta       & Tier-2      & $1.29$  & $1.16$ \\
\midrule
Nashville     & Hub         & $0.97$  & $1.07$ \\
\bottomrule
\end{tabular}
\end{table}

\subsection{Additional results on Forward UQ}
\label{app:uq-additional}

\begin{figure}[t]
\centering
\includegraphics[width=\linewidth]{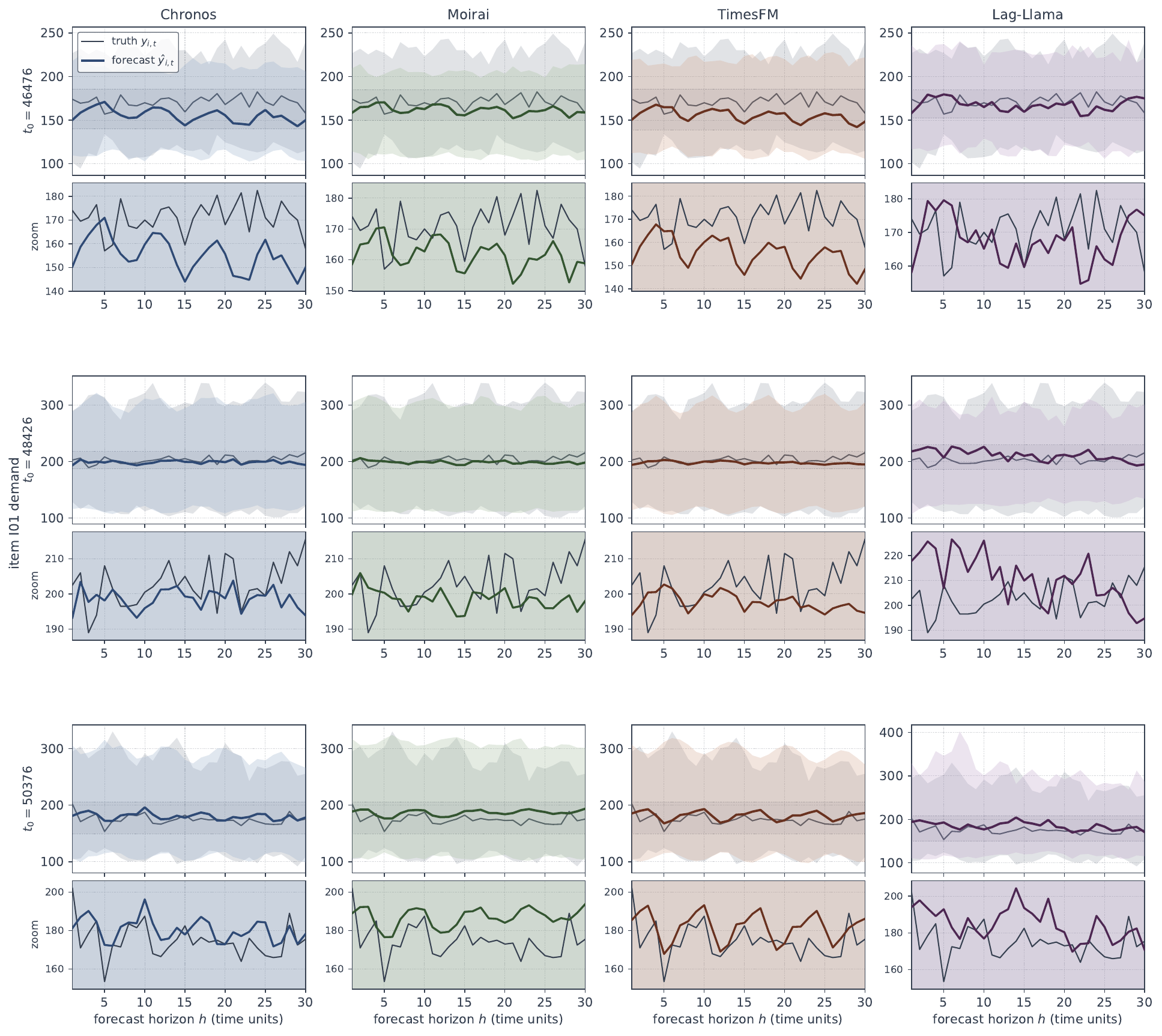}
\caption{Forward UQ forecast envelopes for item I01 at
three forecast windows ($t_0 = 46{,}476$, $48{,}426$,
$50{,}376$) under broader Latin-hypercube sampling
ranges over three demand-side knobs: burst height
$h^P \in [0.5, 2.0]$, macro-shock height
$h^G \in [0.5, 2.0]$, and AR(1) drift coefficient
$\phi^{\mathrm{AR}} \in [0.95, 0.999]$. Each column
corresponds to one foundation model; each row pair
shows the full forecast envelope and a zoom into the
pointwise median region. Other visualization settings
match Figure~\ref{fig:uq-envelope}.}
\label{fig:uq-envelope-multi}
\end{figure}

\section{Computational resources}
\label{app:compute}

\textbf{Time-series generation.} Simulator runs are CPU-only and
single-core. A $C{=}50$ rollout takes roughly 20 minutes;
$C{=}200$ roughly 90 minutes.
The release adds up to 48 newly generated rollouts: the $C{=}50$ and
$C{=}200$ baselines, 26 perturbed named scenarios at $C{=}50$ from
\S\ref{sec:datasets} (the six sweeps contribute four non-baseline
settings each, $4{\times}6$, plus the two compound scenarios
\texttt{chaos\_compound} and \texttt{chaos\_burst}; the per-sweep
baseline is shared with the main $C{=}50$ release rather than
re-simulated), and the 20 LHS UQ perturbations of \S\ref{sec:uq}.
Wall-clock on the order of 20 CPU hours.

\textbf{Foundation-model inference.} All zero-shot inference ran on a
single \textsc{NVIDIA RTX 2080 Ti}, one job per GPU. The pipeline
covers 109 jobs: 29 for the TSF evaluation of \S\ref{sec:foundation}
(4 models $\times$ 2 baselines + 4 models $\times$ 5 demand-side
scenarios + 1 Lag-Llama $L{=}32$ cross-check) and 80 for the
forward-UQ ensemble of \S\ref{sec:uq} (4 models $\times$ 20 LHS
perturbations). Aggregate wall-clock is on the order of 30 GPU hours:
roughly 22 hours for the TSF evaluation (each job runs the full
$|W|{=}262$ rolling-origin protocol of \S\ref{sec:foundation} at
stride 30) and roughly 7 hours for the UQ ensemble (each UQ job
evaluates only the windows used, $|W|{=}50$ at stride 150, $\sim$5 minutes per
job).

\end{document}